\def\eqref#1{equation~\ref{#1}}
\def\1{\bm{1}}
\DeclareMathAlphabet{\mathsfit}{\encodingdefault}{\sfdefault}{m}{sl}
\SetMathAlphabet{\mathsfit}{bold}{\encodingdefault}{\sfdefault}{bx}{n}
\newcommand{\E}{\mathbb{E}}
\newcommand{\R}{\mathbb{R}}
\newcommand{\keys}[1]{\left\{ #1 \right\}}
\newcommand{\brac}[1]{\left( #1 \right) }
\newcommand{\sbrac}[1]{ ( #1 ) }
\newcommand{\sqbr}[1]{\left[ #1 \right]}
\newcommand{\ssqbr}[1]{ [ #1  ]}
\newcommand{\abs}[1]{\left| #1 \right|}
\def \R {\mathbb{R}}
\def \E{\mathbb{E}}
\let\var\Var
\def \MSE{\mathrm{MSE}}
\def \uMSE{\mathrm{uMSE}}
\def \ruMSE{\widetilde{\mathrm{uMSE}}}
\def \err{\mathrm{err}}
\def \SE{\mathrm{SE}}
\def \uSE{\widetilde{\mathrm{uSE}}}
\def \PSNR{\mathrm{PSNR}}
\def \uPSNR{\mathrm{uPSNR}}
\def \ruPSNR{\widetilde{\mathrm{uPSNR}}}
\theoremstyle{plain}
\newtheorem{theorem}{Theorem}[section]
\newtheorem{lemma}[theorem]{Lemma}
\newtheorem{corollary}[theorem]{Corollary}
\theoremstyle{definition}
\newtheorem{definition}[theorem]{Definition}
\theoremstyle{remark}
\icmltitlerunning{Evaluating Unsupervised Denoising Requires Unsupervised Metrics}
\begin{document}

\twocolumn[
    \icmltitle{Evaluating Unsupervised Denoising \\Requires Unsupervised Metrics}
    
    
    
    \icmlsetsymbol{equal}{*}
    
    \begin{icmlauthorlist}
    \icmlauthor{Adrià Marcos Morales}{cds,cfis,vhio}
    \icmlauthor{Matan Leibovich}{courant}
    \icmlauthor{Sreyas Mohan}{cds}
    \icmlauthor{Joshua Lawrence Vincent}{az}
    \icmlauthor{Piyush Haluai}{az}
    \icmlauthor{Mai Tan}{az}
    \icmlauthor{Peter Crozier}{az}
    \icmlauthor{Carlos Fernandez-Granda}{cds,courant}
    \end{icmlauthorlist}
    
    \icmlaffiliation{cds}{Center for Data Science, New York University, New York, NY}
    \icmlaffiliation{vhio}{Radiomics Group, Vall d’Hebron Institute of Oncology, Vall d’Hebron Barcelona Hospital Campus,
Barcelona, Spain}
    \icmlaffiliation{cfis}{Centre de Formació Interdisciplinària Superior,
Universitat Politècnica de Catalunya, Barcelona, Spain}
    \icmlaffiliation{courant}{Courant Institute of Mathematical Sciences, New York University, New York, NY}
    \icmlaffiliation{az}{School for Engineering of Matter, Transport \& Energy, Arizona State University, Tempe, AZ}
    
    \icmlcorrespondingauthor{Adrià Marcos Morales}{adriamm98@gmail.com}
    
    \icmlkeywords{Machine Learning, ICML}
    
    \vskip 0.3in
]



\printAffiliationsAndNotice{} 

\begin{abstract}
Unsupervised denoising is a crucial challenge in real-world imaging applications. Unsupervised deep-learning methods have demonstrated impressive performance on benchmarks based on synthetic noise. However, no metrics exist to evaluate these methods in an unsupervised fashion. This is highly problematic for the many practical applications where ground-truth clean images are not available. In this work, we propose two novel metrics: the unsupervised mean squared error (MSE) and the unsupervised peak signal-to-noise ratio (PSNR), which are computed using only noisy data. We provide a theoretical analysis of these metrics, showing that they are asymptotically consistent estimators of the supervised MSE and PSNR. Controlled numerical experiments with synthetic noise confirm that they provide accurate approximations in practice. We validate our approach on real-world data from two imaging modalities: videos in raw format and transmission electron microscopy. Our results demonstrate that the proposed metrics enable unsupervised evaluation of denoising methods based exclusively on noisy data.
\end{abstract}

\section{Introduction}
\label{sec:introduction}
Image denoising is a fundamental challenge in image and signal processing, as well as a key preprocessing step for computer vision tasks. Convolutional neural networks achieve state-of-the-art performance for this problem, when trained using databases of clean images corrupted with simulated noise~\cite{dncnn}. However, in real-world imaging applications
such as microscopy, noiseless ground truth videos are often not available. This has motivated the development of unsupervised denoising approaches that can be trained using only noisy measurements~\cite{Noise2Noise,Noise2Same,blindspotnet,UDVD,Neighbor2Neighbor}. These methods have demonstrated impressive performance on natural-image benchmarks, essentially on par with the supervised state of the art. However, to the best of our knowledge, \emph{no unsupervised metrics are currently available to evaluate them using only noisy data}. 

Reliance on supervised metrics makes it very challenging to create benchmark datasets using real-world measurements, because obtaining the ground-truth clean images required by these metrics is often either impossible or very constraining. In practice, clean images are typically estimated through temporal averaging, which suppresses dynamic information that is often crucial in scientific applications. Consequently, quantitative evaluation of unsupervised denoising methods is currently almost completely dominated by natural image benchmark datasets with simulated noise~\cite{Noise2Noise,Noise2Same,blindspotnet,UDVD,Neighbor2Neighbor}, which are not always representative of the signal and noise characteristics that arise in real-world imaging applications.

The lack of unsupervised metrics also limits the application of unsupervised denoising techniques in practice. In the absence of quantitative metrics, domain scientists must often rely on visual inspection to evaluate performance on real measurements. This is particularly restrictive for deep-learning approaches, because it makes it impossible to perform systematic hyperparameter optimization and model selection on the data of interest. 

In this work, we propose two novel unsupervised metrics to address these issues: the unsupervised mean-squared error (uMSE) and the unsupervised peak signal-to-noise ratio (uPSNR), which are computed \emph{exclusively from noisy data}. These metrics build upon existing unsupervised denoising methods, which minimize an unsupervised cost function equal to the difference between the denoised estimate and additional noisy copies of the signal of interest~\cite{Noise2Noise}. 
The uMSE is equal to this cost function modified with a correction term, which renders it an unbiased estimator of the supervised MSE.

We provide a theoretical analysis of the uMSE and uPSNR, proving that they are asymptotically consistent estimators of the supervised MSE and PSNR respectively. Controlled experiments on supervised benchmarks, where the true MSE and PSNR can be computed exactly, confirm that the uMSE and uPSNR provide accurate approximations. In addition, we validate the metrics on video data in RAW format, contaminated with real noise that does not follow a known predefined model.

In order to illustrate the potential impact of the proposed metrics on imaging applications where no ground-truth is available, we apply them to transmission-electron-microscopy (TEM) data. Recent advances in direct electron detection systems make it possible for experimentalists to acquire highly time-resolved movies of dynamic events at frame rates in the kilohertz range~\cite{FARUQI2018, ercius2020}, which is critical to advance our understanding of functional materials. Acquisition at such high temporal resolution results in severe degradation by shot noise. We show that unsupervised methods based on deep learning can be effective in removing this noise, and that our proposed metrics can be used to evaluate their performance quantitatively using only noisy data.

To summarize, our contributions are (1) two novel unsupervised metrics presented in Section~\ref{sec:unsupervised_metrics}, (2) a theoretical analysis providing an asymptotic characterization of their statistical properties (Section~\ref{sec:statistical_properties}), (3) experiments showing the accuracy of the metrics in a controlled situation where ground-truth clean images are available (Section~\ref{sec:controlled_evaluation}), (4) validation on real-world videos in RAW format (Section~\ref{sec:raw_videos}), and (5) an application to a real-world electron-microscopy dataset, which illustrates the challenges of unsupervised denoising in scientific imaging (Section~\ref{sec:electron_microscopy}).

Code to reproduce all computational experiments is available at \url{https://github.com/adriamm98/umse}
\begin{figure*}[t]
    \centering
    \begin{tabular}{c c}
         \includegraphics[height=0.12\textheight]{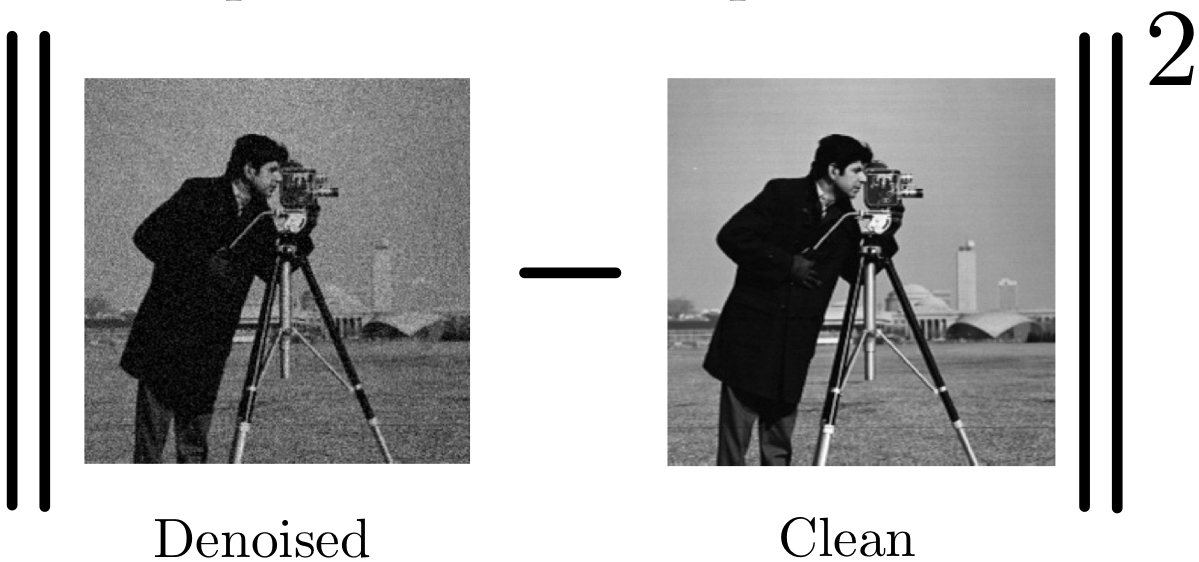}&
         \includegraphics[height=0.12\textheight]{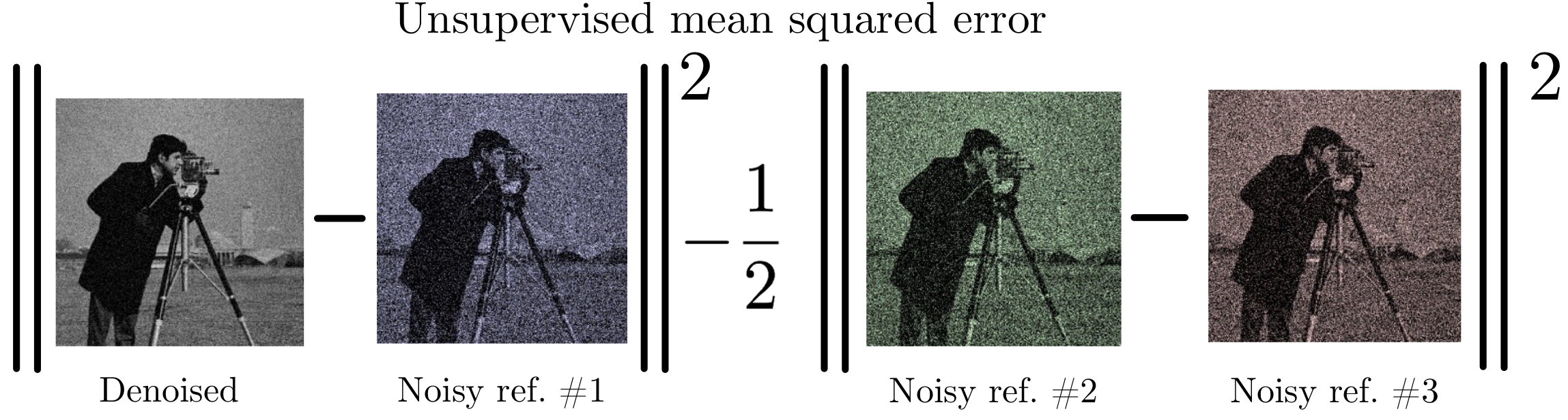}
    \end{tabular}
    \caption{\textbf{MSE vs uMSE.} The traditional supervised mean squared error (MSE) is computed by comparing the denoised estimate to the clean ground truth (left). The proposed unsupervised MSE is computed only from noisy data, via comparison with a noisy reference corresponding to the same ground-truth but corrupted with independent noise (right). A correction term based on two additional noisy references debiases the estimator.}
    \label{fig:uMSE}
\end{figure*}

\section{Background and Related work}
\label{sec:related_work}

\textbf{Unsupervised denoising} The past few years have seen ground-breaking progress in unsupervised denoising, pioneered by Noise2Noise, a technique where a neural network is trained on pairs of noisy images~\cite{Noise2Noise}. Our unsupervised metrics are inspired by Noise2Noise, which optimizes a cost function equal to our proposed unsupervised MSE, but without a correction term (which is not needed for training models). 
Subsequent work focused on performing unsupervised denoising from single images using variations of the \emph{blind-spot} method, where a model is trained to estimate each noisy pixel value using its neighborhood but not the noisy pixel itself (to avoid the trivial identity solution)~\cite{n2v, blindspotnet, N2S,UDVD,Noise2Same}. 
More recently, Neighbor2Neighbor revisited the Noise2Noise method, generating noisy image pairs from a single noisy image via spatial subsampling~\cite{Neighbor2Neighbor}, an insight that can also be leveraged in combination with our proposed metrics, as explained in Section~\ref{sec:spatial_subsampling}. \textcolor{black}{Our contribution with respect to these methods is a novel unsupervised metric that can be used for \emph{evaluation}, as it is designed to be an unbiased and consistent estimator of the MSE.} 


\textcolor{black}{\textbf{Stein's unbiased risk estimator (SURE)} provides an asymptotically unbiased estimator of the MSE for i.i.d. Gaussian noise~\cite{Donoho95a}. This cost function has been used for training unsupervised denoisers~\cite{surebaranuik, surekoeanneurips,zhussip2019extending, gaintuning}. In principle, SURE could be used to compute the MSE for evaluation, but it has certain limitations: (1) a closed form expression of the noise likelihood is required, \emph{including the value of the noise parameters} (for example, this is not known for the real-world datasets in Sections~\ref{sec:raw_videos} and~\ref{sec:electron_microscopy}), (2) computing SURE requires approximating the divergence of a denoiser (usually via Monte Carlo methods~\cite{sureapprox}), which is computationally very expensive. Developing practical unsupervised metrics based on SURE and studying their theoretical properties is an interesting direction for future research.}

\textbf{Existing evaluation approaches} In the literature, quantitative evaluation of unsupervised denoising techniques has mostly relied on images and videos corrupted with synthetic noise~\cite{Noise2Noise,n2v, blindspotnet, N2S,UDVD,Noise2Same}. Recently, a few datasets containing real noisy data have been created~\cite{sidd,dnddataset,polyu,poissongaussian}. Evaluation on these datasets is based on supervised MSE and PSNR computed from estimated \emph{clean} images obtained by averaging multiple noisy frames. Unfortunately, as a result, the metrics cannot capture dynamically-changing features, which are of interest in many applied domains. In addition, unless the signal-to-noise ratio is quite high, it is necessary to average over a large number of frames to approximate the MSE. For example, as explained in Section~\ref{app:uMSEvsMSEavg}, for an image corrupted by additive Gaussian noise with standard deviation $\sigma=15$ we need to average $>1500$ noisy images to achieve the same approximation accuracy as our proposed approach (see Figure~\ref{fig:MSEa}), which only requires 3 noisy images, and can also be computed from a single noisy image. 

\textcolor{black}{\textbf{Noise-Level Estimation}. The correction term in uMSE can be interpreted as an estimate of the noise level, obtained by cancelling out the clean signal. In this sense, it is related to noise-level estimation methods~\cite{liu2013single,lebrun2015multiscale,arias2018video}. However, unlike uMSE, these methods typically assume a parametric model for the noise, and are not used for evaluation.}

\textcolor{black}{\noindent \textbf{No-reference image quality assessment methods} evaluate the perceptual quality of an image~\cite{li2002blind,mittal2012no}, but not whether it is consistent with an underlying ground-truth corresponding to the observed noisy measurements, which is the goal of our proposed metrics. }


\begin{figure*}[t]
{\footnotesize
\begin{center}
\begin{tabular}{>{\arraybackslash}m{0.45\linewidth}  >{\arraybackslash}m{0.35\linewidth}   }
$ $ \hspace{0.1cm} Spatial subsampling & $ $ \hspace{0.5cm} Difference\\
  \includegraphics[width=\linewidth]{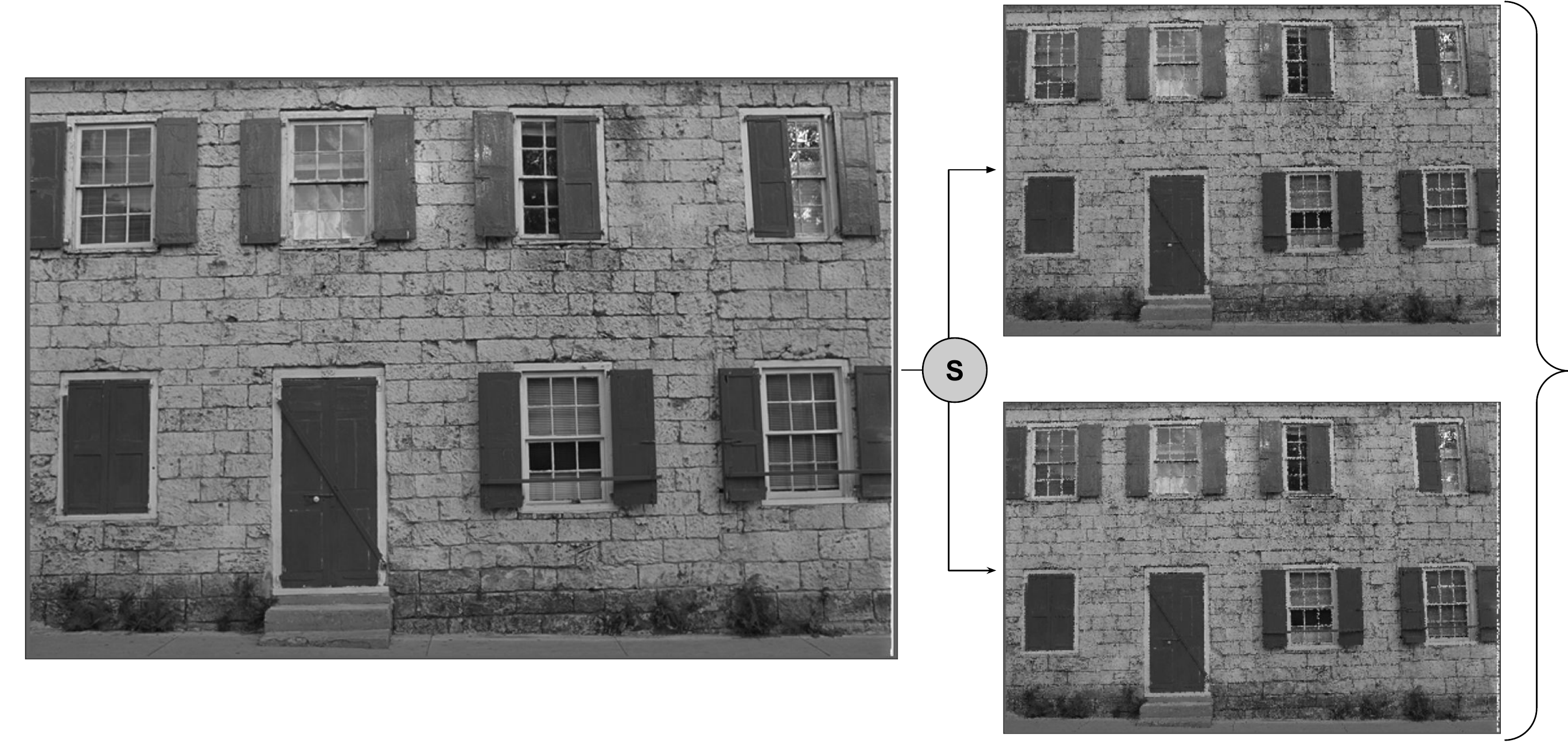} & 
  \includegraphics[width=\linewidth]{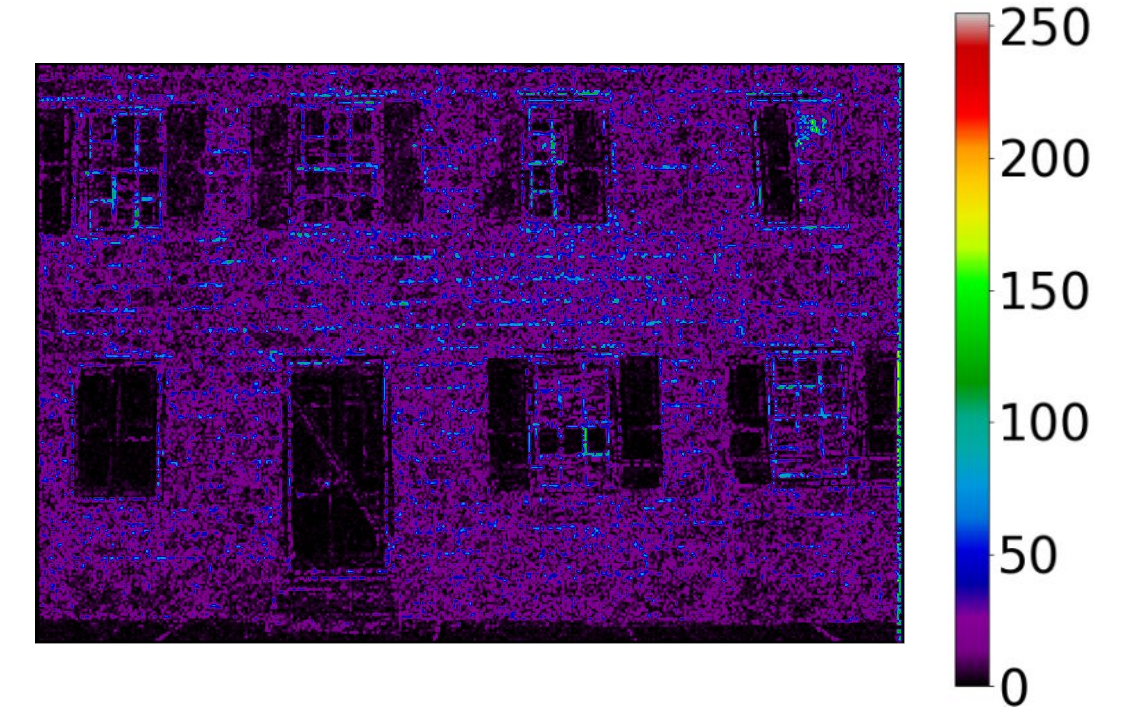}
  \end{tabular}
  \\
  \begin{tabular}{>{\centering\arraybackslash}m{0.5\linewidth} >{\centering\arraybackslash}m{0.3\linewidth}  }
  Consecutive frames & Difference
  \end{tabular}
  \begin{tabular}{>{\centering\arraybackslash}m{0.25\linewidth}  >{\centering\arraybackslash}m{0.25\linewidth} >{\centering\arraybackslash}m{0.3\linewidth}  }
  \includegraphics[width=\linewidth]{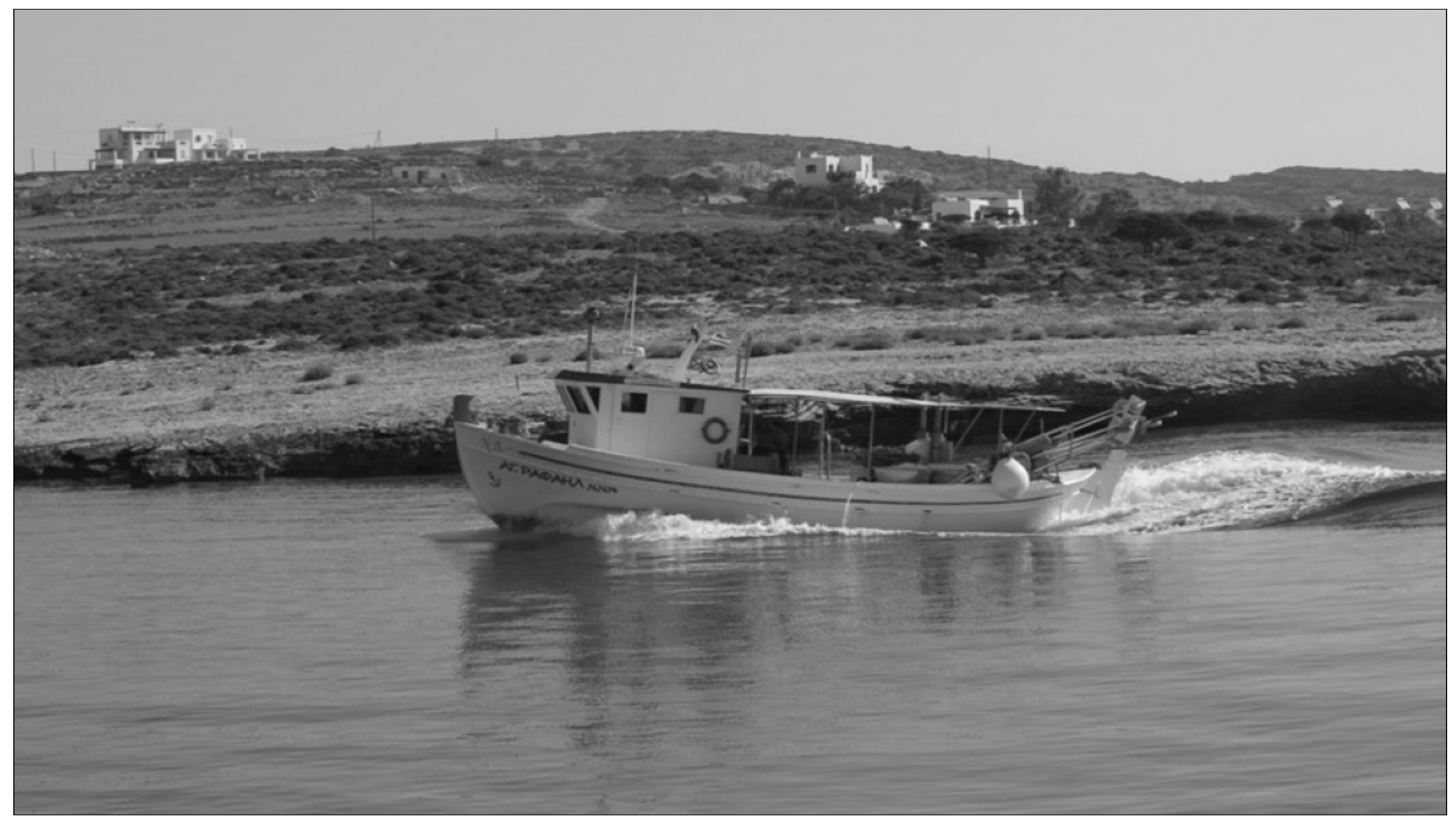} &
  \includegraphics[width=\linewidth]{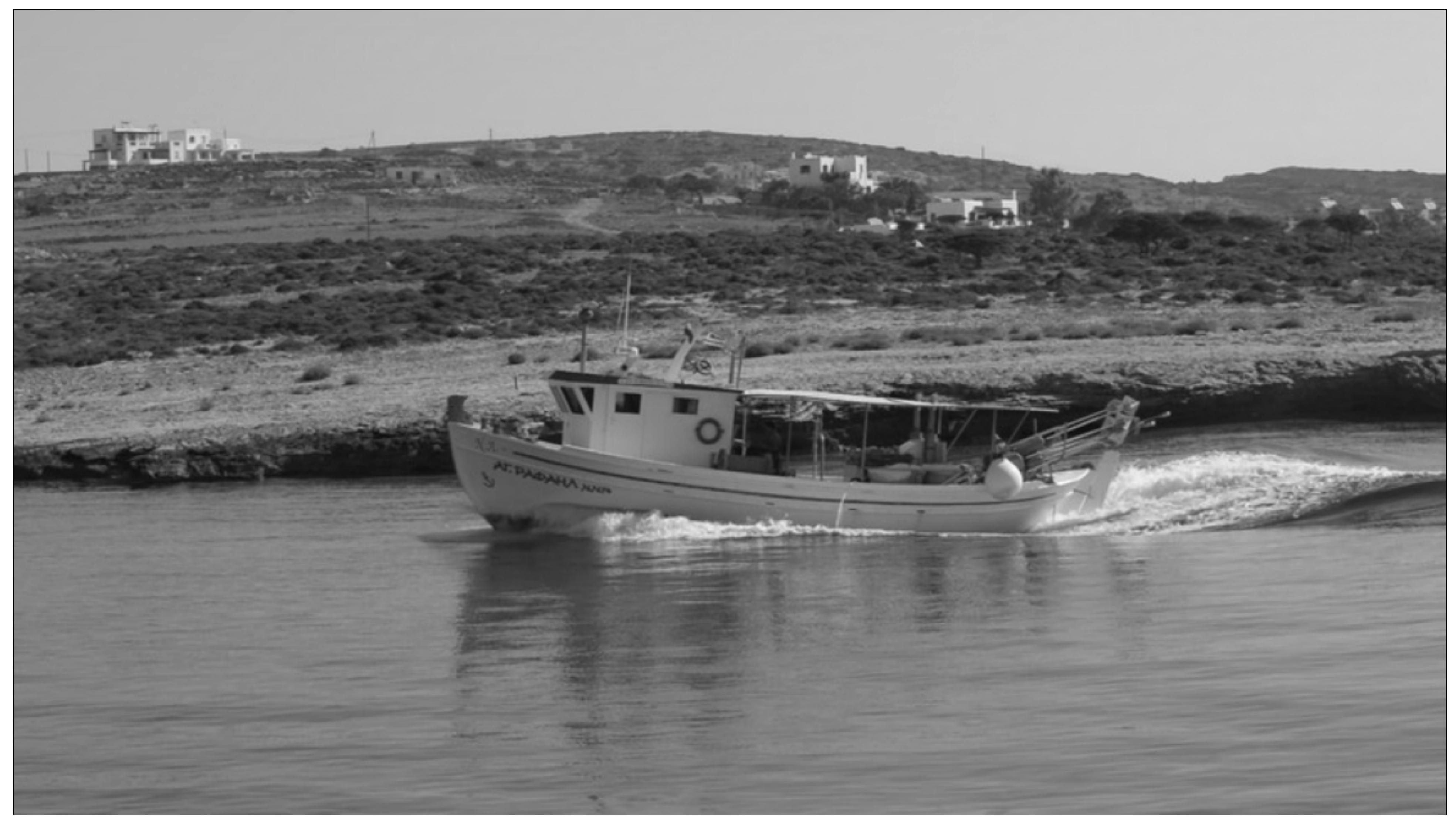} &
  \includegraphics[width=\linewidth]{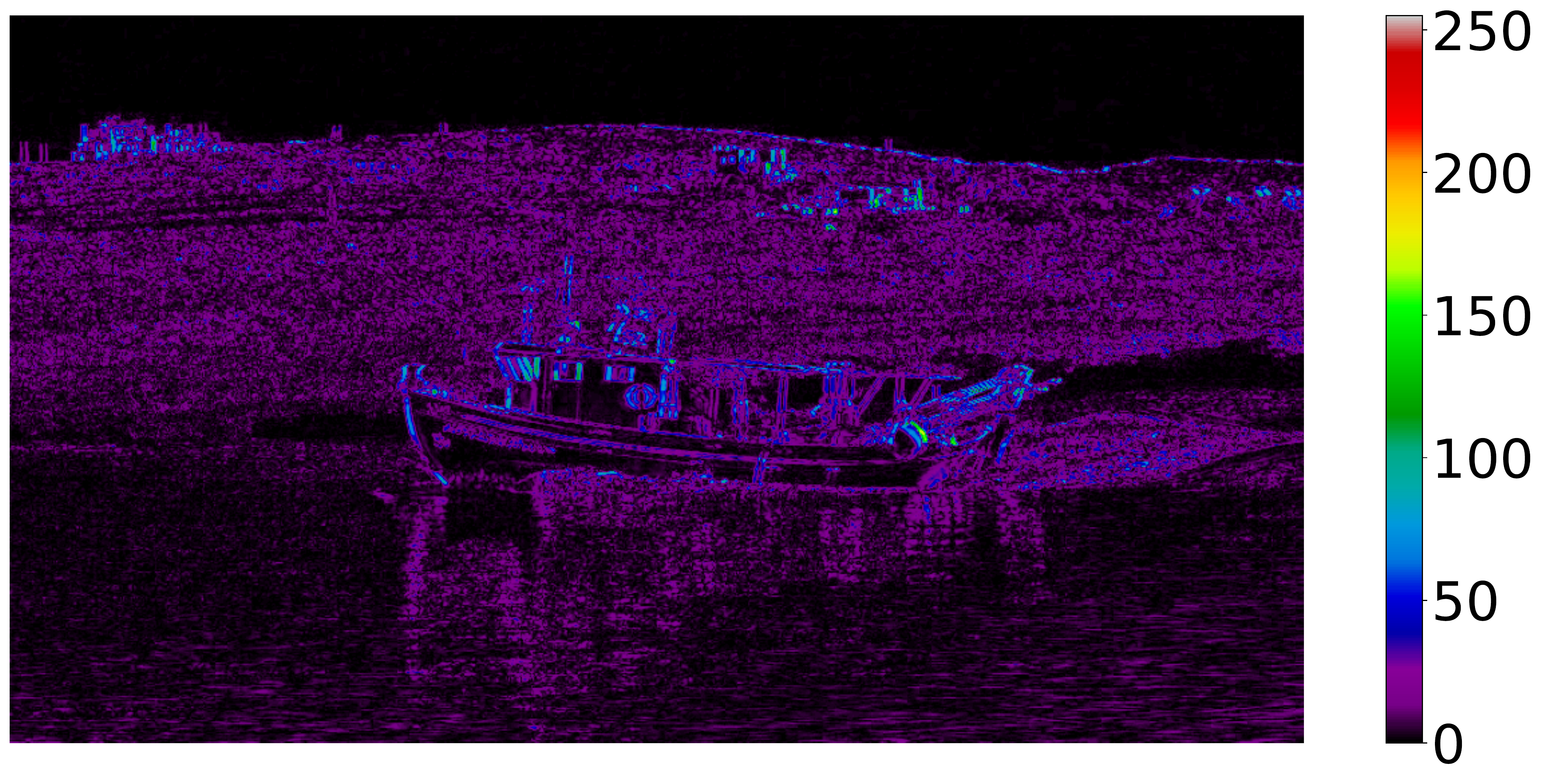}
  \end{tabular}
  \end{center}
  }
  \caption{\textbf{Noisy references.} The proposed metrics require noisy references corresponding to the same clean image corrupted by independent noise. These references can be obtained from a single image via spatial subsampling (above) or from consecutive frames (below). In both cases, there may be small differences in the signal content of the references, shown by the corresponding heatmaps.}
  \label{Fig:noisy_references}
\end{figure*}

\section{Unsupervised Metrics For Unsupervised Denoising} \label{sec:unsupervised_metrics}

\subsection{The Unsupervised Mean Squared Error}
\label{sec:uMSE}
The goal of denoising is to estimate a clean signal from noisy measurements. 
Let $x \in \R^{n}$ be a signal or a set of signals with $n$ total entries. We denote the corresponding noisy data by $y \in \R^{n}$. A denoiser $f:\R^{n}\rightarrow \R^n$ is a function that maps the input $y$ to an estimate of $x$. A common metric to evaluate the quality of a denoiser is the mean squared error between the clean signal and the estimate, 
\begin{align}
\label{eq:MSE}
    \MSE:= \frac{1}{n}\sum_{i=1}^{n}  \brac{x_i - f(y)_i}^2.
\end{align}
Unfortunately, in most real-world scenarios clean ground-truth signals are not available and evaluation can only be carried out in an \emph{unsupervised} fashion, i.e. \emph{exclusively from the noisy measurements}. In this section we propose an unsupervised estimator of MSE inspired by recent advances in unsupervised denoising~\cite{Noise2Noise}. The key idea is to compare the denoised signal to a \emph{noisy reference}, which corresponds to the same clean signal corrupted by independent noise. 

In order to motivate our approach, let us assume that the noise is additive, so that $y := x + z$ for a zero-mean noise vector $z \in \R^n$. Imagine that we have access to a noisy reference $a := x + w$ corresponding to the same underlying signal $x$, but corrupted with a different noise realization $w \in \R^n$ independent from $z$ (Section~\ref{sec:noisy_references} explains how to obtain such references in practice). The mean squared difference between the denoised estimate and the reference is approximately equal to the sum of the MSE and the variance $\sigma^2$ of the noise,
\begin{align}
    &\frac{1}{n}\sum_{i=1}^{n}  \brac{a_i - f(y)_i}^2  = \frac{1}{n}\sum_{i=1}^{n}  \brac{x_i +w_i - f(y)_i}^2 \nonumber \\
    & \approx \frac{1}{n}\sum_{i=1}^{n}\brac{x_i - f(y)_i}^2 + \frac{1}{n}\sum_{i=1}^{n}w_i^2 \approx \MSE + \sigma^2, \label{eq:mse_plus_wi}
\end{align}
because the cross-term $\frac{1}{n}\sum_{i=1}^{n}w_i\brac{x_i  - f(y)_i}^2 $ cancels out if $w_i$ and $y_i$ (and hence $f(y_i)$) are independent (and the mean of the noise is zero).

Approximations to~\eqref{eq:mse_plus_wi} are used by different unsupervised methods to train neural networks for denoising~\cite{Noise2Noise,Noise2Same,blindspotnet,Neighbor2Neighbor}. The noise term $\frac{1}{n}\sum_{i=1}^{n}w_i^2$ in ~\eqref{eq:mse_plus_wi} is not problematic for training denoisers as long as it is independent from the input $y$. However, it is definitely problematic for \emph{evaluating} denoisers, as the additional term would change for different images and datasets, making it impossible to perform quantitative comparisons. In order to address this limitation we propose to modify the cost function to neutralize the noise term. This can be achieved by using two other noisy references $b:=x + v$ and $c:=x+u$, which are noisy measurements corresponding to the clean signal $x$, but corrupted with different, independent noise realizations $v$ and $u$ (just like $a$). Subtracting these references and dividing by two yields an estimate of the noise variance,
\begin{align}
\label{eq:noise_variance}
&\frac{1}{n}\sum_{i=1}^{n} \frac{\left(b_i - c_i\right)^2}{2}  = \frac{1}{n}\sum_{i=1}^{n} \frac{\left(v_i - u_i\right)^2}{2}\nonumber\\
&\approx \frac{1}{2n}\sum_{i=1}^{n} v_i^2 + \frac{1}{2n}\sum_{i=1}^{n} u_i^2 \approx \sigma^2,
\end{align}
which can then be subtracted from~\eqref{eq:mse_plus_wi} to estimate the MSE. This yields our proposed unsupervised metric, which we call unsupervised mean squared error (uMSE), depicted in Figure~\ref{fig:uMSE}.

\begin{definition}[Unsupervised mean squared error]
\label{def:uMSE}
Given a noisy input signal $y \in \R^{n}$ and three noisy references $a$, $b$, $c\in \R^{n}$ the unsupervised mean squared error of a denoiser $f:\R^{n}\rightarrow \R^n$ is 
\begin{align}
\label{eq:umse_def}
\uMSE := \frac{1}{n}\sum_{i=1}^{n} \left(  a_i - f(y)_i\right)^2 - \frac{\left(b_i - c_i\right)^2}{2}.
\end{align}
\end{definition}
Theorem~\ref{theorem:uMSE_consistent} in Section~\ref{sec:statistical_properties} establishes that the uMSE is a consistent estimator of the MSE as long as (1) the noisy input and the noisy references are independent, (2) their means equal the corresponding entries of the ground-truth clean signal, and (3) their higher-order moments are bounded. These conditions are satisfied by most noise models of interest in signal and image processing, such as Poisson shot noise or additive Gaussian noise. In Section~\ref{sec:noisy_references} we address the question of how to obtain the noisy references required to estimate the uMSE. Section~\ref{sec:confidence_intervals} explains how to compute confidence intervals for the uMSE via bootstrapping.

\begin{figure*}[t]
  \centering
  {\footnotesize 20 pixels \hspace{2.75cm}  100 pixels \hspace{2.5cm} 1,000 pixels} \\
  \includegraphics[width=0.32\textwidth]{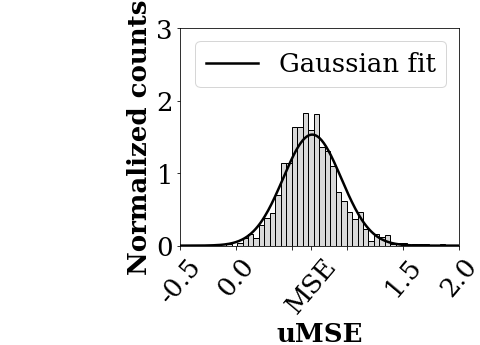} 
  \includegraphics[width=0.32\textwidth]{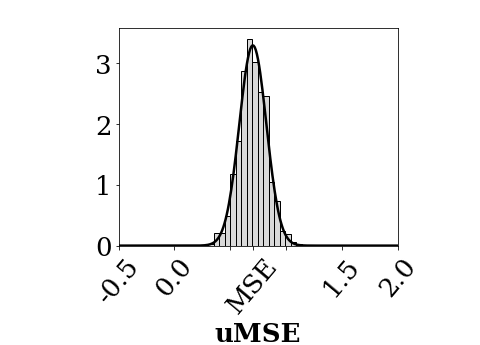} 
  \includegraphics[width=0.32\textwidth]{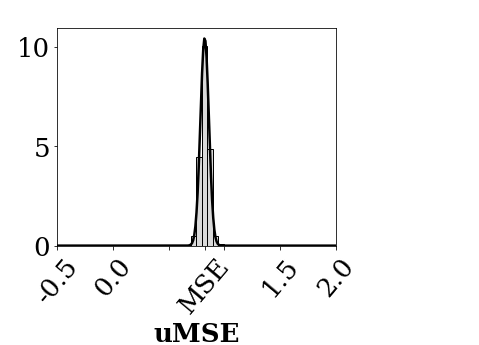}
  \includegraphics[width=1\textwidth]{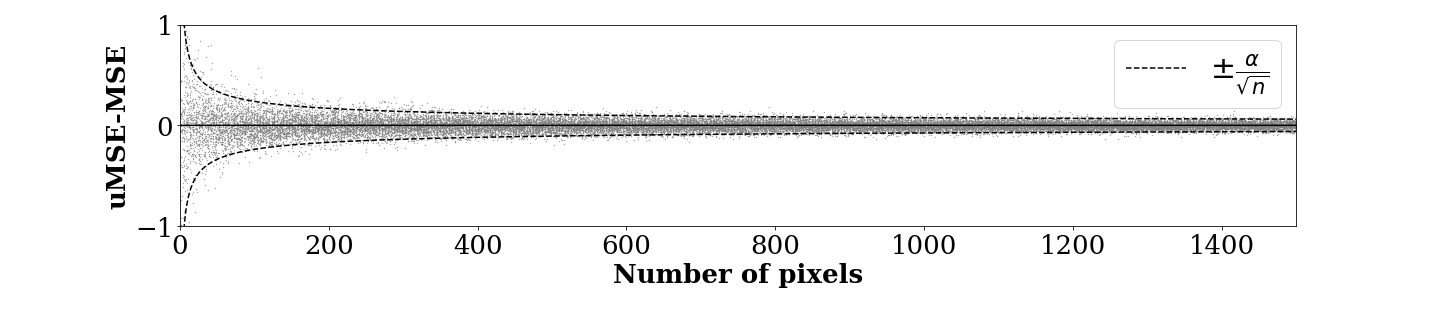} 
  \caption{\textbf{The uMSE is a consistent estimator of the MSE.} The histograms at the top show the distribution of the uMSE computed from $n$ pixels ($n\in\keys{20,100,1000}$) of a natural image corrupted with additive Gaussian noise ($\sigma=55$) and denoised via a deep-learning denoiser (DnCNN). Each point in the histogram corresponds to a different sample of the three noisy references used to compute the uMSE ($\tilde{a}_i$, $\tilde{b}_i$ and $\tilde{c}_i$ in Eq.~\ref{eq:uMSE_rvs} for $1\leq i \leq n$), with the same underlying clean pixels. The distributions are centered at the MSE, showing that the estimator is unbiased (Theorem~\ref{theorem:uMSE_unbiased}), and are well approximated by a Gaussian fit (Theorem~\ref{theorem:uMSE_CLT}).  As the number of pixels $n$ grows, the standard deviation of the uMSE decreases proportionally to $\smash{n^{-1/2}}$, and the uMSE  converges asymptotically to the MSE (Theorem~\ref{theorem:uMSE_consistent}), as depicted in the scatterplot below ($\alpha$ is a constant).
  }
  \label{Fig:consistency}
\end{figure*}

\subsection{The Unsupervised Peak Signal-To-Noise Ratio}
\label{sec:uPSNR}
Peak signal-to-noise ratio (PSNR) is currently the most popular metric to evaluate denoising quality. It is a logarithmic function of MSE defined on a decibel scale, \begin{align}
    \PSNR := 10 \log \brac{\frac{\mathrm{M}^2}{\MSE}},
    \label{eq:PSNR}
\end{align}
where $M$ is a fixed constant representing the maximum possible value of the signal of interest, which is usually set equal to 255 for images. Our definition of uMSE can be naturally extended to yield an unsupervised PSNR (uPSNR). 

\begin{definition}[Unsupervised peak signal-to-noise ratio]
\label{def:uPSNR}
Given a noisy input signal $y \in \R^{n}$ and three noisy references $a$, $b$, $c\in \R^{n}$ the peak signal-to-noise ratio of a denoiser $f:\R^{n}\rightarrow \R^n$ is 
\begin{align}
\uPSNR := 10 \log \brac{\frac{\mathrm{M}^2}{\uMSE}},
\end{align}
where $M$ is the maximum possible value of the signal of interest.
\end{definition}
Corollary~\ref{cor:uPNSR_consistency} establishes that the uPSNR is a consistent estimator of the PSNR, under the same conditions that guarantee consistency of the uMSE. Section~\ref{sec:confidence_intervals} explains how to compute confidence intervals for the uPSNR via bootstrapping.

\subsection{Computing Noisy References In Practice}
\label{sec:noisy_references}
\textcolor{black}{Our proposed metrics rely on the availability of three noisy references, which ideally should correspond to the same clean image contaminated with independent noise. Deviations between the clean signal in each reference violate Condition 2 in Section~\ref{sec:statistical_properties}, and introduce a bias in the metrics. We propose two approaches to compute the references in practice, illustrated in Figure~\ref{Fig:noisy_references}.}


\textcolor{black}{ 
\textbf{Multiple images:} The references can be computed from consecutive frames acquired within a short time interval. This approach is preferable for datasets where the image content does not experience rapid dynamic changes from frame to frame. We apply this approach to the RAW videos in Section~\ref{sec:raw_videos}, where the content is static.}

\textcolor{black}{ 
\textbf{Single image:} The references can be computed from a single image via spatial subsampling, as described in Section~\ref{sec:spatial_subsampling}. Section~\ref{sec:spatial_subsampling_effect} shows that this approach is effective as long as the image content is sufficiently smooth with respect to the pixel resolution. We apply this approach to the electron-microscopy data in Section~\ref{sec:electron_microscopy}, where preserving dynamic content is important. 
}

\section{Statistical Properties of the Proposed Metrics}
\label{sec:statistical_properties}

In this section, we establish that the proposed unsupervised metrics provide a consistent estimate of the MSE and PSNR. In our analysis, the ground truth signal or set of signals is represented as a deterministic vector $x \in \R^{n}$. The corresponding noisy data are also modeled as a deterministic vector $y \in \R^{n}$ that is fed into a denoiser $f:\R^{n}\rightarrow \R^n$ to produce the denoised estimate $f(y)$. The MSE of the estimate is a deterministic quantity equal to
\begin{align}
\MSE  & := \frac{1}{n}\sum_{i=1}^{n}\SE_i, \qquad
\SE_i  := \left( x_i - f(y)_i\right)^2.
\end{align}
\begin{figure*}[t]
{\footnotesize
  \begin{center}
  \begin{tabular}{>{\centering\arraybackslash}m{0.44\linewidth} >{\centering\arraybackslash}m{0.44\linewidth}   }
\textbf{Natural images (Gaussian noise)} & \textbf{Electron microscopy (Poisson noise)}
\end{tabular}\\
  \begin{tabular}{>{\centering\arraybackslash}m{0.22\linewidth}  >{\centering\arraybackslash}m{0.22\linewidth} >{\centering\arraybackslash}m{0.22\linewidth}  >{\centering\arraybackslash}m{0.22\linewidth}   }
&  \scriptsize Spatial subsampling & & \scriptsize Spatial subsampling
\end{tabular} \\
\begin{tabular}{>{\arraybackslash}m{0.22\linewidth}  >{\arraybackslash}m{0.22\linewidth} >{\arraybackslash}m{0.22\linewidth}  >{\arraybackslash}m{0.22\linewidth}   }
  \includegraphics[width=\linewidth]{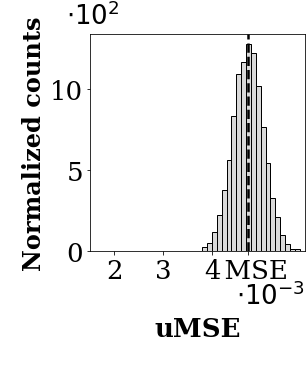} &
  \includegraphics[width=\linewidth]{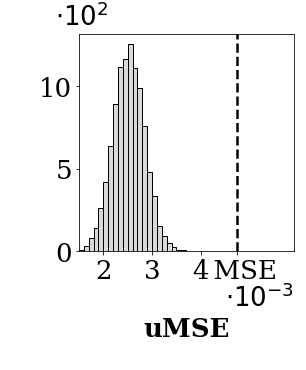} &
  \includegraphics[width=\linewidth]{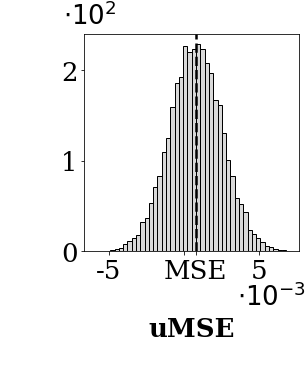} &
  \includegraphics[width=\linewidth]{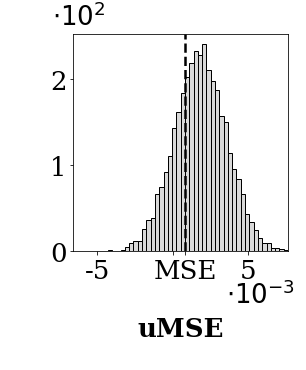}
  \end{tabular}
  \end{center}
  }
  \caption{\textbf{Bias introduced by spatial subsampling.} The histograms show the distribution of the uMSE (computed as in Figure~\ref{Fig:consistency}) corresponding to a natural image and a simulated electron-microscopy image corrupted by Gaussian ($\sigma=55$) and Poisson noise respectively, and denoised with a standard deep-learning denoiser (DnCNN). For each image, the uMSE is computed using noisy references with the same underlying clean image (left), and from noisy references obtained via spatial subsampling (right). For the natural image, spatial subsampling introduces a substantial bias (compare the 1st and 2nd histogram), whereas for the electron-microscopy image the bias is much smaller (compare the 3rd and 4th histogram).
  }
  \label{fig:spatial_subsampling_bias}
\end{figure*}

\textcolor{black}{\textbf{Noise Model.}} The uMSE estimator in Definition~\ref{def:uMSE} depends on three noisy references $\tilde{a}$, $\tilde{b}$, $\tilde{c}$, which we model as random variables.\footnote{In our analysis, all random quantities are marked with a tilde for clarity.} Our analysis assumes that these random variables satisfy two conditions: 

\textbf{Condition 1 (independence):} The entries of $\tilde{a}$, $\tilde{b}$, $\tilde{c}$ are all mutually independent.

\textbf{Condition 2 (centered noise):} The mean of the $i$th entry of $\tilde{a}$, $\tilde{b}$, $\tilde{c}$ equals the corresponding entry of the clean signal, 
    $\E\sqbr{\tilde{a}_i}=\E\ssqbr{\tilde{b}_i}=\E\sqbr{\tilde{c}_i}=x_i$, $1\leq i \leq n$. 
    
Two popular noise models that satisfy these conditions are:
\begin{itemize}[leftmargin=*]
    \item \emph{Additive Gaussian}, where $\tilde{a}_i := x_i + \tilde{w}_i$, $\tilde{b}_i := x_i + \tilde{v}_i$, $\tilde{c}_i := x_i + \tilde{u}_i$, for i.i.d. Gaussian $\tilde{w}_i$, $\tilde{v}_i$, $\tilde{u}_i$.
    \item \emph{Poisson}, where $\tilde{a}_i$, $\tilde{b}_i$, $\tilde{c}_i$ are i.i.d. Poisson random variables with parameter $x_i$.
\end{itemize}

\textcolor{black}{\textbf{Theoretical Guarantees.}} Our goal is to study the statistical properties of the uMSE
\begin{align}
\ruMSE & := \frac{1}{n}\sum_{i=1}^{n} \uSE_i, \nonumber\\
\uSE_i &:= \left( \tilde{a}_i - f(y)_i\right)^2 - \frac{\sbrac{\tilde{b}_i - \tilde{c}_i}^2}{2}.
\label{eq:uMSE_rvs}
\end{align}
As indicated by the tilde, under our modeling assumptions, the uMSE is a random variable. We first show that the correction factor in the definition of uMSE succeeds in debiasing the estimator, so that its mean is equal to the MSE.

\begin{theorem}[The uMSE is unbiased, proof in Section~\ref{proof:uMSE_unbiased}]
\label{theorem:uMSE_unbiased}
If Conditions 1 and 2 hold, the uMSE is an unbiased estimator of the MSE, i.e. 
    $\E[\ruMSE] = \MSE$.
\end{theorem}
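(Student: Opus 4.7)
The plan is to reduce the claim to a per-coordinate calculation by linearity of expectation, so it suffices to show $\E[\uSE_i] = (x_i - f(y)_i)^2$ for every $1 \leq i \leq n$ and then average. The whole argument is a bias computation in which Condition~2 handles the first squared term and Condition~1 handles the cross-term inside the correction.

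First I would expand $(\tilde{a}_i - f(y)_i)^2 = \tilde{a}_i^2 - 2 f(y)_i \tilde{a}_i + f(y)_i^2$. Since $f(y)_i$ is treated as deterministic in this setup and Condition~2 gives $\E[\tilde{a}_i] = x_i$, taking expectations yields $\E[\tilde{a}_i^2] - 2 f(y)_i x_i + f(y)_i^2$. Using $\E[\tilde{a}_i^2] = x_i^2 + \Var(\tilde{a}_i)$, this rearranges cleanly into the target term $(x_i - f(y)_i)^2$ plus a residual $\Var(\tilde{a}_i)$ that we will need to cancel.

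Next I would treat the correction term by expanding $(\tilde{b}_i - \tilde{c}_i)^2 = \tilde{b}_i^2 - 2 \tilde{b}_i \tilde{c}_i + \tilde{c}_i^2$. The only non-routine step is the cross product: Condition~1 ensures $\tilde{b}_i$ and $\tilde{c}_i$ are independent, so $\E[\tilde{b}_i \tilde{c}_i] = \E[\tilde{b}_i]\, \E[\tilde{c}_i] = x_i^2$ by Condition~2. Combined with $\E[\tilde{b}_i^2] = x_i^2 + \Var(\tilde{b}_i)$ and analogously for $\tilde{c}_i$, the $x_i^2$ contributions collapse and we obtain $\E\!\left[(\tilde{b}_i - \tilde{c}_i)^2/2\right] = (\Var(\tilde{b}_i) + \Var(\tilde{c}_i))/2$.

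Subtracting gives $\E[\uSE_i] = (x_i - f(y)_i)^2 + \Var(\tilde{a}_i) - (\Var(\tilde{b}_i) + \Var(\tilde{c}_i))/2$, so the result follows once the three noise variances at each coordinate agree; this holds automatically for the two noise models flagged after the conditions (i.i.d.\ additive Gaussian, and i.i.d.\ Poisson with parameter $x_i$), which is evidently the regime the paper has in mind. Summing over $i$ and dividing by $n$ then delivers $\E[\ruMSE] = \MSE$. The only step that is genuinely not bookkeeping is the cross-term factorization, which is precisely where Condition~1 is used; I expect this to be the conceptual crux, while everything else reduces to expanding squares and invoking $\E[\tilde{a}_i] = \E[\tilde{b}_i] = \E[\tilde{c}_i] = x_i$.
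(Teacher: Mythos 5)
Your proof is correct and follows essentially the same route as the paper's: expand both squares, use Condition~2 to turn the first term into $\SE_i + \Var(\tilde a_i)$, use Condition~1 to factor the cross-term $\E[\tilde b_i \tilde c_i]=x_i^2$, and cancel the variances. You are also right that the final cancellation requires $\Var(\tilde a_i) = \bigl(\Var(\tilde b_i)+\Var(\tilde c_i)\bigr)/2$, which is not literally implied by Conditions~1--2 as stated; the paper assumes equal variances of the three references implicitly in its notation section, and this does hold for both noise models it considers.
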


Theorem~\ref{theorem:uMSE_unbiased} establishes that the distribution of the uMSE is centered at the MSE. We now show that \textcolor{black}{its variance} shrinks at a rate inversely proportional to the number of signal entries $n$, and therefore converges to the MSE in mean square and probability as $n\rightarrow \infty$ (see Figure~\ref{Fig:consistency} for a numerical demonstration). This occurs as long as the higher central moments of noise and the entrywise denoising error are bounded by a constant, which is to be expected in most realistic scenarios.

\begin{table*}
  \caption{\textbf{Controlled comparison of PSNR and uPSNR.} The table shows the PSNR computed from clean ground-truth images, compared to two versions of the proposed estimator: one using noisy references corresponding to the same clean image (uPSNR), and another using a single noisy image combined with spatial subsampling (uPSNR$_{\text{s}}$). The metrics are compared on the datasets and denoising methods described in Section~\ref{sec:controlled_evaluation_details}.}
  \label{TGauss}
  \begin{center}
  \textbf{Natural images (Gaussian noise)}
  {\footnotesize
  \begin{tabular}{>{\arraybackslash}m{0.09\linewidth} |  >{\centering\arraybackslash}m{0.04\linewidth} >{\centering\arraybackslash}m{0.04\linewidth} >{\centering\arraybackslash}m{0.05\linewidth}|  >{\centering\arraybackslash}m{0.04\linewidth} >{\centering\arraybackslash}m{0.04\linewidth} >{\centering\arraybackslash}m{0.05\linewidth}|  >{\centering\arraybackslash}m{0.04\linewidth} >{\centering\arraybackslash}m{0.04\linewidth} >{\centering\arraybackslash}m{0.05\linewidth}|  >{\centering\arraybackslash}m{0.04\linewidth} >{\centering\arraybackslash}m{0.04\linewidth} >{\centering\arraybackslash}m{0.05\linewidth}  }
    \toprule
    &\multicolumn{3}{c}{$\sigma=25$} &\multicolumn{3}{c}{$\sigma=50$} &\multicolumn{3}{c}{$\sigma=75$} &\multicolumn{3}{c}{$\sigma=100$}  \\
    \cmidrule(r){2-4} \cmidrule(r){5-7} \cmidrule(r){8-10} \cmidrule(r){11-13}
    Method & \scriptsize PSNR & \scriptsize uPSNR & \scriptsize uPSNR$_{\text{S}}$ & \scriptsize PSNR & \scriptsize uPSNR & \scriptsize uPSNR$_{\text{S}}$ & \scriptsize PSNR & \scriptsize uPSNR & \scriptsize uPSNR$_{\text{S}}$ & \scriptsize PSNR & \scriptsize uPSNR & \scriptsize uPSNR$_{\text{S}}$\\
    \midrule
  Bilateral & 24.20 & 24.18 & 26.20 & 21.84 & 21.86 & 22.90 & 19.14 & 19.17 & 19.58 & 16.30 & 16.37 & 16.47\\
DenseNet & 26.54 & 26.51 & 27.61 & 23.98 & 24.06 & 26.28 & 22.75 & 23.00 & 24.69 & 21.92 & 21.97 & 23.78\\
DnCNN & 26.19 & 26.21 & 28.14 & 23.95 & 24.02 & 26.08 & 22.72 & 22.75 & 24.59 & 21.84 & 21.84 & 23.71\\
UNet & 27.22 & 27.26 & 25.40 & 24.95 & 24.96 & 23.52 & 23.33 & 23.40 & 22.33 & 22.21 & 22.28 & 21.11\\
BlindSpot & 25.55 & 25.53 & 24.10 & 24.08 & 24.07 & 22.77 & 22.79 & 22.69 & 21.82 & 21.75 & 21.82 & 21.24\\
Neighbor2N. & 25.91 & 25.89 & 24.91 & 24.49 & 24.58 & 23.37 & 22.77 & 22.80 & 21.67 & 21.52 & 21.44 & 20.23\\
Noise2Noise & 27.18 & 27.22 & 25.32 & 24.94 & 24.88 & 23.31 & 23.26 & 23.19 & 21.50 & 22.10 & 22.13 & 20.11\\
Noise2Self & 24.57 & 24.56 & 22.88 & 23.38 & 23.40 & 21.94 & 22.24 & 22.33 & 20.94 & 21.34 & 21.18 & 20.15\\

    \bottomrule
    \end{tabular}} \vspace{0.2cm}\\
    \textbf{Electron microscopy (Poisson noise)} \\
 {\footnotesize     \begin{tabular}{>{\centering\arraybackslash}m{0.04\linewidth} >{\centering\arraybackslash}m{0.055\linewidth} >{\centering\arraybackslash}m{0.065\linewidth}|  >{\centering\arraybackslash}m{0.04\linewidth} >{\centering\arraybackslash}m{0.055\linewidth} >{\centering\arraybackslash}m{0.065\linewidth}|  >{\centering\arraybackslash}m{0.04\linewidth} >{\centering\arraybackslash}m{0.055\linewidth} >{\centering\arraybackslash}m{0.065\linewidth}|  >{\centering\arraybackslash}m{0.04\linewidth} >{\centering\arraybackslash}m{0.055\linewidth} >{\centering\arraybackslash}m{0.065\linewidth}}
    \toprule
    \multicolumn{3}{c}{Bilateral} &\multicolumn{3}{c}{BlindSpot} &\multicolumn{3}{c}{DnCNN} &\multicolumn{3}{c}{UNet}  \\
    \cmidrule(r){1-3} \cmidrule(r){4-6} \cmidrule(r){7-9} \cmidrule(r){10-12}
    PSNR & uPSNR & uPSNR$_{\text{S}}$ & PSNR & uPSNR & uPSNR$_{\text{S}}$ & PSNR & uPSNR & uPSNR$_{\text{S}}$ & PSNR & uPSNR & uPSNR$_{\text{S}}$\\
    \midrule
     
20.18 & 20.20 & 20.21 & 24.86 & 24.87 & 24.74 &	25.74 & 25.68 & 25.86 &	24.65 & 24.69 & 24.79 \\

    \bottomrule
  \end{tabular}
  }
  \end{center}
\end{table*}

\begin{theorem}[The uMSE is consistent, proof in Section~\ref{proof:uMSE_consistent}]
\label{theorem:uMSE_consistent}
Let $\mu_i^{[k]}$ denote the $k$th central moment of $\tilde{a}_i$, $\tilde{b}_i$, $\tilde{c}_i$,
and $\gamma:=\max_{1\leq i \leq n} \abs{x_i - f(y)_i}$ the maximum entrywise denoising error. If Conditions 1 and 2 hold, and there exists a constant $\alpha$ such that
    $\max_{1\leq i \leq n} \max \keys{\mu_i^{[4]},\mu_i^{[3]} \gamma, \gamma^4 } \leq \alpha$,  
then the mean squared error between the MSE and the uMSE satisfies the bound
\begin{align}
    \E\sqbr{ \brac{\ruMSE -  \MSE}^2} = \textcolor{black}{\var\sqbr{\ruMSE}} \leq \frac{ \alpha }{n}.
\end{align}
Consequently, $\lim_{n\rightarrow \infty}\E\ssqbr{ \sbrac{\ruMSE -  \MSE}^2} =0$, so the uMSE converges to the MSE in mean square 
and therefore also in probability.
\end{theorem}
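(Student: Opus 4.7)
The plan is to establish the variance bound via an independent-summands decomposition and then deduce convergence in probability from Chebyshev's inequality. Because Theorem~\ref{theorem:uMSE_unbiased} has already shown $\E[\ruMSE]=\MSE$, the identity $\E[(\ruMSE-\MSE)^2]=\var[\ruMSE]$ is automatic, so everything reduces to upper-bounding $\var[\ruMSE]$.

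The first step is to exploit Condition 1. Each summand $\uSE_i$ depends only on the triple $(\tilde{a}_i,\tilde{b}_i,\tilde{c}_i)$, while $f(y)_i$ is deterministic, so the $\uSE_i$ are mutually independent across $i$. This gives
\[
\var[\ruMSE] \;=\; \frac{1}{n^2}\sum_{i=1}^n \var[\uSE_i],
\]
and the whole theorem reduces to a uniform per-coordinate bound of the form $\var[\uSE_i]\le C\alpha$.

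The second step, which is the real work, is this uniform bound. I would pass to centered coordinates, setting $\tilde{\epsilon}_{a,i}:=\tilde{a}_i-x_i$ (and analogously for $b,c$) and $d_i:=x_i-f(y)_i$, so that by Condition 2 all noise variables are mean-zero. Expanding yields $\uSE_i = d_i^2 + \tilde{\epsilon}_{a,i}^2 + 2d_i\tilde{\epsilon}_{a,i} - \tfrac12(\tilde{\epsilon}_{b,i}-\tilde{\epsilon}_{c,i})^2$. Since the $a$-part is independent of the $(b,c)$-part, the variance splits cleanly; expanding each half produces terms whose size is controlled by $\mu_i^{[4]}-(\mu_i^{[2]})^2$, $d_i^2\mu_i^{[2]}$, and $d_i\,\mu_i^{[3]}$ (the last coming from $\Cov[\tilde{\epsilon}_{a,i}^2,\tilde{\epsilon}_{a,i}]$). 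These are exactly what the three hypotheses control: $\mu_i^{[4]}\le\alpha$ directly; $|d_i\mu_i^{[3]}|\le\gamma|\mu_i^{[3]}|\le\alpha$; and $d_i^2\mu_i^{[2]}\le\gamma^2\sqrt{\mu_i^{[4]}}\le\sqrt{\gamma^4}\sqrt{\alpha}\le\alpha$ using the Jensen-style bound $\mu_i^{[2]}\le\sqrt{\mu_i^{[4]}}$. Summing the pieces and absorbing a numerical constant into $\alpha$ yields $\var[\uSE_i]\le\alpha$, hence $\var[\ruMSE]\le\alpha/n$. Mean-square convergence is then immediate, and convergence in probability follows from Chebyshev.

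The main obstacle is the variance expansion of $\uSE_i$: the odd cross-covariance $4 d_i\,\Cov[\tilde{\epsilon}_{a,i}^2,\tilde{\epsilon}_{a,i}]=4 d_i\mu_i^{[3]}$ has an uncontrolled sign, which is precisely why the hypothesis mixes even and odd moments through the product $\mu_i^{[3]}\gamma$ rather than assuming only a bound on even moments. Verifying that the three listed quantities are jointly sufficient to absorb every term, including the hybrid $d_i^2\mu_i^{[2]}$ which is not listed explicitly but is controllable via Jensen, is the delicate ingredient.
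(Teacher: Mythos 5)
Your proposal is correct and follows essentially the same route as the paper: independence of the per-pixel terms reduces everything to a uniform bound on $\var[\uSE_i]$, which is then obtained by centering, expanding, and controlling the mixed terms via the Jensen-type inequality $(\mu_i^{[2]})^2\le\mu_i^{[4]}$. The only cosmetic difference is that you compute the variance of each half exactly (isolating the covariance term $4d_i\mu_i^{[3]}$) while the paper simply bounds variance by the second moment before expanding; both versions, like the paper's own proof, end up with a numerical constant multiplying $\alpha$ that must be absorbed to match the stated bound $\alpha/n$.
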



Consistency of the uMSE implies consistency of the uPSNR.
\begin{corollary}[The uPSNR is consistent, proof in Section~\ref{proof:uPSNR_consistent}]
\label{cor:uPNSR_consistency}
Under the assumptions of Theorem~\ref{theorem:uMSE_consistent}, the uPSNR defined as
\begin{align}
\ruPSNR := 10 \log \brac{\frac{\mathrm{M}^2}{\ruMSE}},
\end{align}
where $M$ is a fixed constant, converges in probability to the PSNR, as $n\rightarrow \infty$.
\end{corollary}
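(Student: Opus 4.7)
My plan is to derive the corollary directly from Theorem~\ref{theorem:uMSE_consistent} by applying the continuous mapping theorem to the function $g:(0,\infty)\to \R$ defined by $g(t) := 10\log\brac{M^2/t}$. Note that $\ruPSNR = g(\ruMSE)$ and $\PSNR = g(\MSE)$, so since Theorem~\ref{theorem:uMSE_consistent} gives $\ruMSE \to \MSE$ in probability, it suffices to check that $g$ is continuous at $\MSE$ and that the argument is almost-surely in the domain of $g$ in an asymptotic sense.

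Concretely, I would carry out the argument in three steps. First, assume the non-degenerate case $\MSE > 0$ (the case $\MSE = 0$ corresponds to a perfect denoiser and infinite PSNR, so it is excluded implicitly by the finiteness of $\PSNR$). Since $g$ is continuous at $\MSE$, for any $\epsilon > 0$ there exists $\delta \in (0, \MSE)$ such that $t > 0$ and $\abs{t - \MSE} < \delta$ imply $\abs{g(t) - g(\MSE)} < \epsilon$. Second, on the event $\keys{\abs{\ruMSE - \MSE} < \delta}$, the choice $\delta < \MSE$ forces $\ruMSE > 0$, so $g(\ruMSE)$ is well-defined and $\abs{\ruPSNR - \PSNR} < \epsilon$. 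Third, Theorem~\ref{theorem:uMSE_consistent} combined with Markov's inequality applied to the mean-square bound $\E\ssqbr{(\ruMSE - \MSE)^2} \leq \alpha/n$ gives $\P\sbrac{\abs{\ruMSE - \MSE} \geq \delta} \leq \alpha/(n\delta^2) \to 0$, and therefore $\P\sbrac{\abs{\ruPSNR - \PSNR} \geq \epsilon} \to 0$.

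The main obstacle is not the continuity of $g$ itself, which is standard, but the fact that $\ruMSE$ is a signed random variable (because of the subtracted correction term $(\tilde{b}_i - \tilde{c}_i)^2/2$), so it may take non-positive values for which $\log(\ruMSE)$ is undefined. Choosing $\delta < \MSE$ as above is exactly what rules this out on the high-probability event, converting a potentially ill-defined quantity into a well-defined one with probability tending to one. Beyond this measurability subtlety, the rest of the proof is a direct invocation of the continuous mapping theorem and requires no additional moment assumptions beyond those already in Theorem~\ref{theorem:uMSE_consistent}.
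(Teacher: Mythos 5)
Your proof is correct and takes essentially the same route as the paper, which simply applies the continuous mapping theorem to the map $t \mapsto 10\log\brac{M^2/t}$ together with the convergence in probability established in Theorem~\ref{theorem:uMSE_consistent}. Your extra step of restricting to the event $\{|\ruMSE - \MSE| < \delta\}$ with $\delta < \MSE$, so that the logarithm is only ever applied to a positive argument, addresses a point the paper's one-line proof leaves implicit (the uMSE can be non-positive because of the subtracted correction term), and is a refinement rather than a departure.
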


The uMSE converges to a Gaussian random variable asymptotically as $n\rightarrow \infty$. 
\begin{theorem}[The uMSE is asymptotically normally distributed, proof in Section~\ref{proof:uMSE_CLT}]
\label{theorem:uMSE_CLT}
If the first six central moments of $\tilde{a}_i$, $\tilde{b}_i$, $\tilde{c}_i$ and the maximum entrywise denoising error $\max_{1\leq i \leq n} \abs{x_i - f(y)_i}$ 
are bounded, and Conditions 1 and 2 hold, the uMSE is asymptotically normally distributed as $n\rightarrow \infty$.
\end{theorem}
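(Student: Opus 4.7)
Since $\ruMSE = \tfrac{1}{n}\sum_{i=1}^{n}\uSE_i$ and, by Condition 1, the triples $(\tilde{a}_i,\tilde{b}_i,\tilde{c}_i)$ are mutually independent across $i$, the summands $\uSE_i$ form an independent (though generally not identically distributed) sequence. The plan is to invoke Lyapunov's central limit theorem with exponent $\delta=1$, so that a uniform bound on the third absolute central moment of each $\uSE_i$ suffices.

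First I would bound $\E\sqbr{|\uSE_i - \E[\uSE_i]|^3}$ uniformly in $i$. Writing $\tilde{w}_i := \tilde{a}_i - x_i$, $\tilde{v}_i := \tilde{b}_i - x_i$, $\tilde{u}_i := \tilde{c}_i - x_i$, which are centered by Condition 2 and mutually independent, the centered summand decomposes as $A_i + B_i + C_i$, where $A_i := 2(x_i - f(y)_i)\tilde{w}_i$, $B_i := \tilde{w}_i^2 - \mu_i^{[2]}$, and $C_i := -\tfrac{1}{2}\bigl((\tilde{v}_i-\tilde{u}_i)^2 - 2\mu_i^{[2]}\bigr)$. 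Each term is controlled in $L^3$: $\E[|A_i|^3]$ is bounded by $8\gamma^3$ times the third absolute moment of $\tilde{w}_i$, and $\E[|B_i|^3], \E[|C_i|^3]$ expand into polynomials in central moments of order up to six. By hypothesis and the triangle inequality in $L^3$, there exists a constant $K$, independent of $n$, such that $\E[|\uSE_i - \E[\uSE_i]|^3] \le K$ for every $i$, hence $\sum_{i=1}^{n}\E[|\uSE_i - \E[\uSE_i]|^3] \le Kn$.

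Second I would handle the normalization $s_n^2 := \sum_{i=1}^{n}\var[\uSE_i]$. Under the mild non-degeneracy condition implicit in any CLT statement---that the per-entry noise variances are bounded below by a positive constant, so that the limiting distribution is genuinely Gaussian rather than a point mass---each $\var[\uSE_i]$ is itself bounded below, yielding $s_n^2 \ge c n$ for some $c>0$. The Lyapunov ratio then satisfies
\[ \frac{1}{s_n^{3}}\sum_{i=1}^{n}\E\sqbr{|\uSE_i - \E[\uSE_i]|^3} \;\le\; \frac{Kn}{(cn)^{3/2}} \;=\; O(n^{-1/2}) \;\longrightarrow\; 0, \]
so Lyapunov's theorem gives $(n\,\ruMSE - n\,\MSE)/s_n \xrightarrow{d} \mathcal{N}(0,1)$, i.e.\ the uMSE is asymptotically Gaussian.

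The principal obstacle is the combinatorial bookkeeping in bounding $\E[|A_i + B_i + C_i|^3]$: one must expand the cube, exploit the independence of $\tilde{w}_i$ from $(\tilde{v}_i,\tilde{u}_i)$ to eliminate cross terms whose expectation vanishes, and apply H\"older's inequality to control the surviving mixed moments by the hypothesized sixth-order quantities together with the uniform bound $\gamma$ on the denoising error. Once this uniform third-moment bound is established and the non-degeneracy lower bound on $s_n$ is invoked, Lyapunov's CLT applies mechanically to deliver the claim.
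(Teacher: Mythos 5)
Your proposal follows essentially the same route as the paper's proof: both verify Lyapunov's condition with third absolute moments, bounding $\sum_{i}\E\!\left[|\uSE_i - \SE_i|^3\right]$ via the bounded sixth central moments and the bounded denoising error (you organize the expansion through an $L^3$ triangle-inequality decomposition into $A_i+B_i+C_i$, while the paper expands the cube directly term by term), and both lower-bound $s_n^2$ linearly in $n$ so that the Lyapunov ratio is $O(n^{-1/2})$. Your explicit remark that a non-degeneracy lower bound on the per-entry noise variances is needed is well taken---the paper derives $\var[\uSE_i]\geq \tfrac12\big(\mu_i^{[4]}+\sigma_i^4\big)$ and likewise implicitly assumes this quantity is bounded away from zero, even though the theorem statement does not say so.
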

Our numerical experiments show that the distribution of the uMSE is well approximated as Gaussian even for relatively small values of $n$ (see Figure~\ref{Fig:consistency}). This can be exploited to build confidence intervals for the uMSE and uPSNR, as explained in Section~\ref{sec:confidence_intervals}.

\begin{table*}[t]
  \caption{\textbf{Comparison of averaging-based PSNR and uPSNR on RAW videos with real noise.} The proposed uPSNR metric, computed using three noisy references, is very similar to an averaging-based PSNR estimate computed from 10 noisy references. The metrics are compared on the datasets and denoising methods described in Section~\ref{sec:raw_videos_details}.}
  \label{table:raw_videos}
  \begin{center}
  {\small
  \begin{tabular}{l|cc|cc|cc|cc}
    \toprule
    &\multicolumn{2}{c}{Image (Wavelet)} &\multicolumn{2}{c}{Image (CNN)} &\multicolumn{2}{c}{Video (Temp. Avg)}
    &\multicolumn{2}{c}{Video (CNN)}\\
    \cmidrule(r){2-3} \cmidrule(r){4-5} \cmidrule(r){6-7} \cmidrule(r){8-9}
    ISO & $\text{PSNR}_{\text{avg}}$ & uPSNR & $\text{PSNR}_{\text{avg}}$ & uPSNR & $\text{PSNR}_{\text{avg}}$ & uPSNR &$\text{PSNR}_{\text{avg}}$ & uPSNR\\
    \midrule
     
1600 & $37.56$ & $37.76$ & $46.88$ & $48.05$ & $34.32$ & $34.36$ & $48.06$ & $49.51$ \\
3200 & $35.52$ & $35.55$ & $44.91$ & $45.51$ & $32.48$ & $32.47$ & $46.45$ & $47.33$ \\
6400 & $32.60$ & $32.68$ & $42.74$ & $43.05$ & $30.77$ & $30.75$ & $44.75$ & $45.16$ \\
12800 & $28.43$ & $28.46$ & $40.22$ & $39.75$ & $27.71$ & $27.76$ & $42.22$ & $41.69$ \\
25600 & $26.79$ & $26.9$ & $40.19$ & $38.78$ & $27.08$ & $27.12$ & $42.13$ & $40.32$ \\
\midrule
Mean & $32.18$ & $32.27$ & $42.99$ & $43.03$ & $30.47$ & $30.49$ & $44.72$ & $44.80$ \\
    \bottomrule
  \end{tabular}}
  
  \end{center}
\end{table*}

\section{Controlled Evaluation of the Proposed Metrics}
\label{sec:controlled_evaluation}
In this section, we study the properties of the uMSE and \mbox{uPSNR} through numerical experiments in a controlled scenario where the ground-truth clean images are known. We use a dataset of natural images~\citep{MartinFTM01, set12, k23} corrupted with additive Gaussian noise with $\sigma\in[25, 50, 75, 100]$, and a dataset of simulated electron-microscopy images~\cite{nanoparticle} corrupted with Poisson noise. 
For the two datasets, we compute the supervised MSE and PSNR using the ground-truth clean image. To compute the uMSE and uPSNR we use noisy references corresponding to the same clean image corrupted with independent noise. We also compute the uMSE and uPSNR using noisy references obtained from a single noisy image via spatial subsampling, as described in Section~\ref{sec:spatial_subsampling}, which we denote by uMSE$_{\text{S}}$ and uPSNR$_{\text{S}}$ respectively.\footnote{The other metrics are applied to subsampled images in order to make them directly comparable to the uMSE$_{\text{S}}$ and uPSNR$_{\text{S}}$.} All metrics are applied to multiple denoising approches, as described in more detail in Section~\ref{sec:controlled_evaluation_details}. 

The results are reported in Tables~\ref{TGauss} and \ref{TGaussMSE}. When the noisy references correspond exactly to the same clean image (and therefore satisfy the conditions in Section~\ref{sec:statistical_properties}), the unsupervised metrics are extremely accurate across different noise levels for all denoising methods. 

\textcolor{black}{\textbf{Single-image results:} When the noisy references are computed via spatial subsampling, the metrics are still very accurate for the electron-microscopy dataset, but less so for the natural-image dataset if the PSNR is high (above 20 dB). The culprit is the difference between the clean images underlying each noisy reference  (see Figure~\ref{Fig:noisy_references}), which introduces a bias in the unsupervised metric, depicted in Figure~\ref{fig:spatial_subsampling_bias}. Figure~\ref{fig:difference_histograms} shows that the difference is more pronounced in natural images than in electron-microscopy images, which are smoother with respect to the pixel resolution. We further analyze the influence of the image smoothness on the effect of spatial subsampling in the proposed metrics in Section~\ref{sec:spatial_subsampling_effect}.}

\section{Application to Videos in RAW Format}
\label{sec:raw_videos}
We evaluate our proposed metrics on a dataset of videos in raw format, consisting of direct readings from the sensor of a surveillance camera contaminated with real noise at five different ISO levels~\cite{rawvideo}. The dataset contains 11 unique videos divided into 7 segments, each consisting of 10 noisy frames that capture the same static object. 
We consider four different denoisers: a wavelet-based method, temporal averaging and two versions of a state-of-the-art unsupervised deep-learning method using images and videos respectively~\cite{UDVD}. A detailed description of the experiments is provided in Section~\ref{sec:raw_videos_details}. Tables~\ref{table:raw_videos_umse} and \ref{table:raw_videos} compare our proposed unsupervised metrics
(computed using three noisy frames in each segment) with MSE and PSNR estimates obtained via averaging from ten noisy frames. The two types of metric yield similar results: the deep-learning methods clearly outperform the other baselines, and the video-based methods outperform the image-based methods. As explained in Section~\ref{app:uMSEvsMSEavg}, the averaging-based MSE and PSNR are not consistent estimators of the true MSE and PSNR, and can be substantially less accurate than the uMSE and uPSNR (see Figure~\ref{fig:MSEa}), so they should not be considered ground-truth metrics. 

\begin{figure}
\begin{center}
\begin{tabular}{>{\centering\arraybackslash}m{0.45\linewidth}  >{\centering\arraybackslash}m{0.5\linewidth}  }
 Moderate SNR test set & Low SNR test set 
\end{tabular} \vspace{-0.3cm}\\
Data \vspace{0.1cm}\\
\begin{tabular}{>{\centering\arraybackslash}m{0.19\linewidth}>{\centering\arraybackslash}m{0.19\linewidth} >{\arraybackslash}m{0.19\linewidth}  >{\centering\arraybackslash}m{0.19\linewidth}  }
\includegraphics[width=1.2\linewidth]{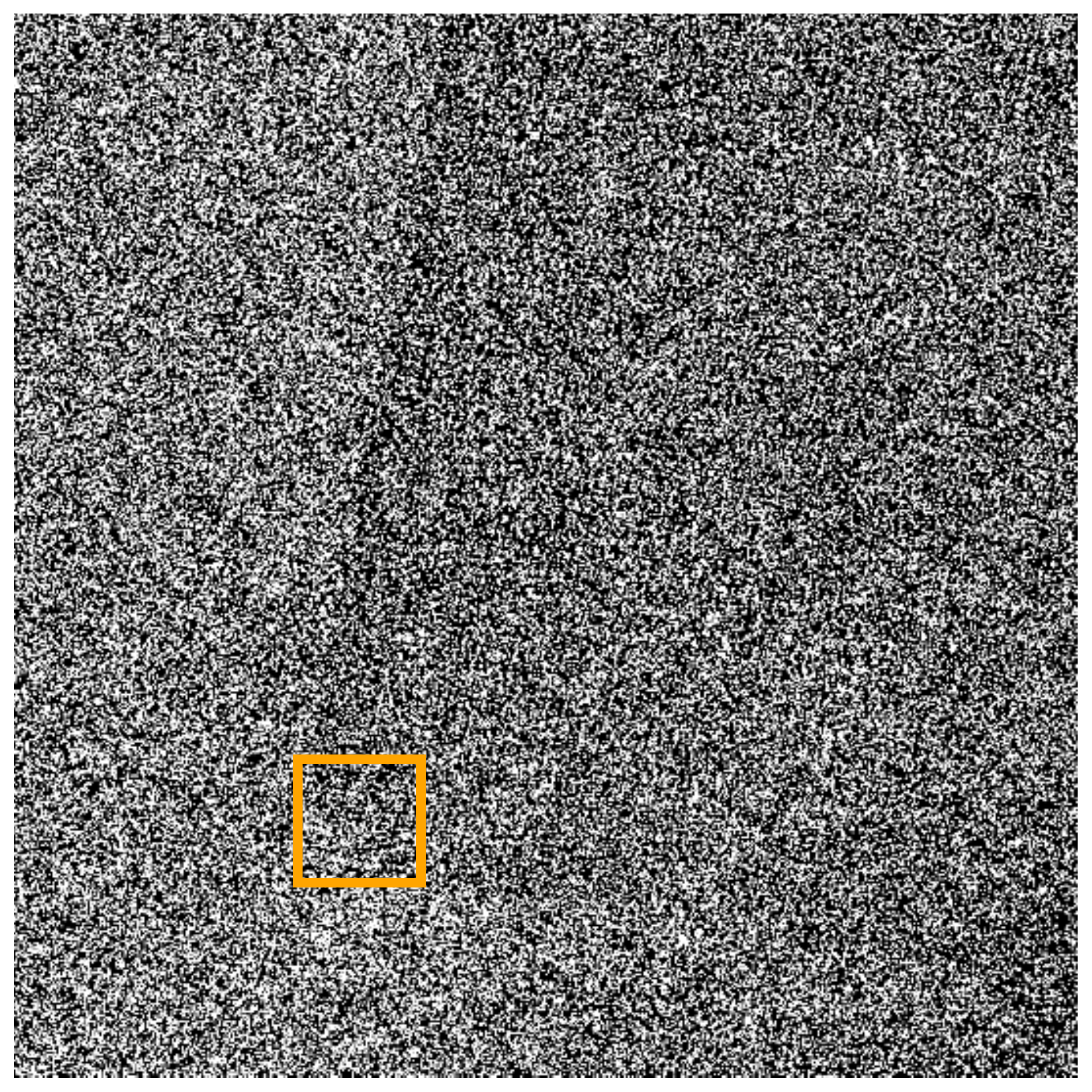}&
          \includegraphics[width=1.2\linewidth]{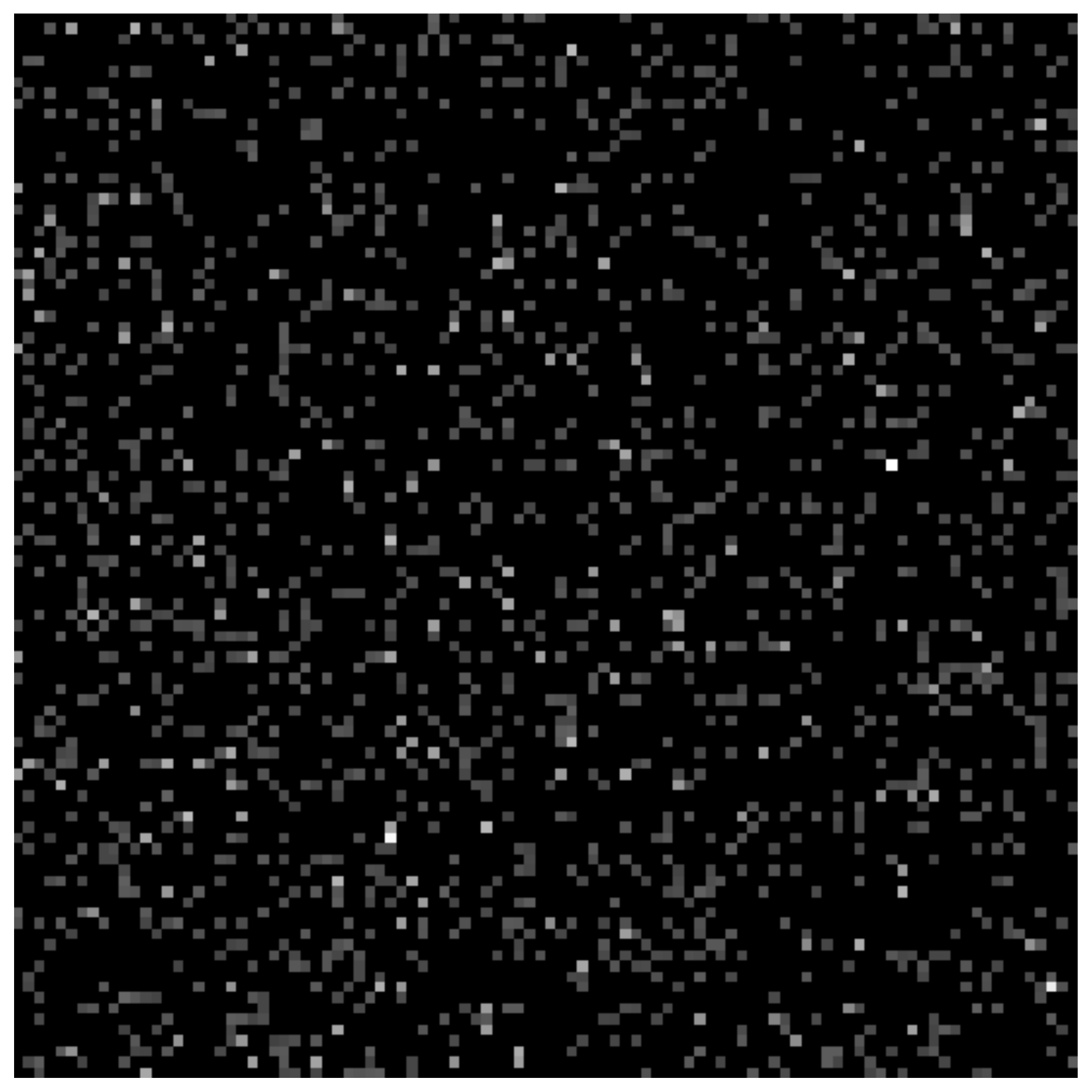} & 
          \includegraphics[width=1.2\linewidth]{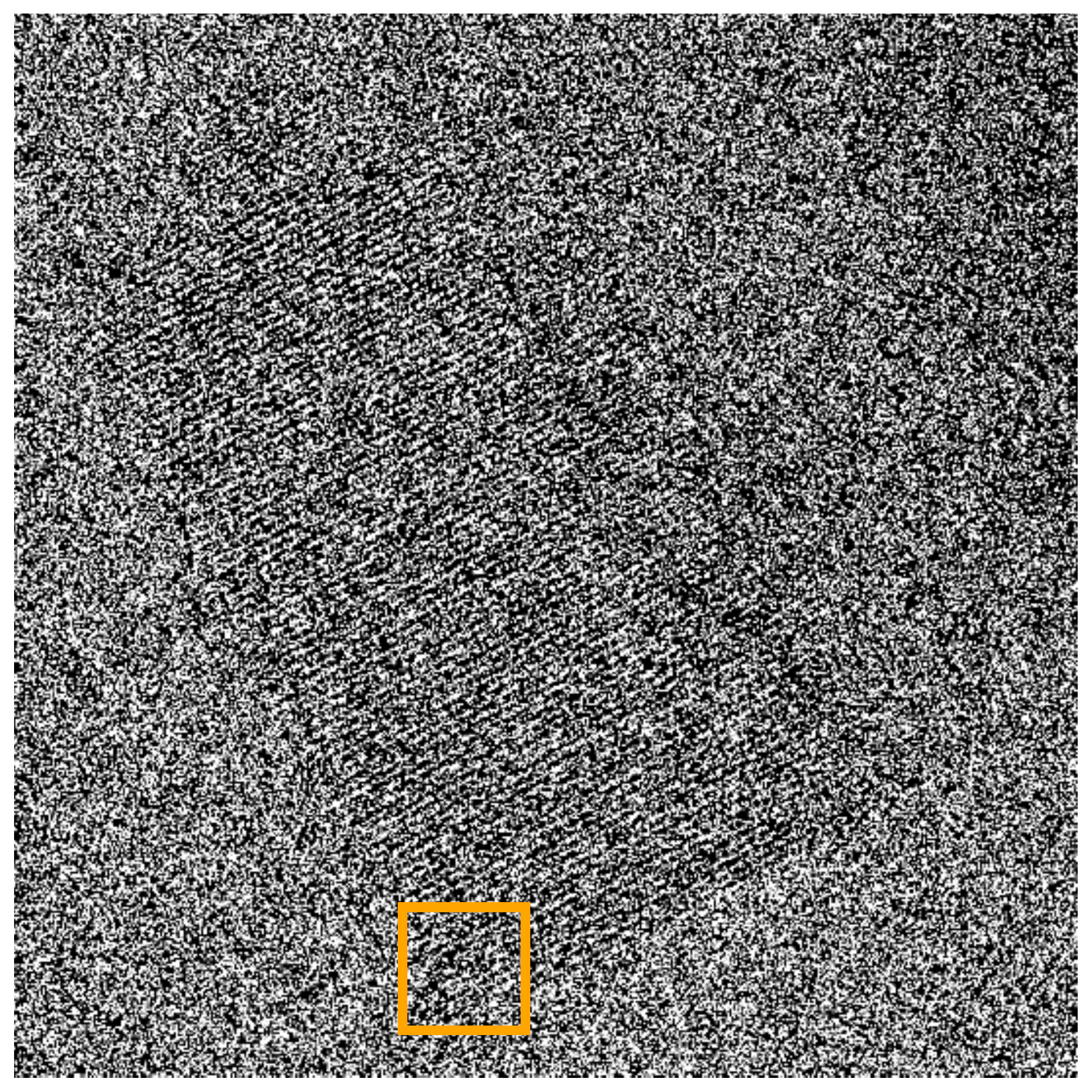}&
          \includegraphics[width=1.2\linewidth]{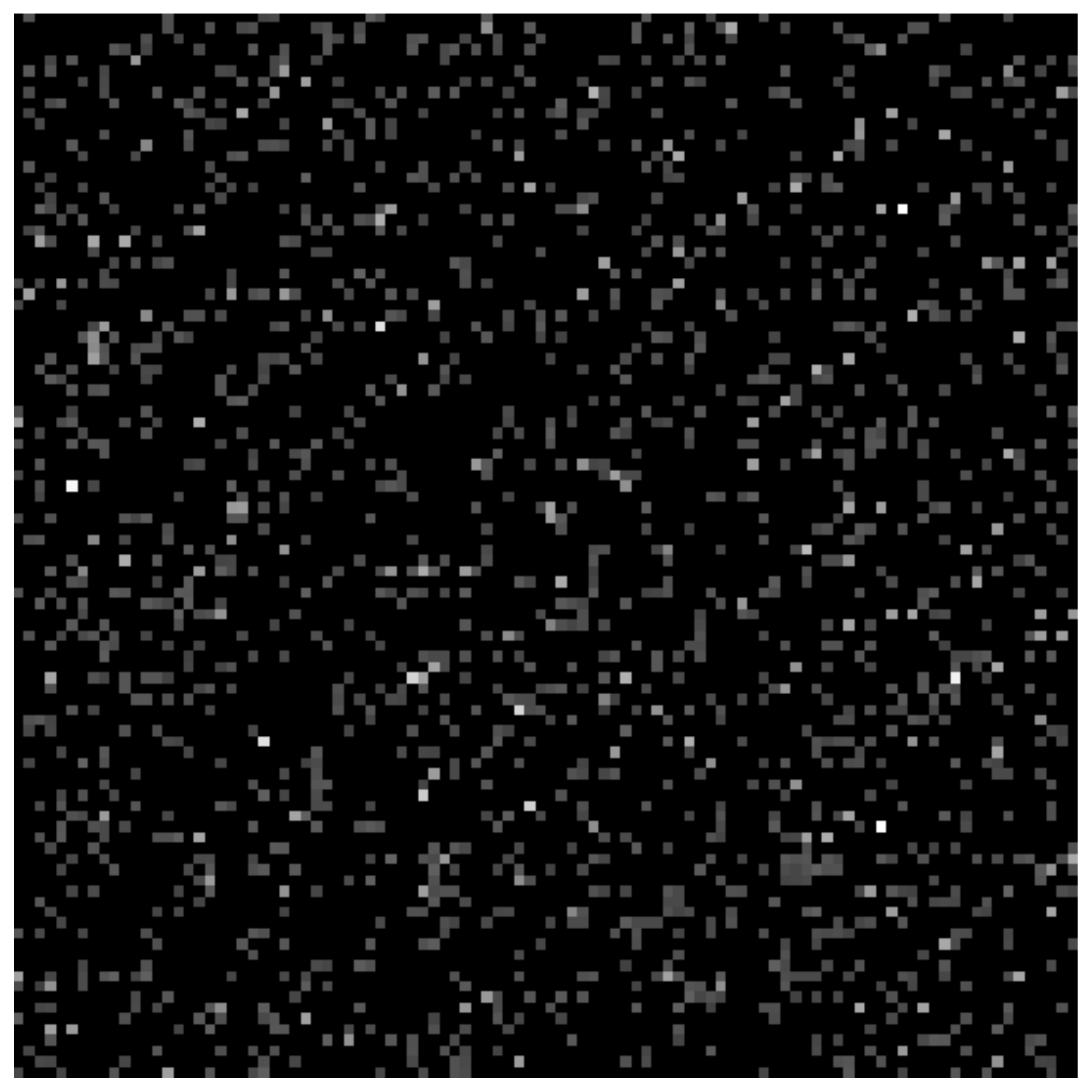}
\end{tabular}\\
Gaussian smoothing \\
\begin{tabular}{>{\centering\arraybackslash}m{0.19\linewidth}>{\centering\arraybackslash}m{0.19\linewidth} >{\arraybackslash}m{0.19\linewidth}  >{\centering\arraybackslash}m{0.19\linewidth}  }
\includegraphics[width=1.2\linewidth]{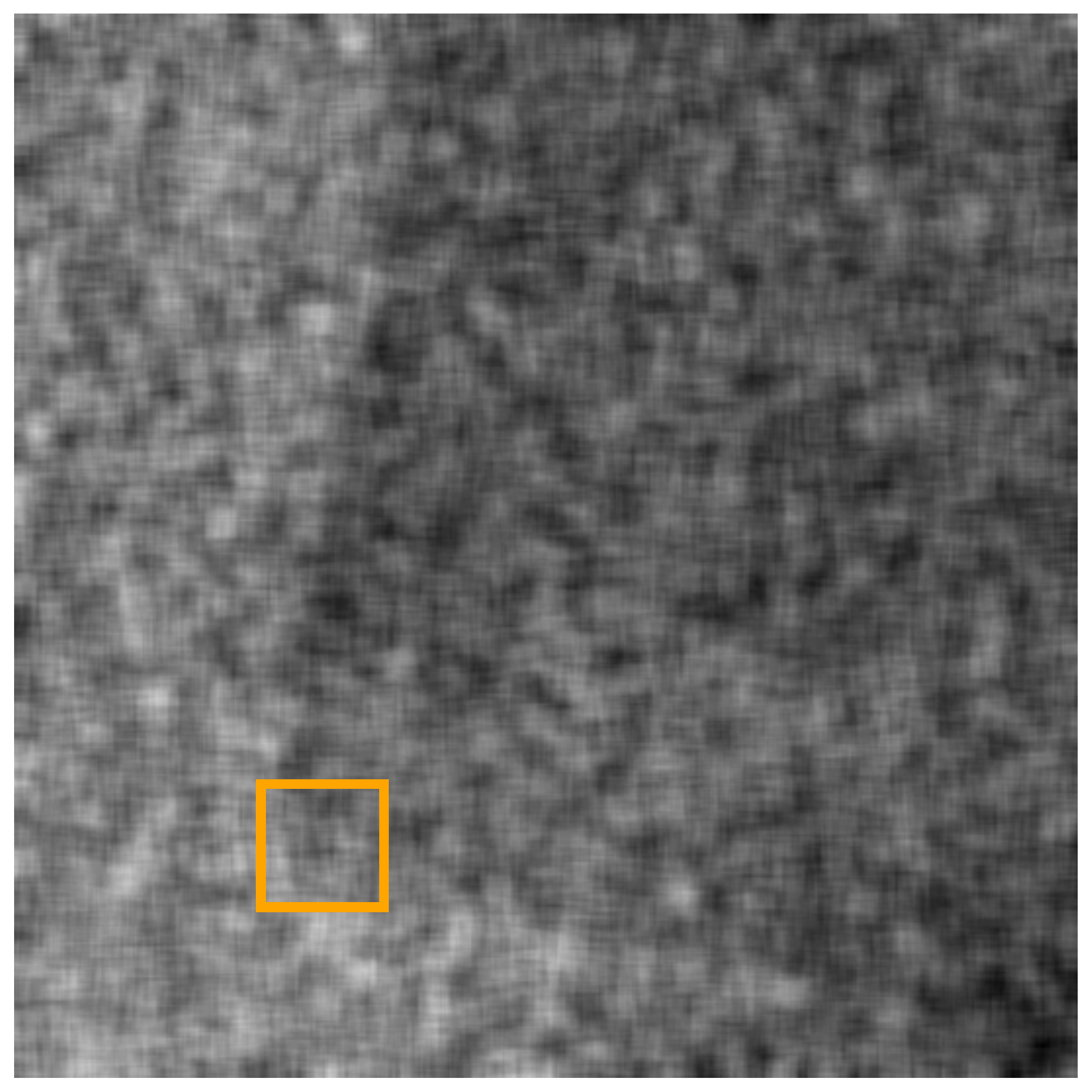}&
          \includegraphics[width=1.2\linewidth]{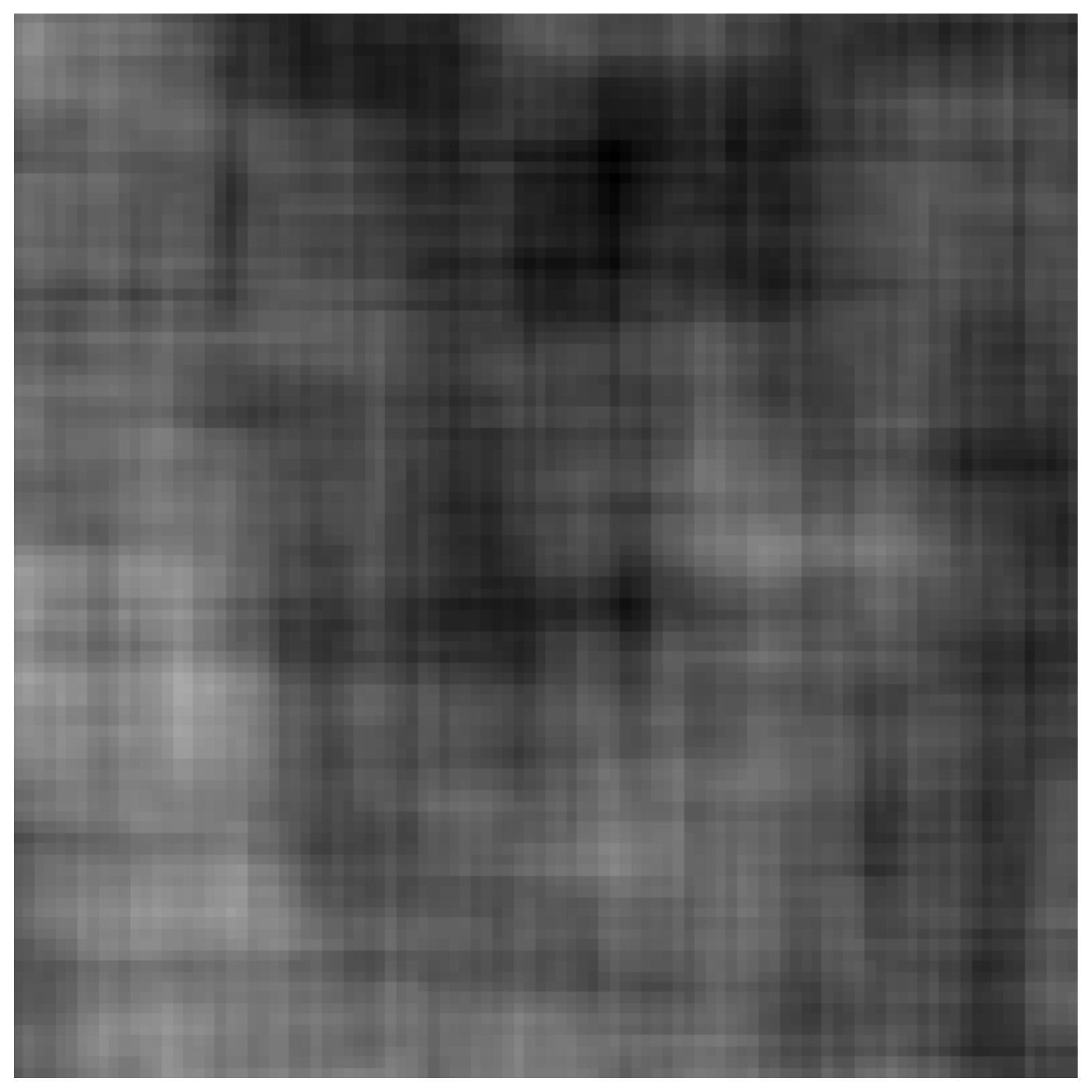} & 
          \includegraphics[width=1.2\linewidth]{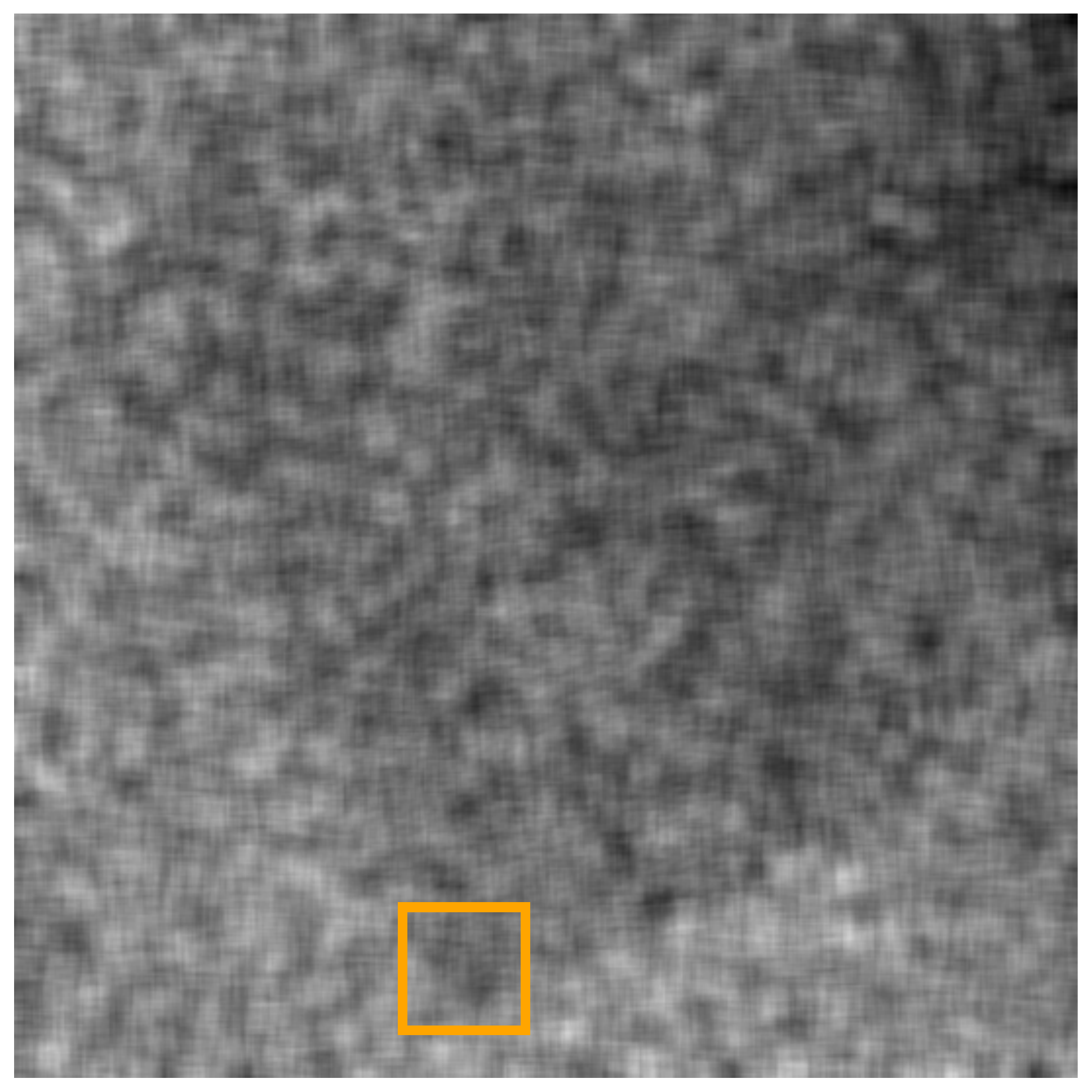}&
          \includegraphics[width=1.2\linewidth]{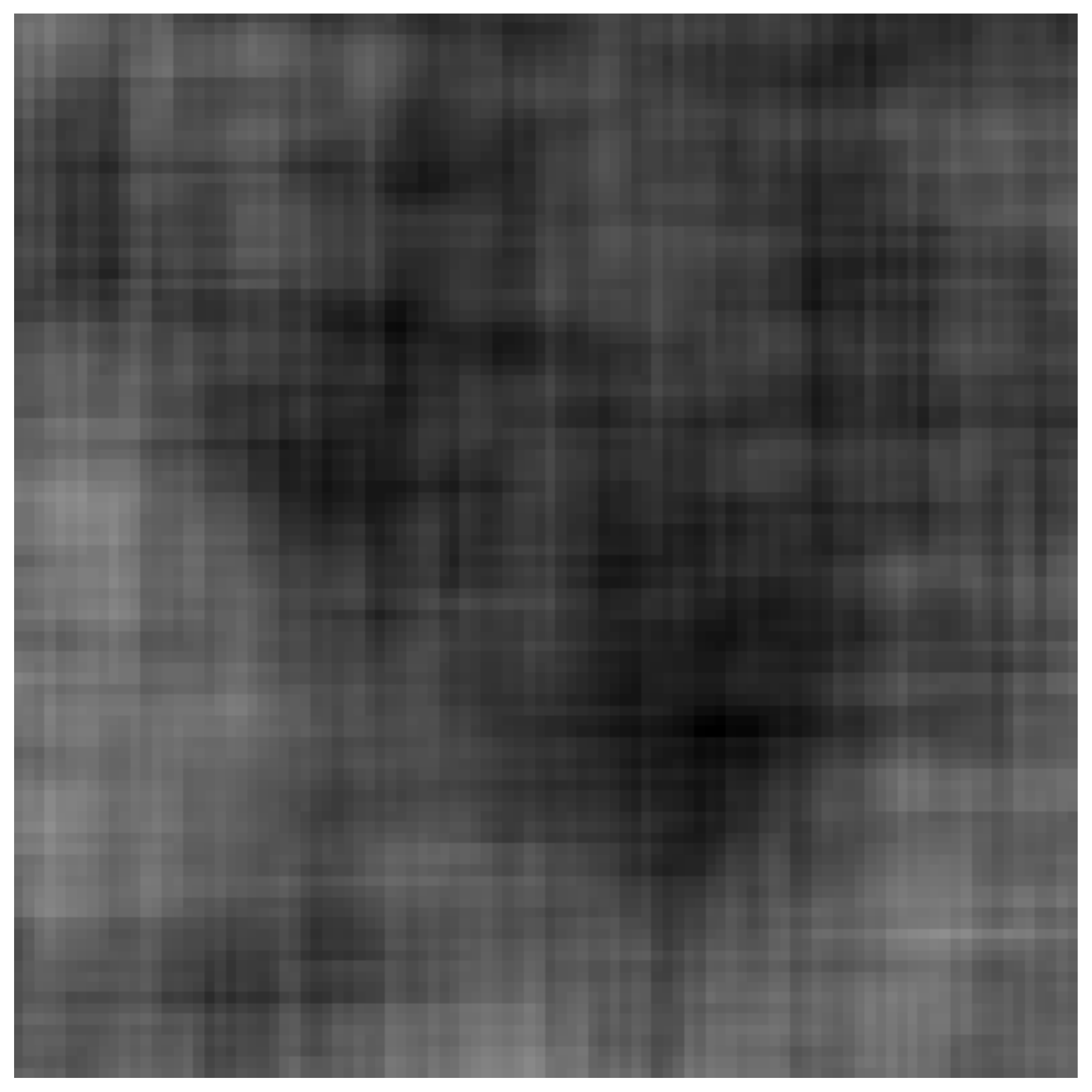}
\end{tabular}
\begin{tabular}{>{\centering\arraybackslash}m{0.45\linewidth}  >{\centering\arraybackslash}m{0.5\linewidth}  }
20.4 dB & 16.0 dB
\end{tabular} \\
Noise2Self \vspace{0.1cm}\\
\begin{tabular}{>{\centering\arraybackslash}m{0.19\linewidth}>{\centering\arraybackslash}m{0.19\linewidth} >{\arraybackslash}m{0.19\linewidth}  >{\centering\arraybackslash}m{0.19\linewidth}  }
\includegraphics[width=1.2\linewidth]{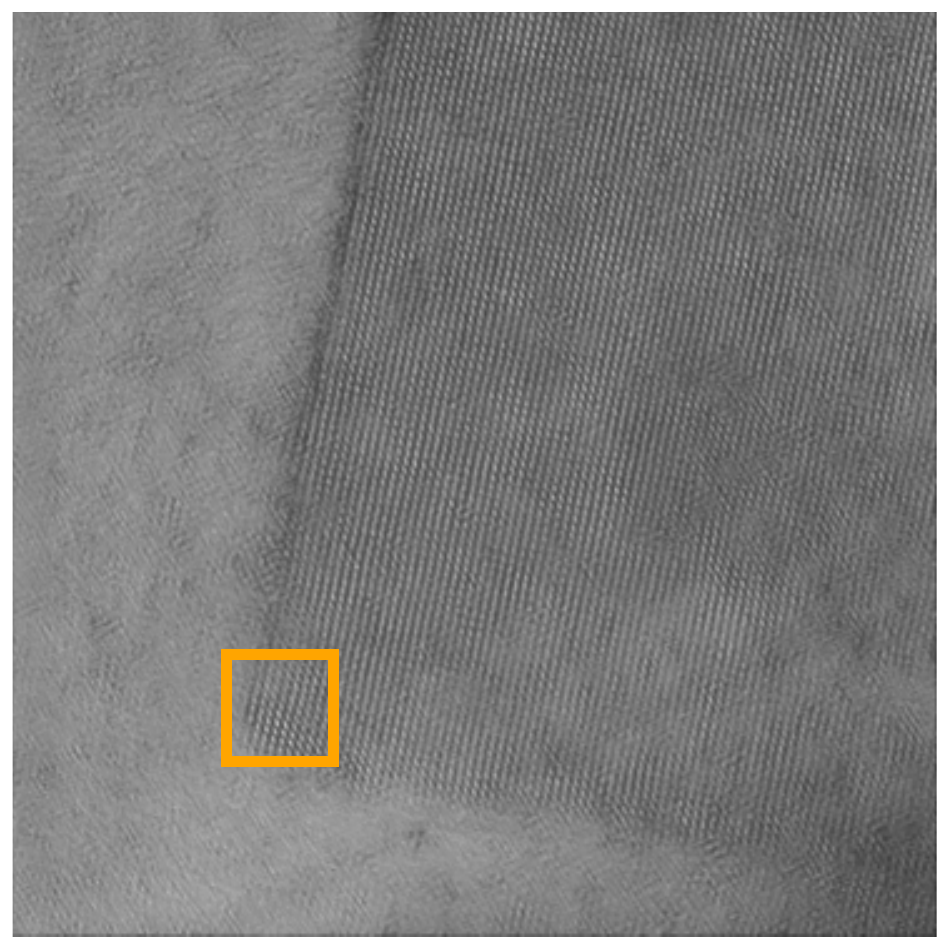}&
          \includegraphics[width=1.2\linewidth]{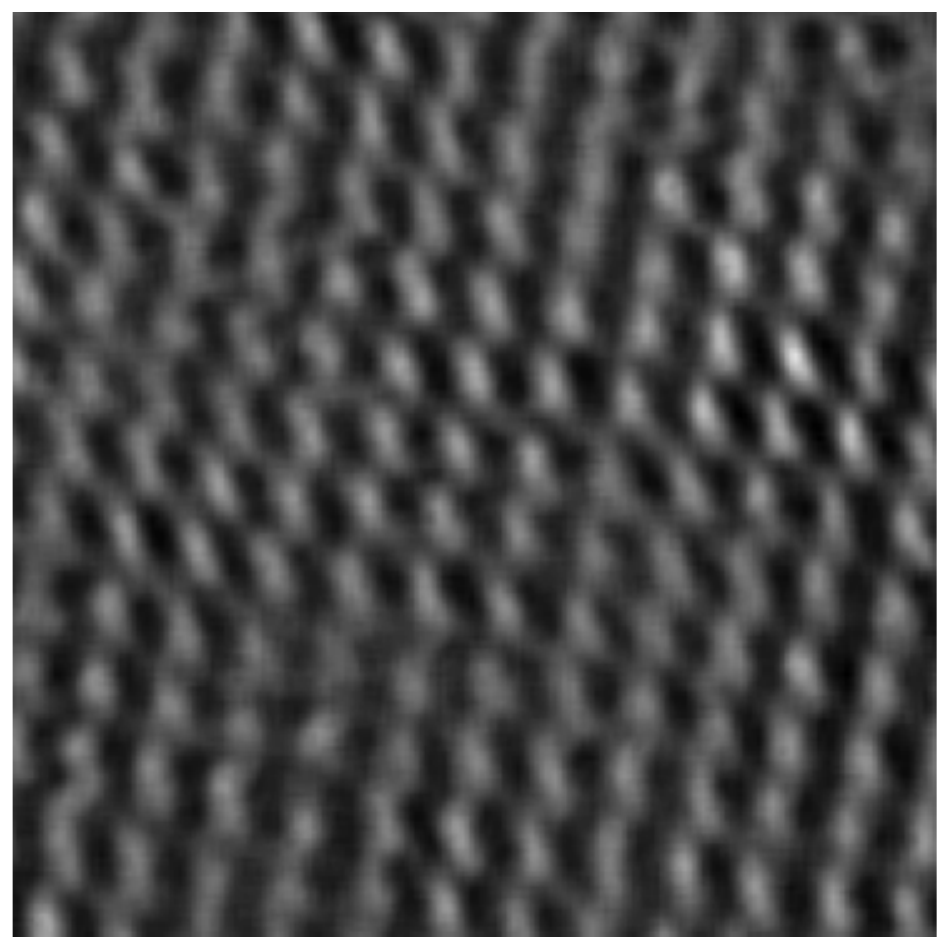} & 
          \includegraphics[width=1.2\linewidth]{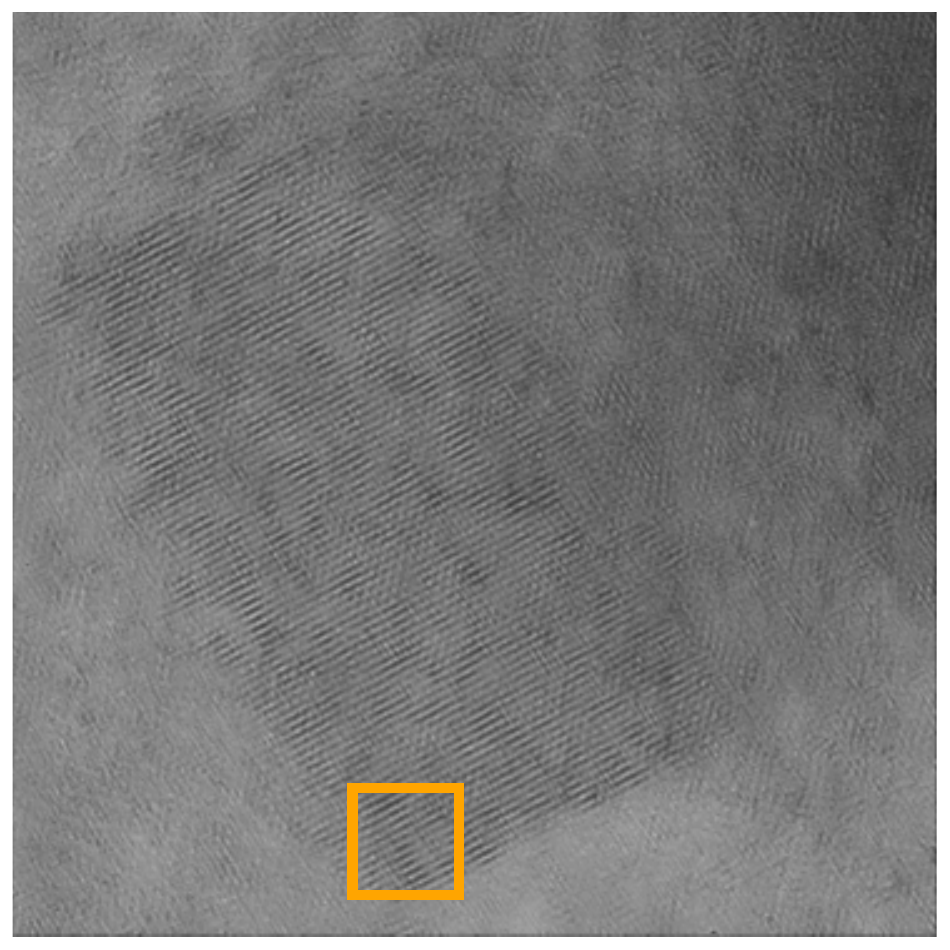}&
          \includegraphics[width=1.2\linewidth]{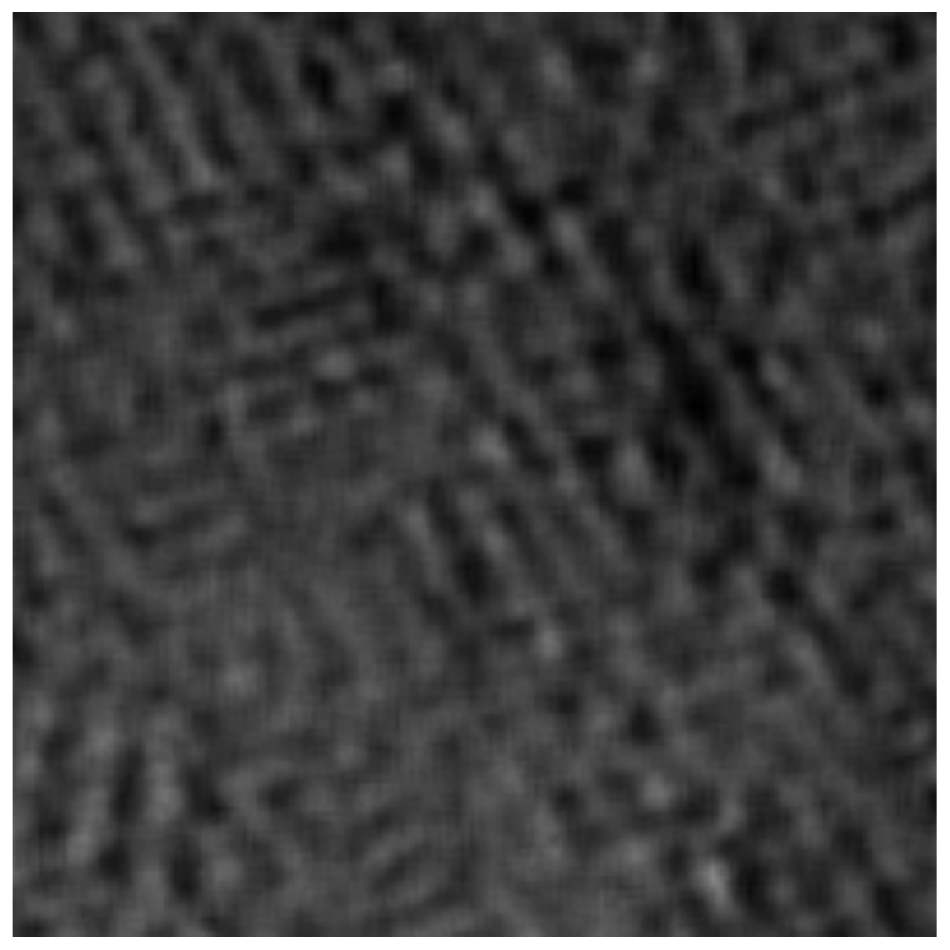}
\end{tabular}
\begin{tabular}{>{\centering\arraybackslash}m{0.45\linewidth}  >{\centering\arraybackslash}m{0.5\linewidth}  }
25.8 dB & 17.2 dB
\end{tabular}\\
BlindSpot \vspace{0.1cm}\\
\begin{tabular}{>{\centering\arraybackslash}m{0.19\linewidth}>{\centering\arraybackslash}m{0.19\linewidth} >{\arraybackslash}m{0.19\linewidth}  >{\centering\arraybackslash}m{0.19\linewidth}  }
\includegraphics[width=1.2\linewidth]{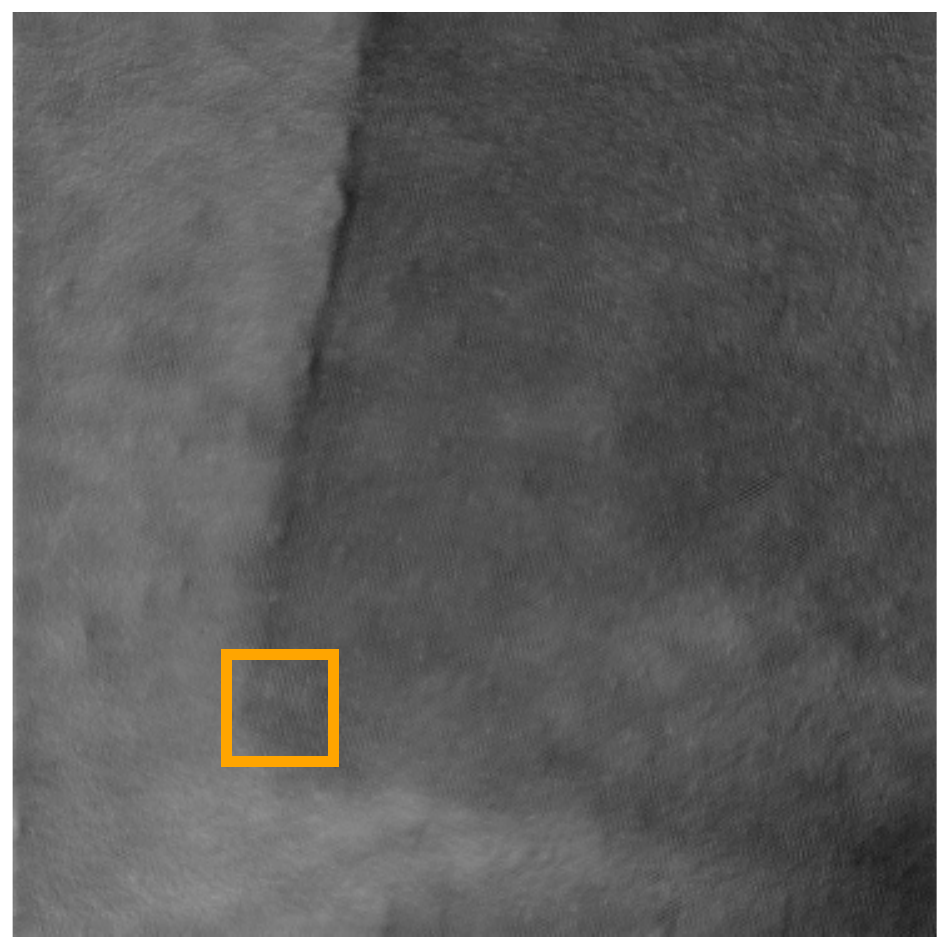}&
          \includegraphics[width=1.2\linewidth]{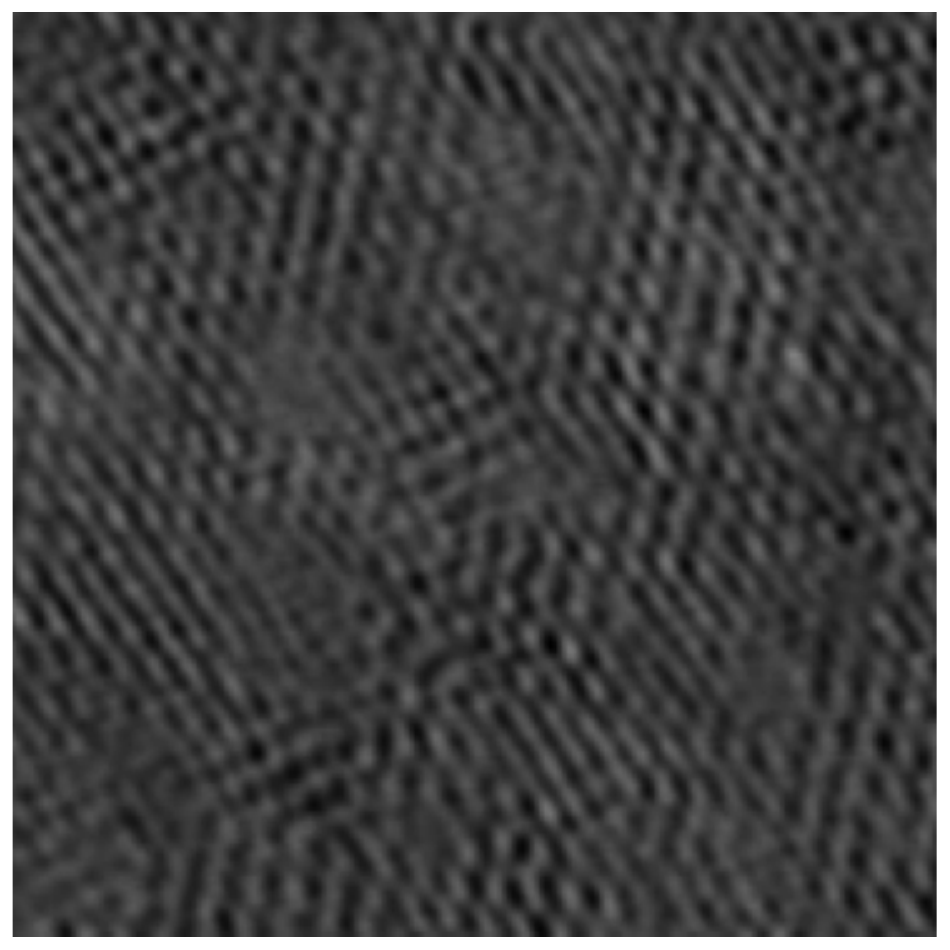} & 
          \includegraphics[width=1.2\linewidth]{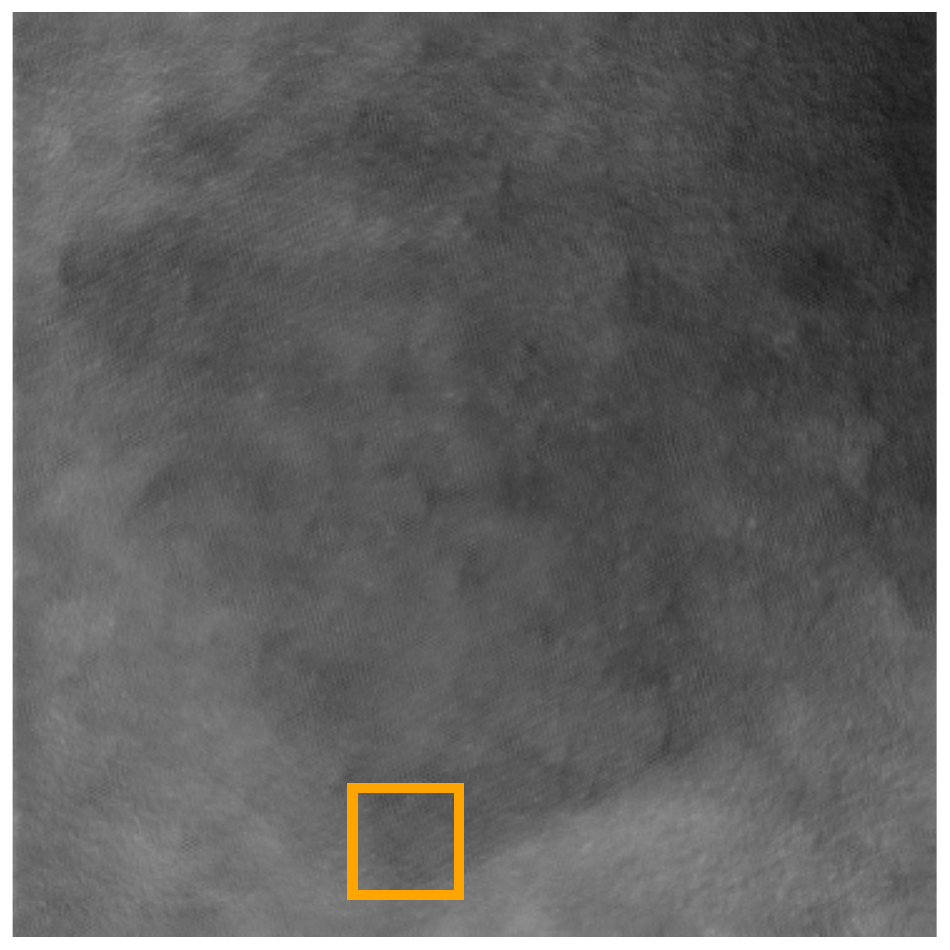}&
          \includegraphics[width=1.2\linewidth]{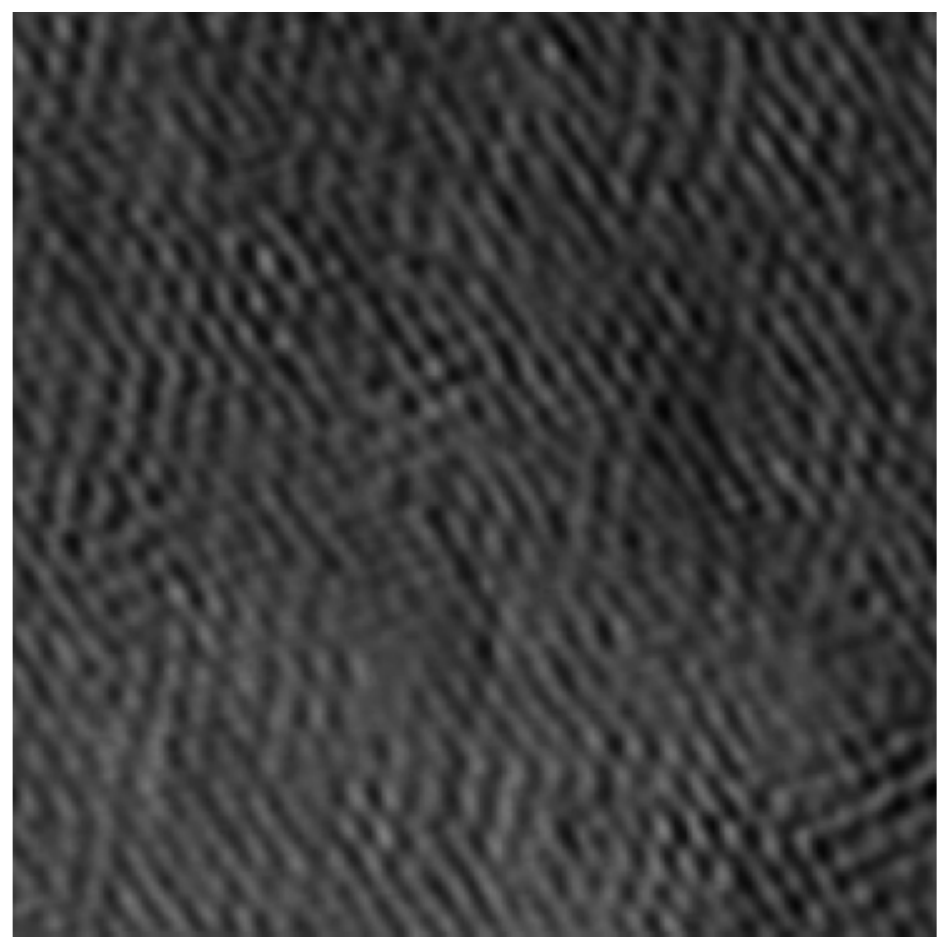}
\end{tabular}
\begin{tabular}{>{\centering\arraybackslash}m{0.45\linewidth}  >{\centering\arraybackslash}m{0.5\linewidth}  }
25.3 dB & 18.1 dB
\end{tabular}\\
Neighbor2Neighbor \vspace{0.1cm}\\
\begin{tabular}{>{\centering\arraybackslash}m{0.19\linewidth}>{\centering\arraybackslash}m{0.19\linewidth} >{\arraybackslash}m{0.19\linewidth}  >{\centering\arraybackslash}m{0.19\linewidth}  }
\includegraphics[width=1.2\linewidth]{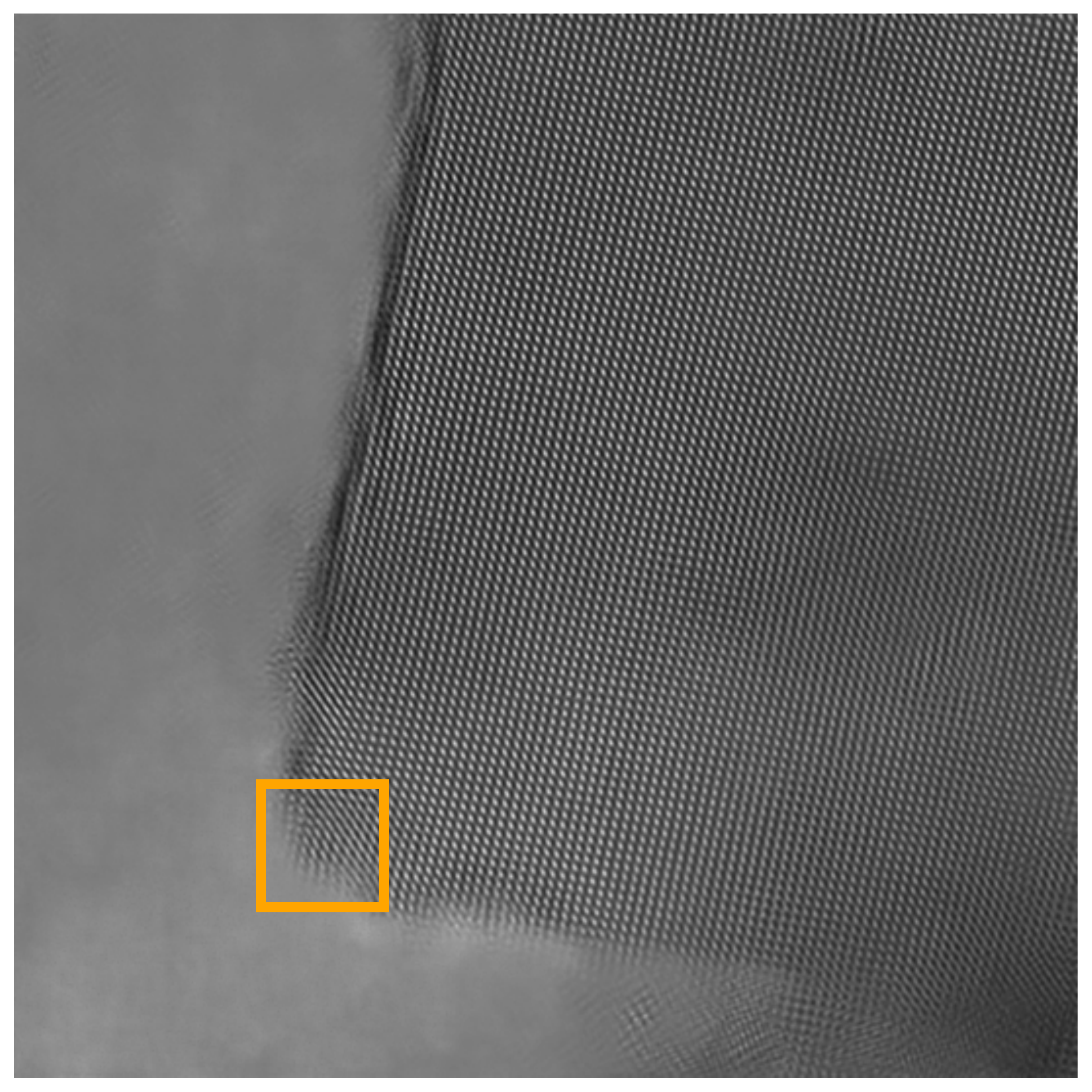}&
          \includegraphics[width=1.2\linewidth]{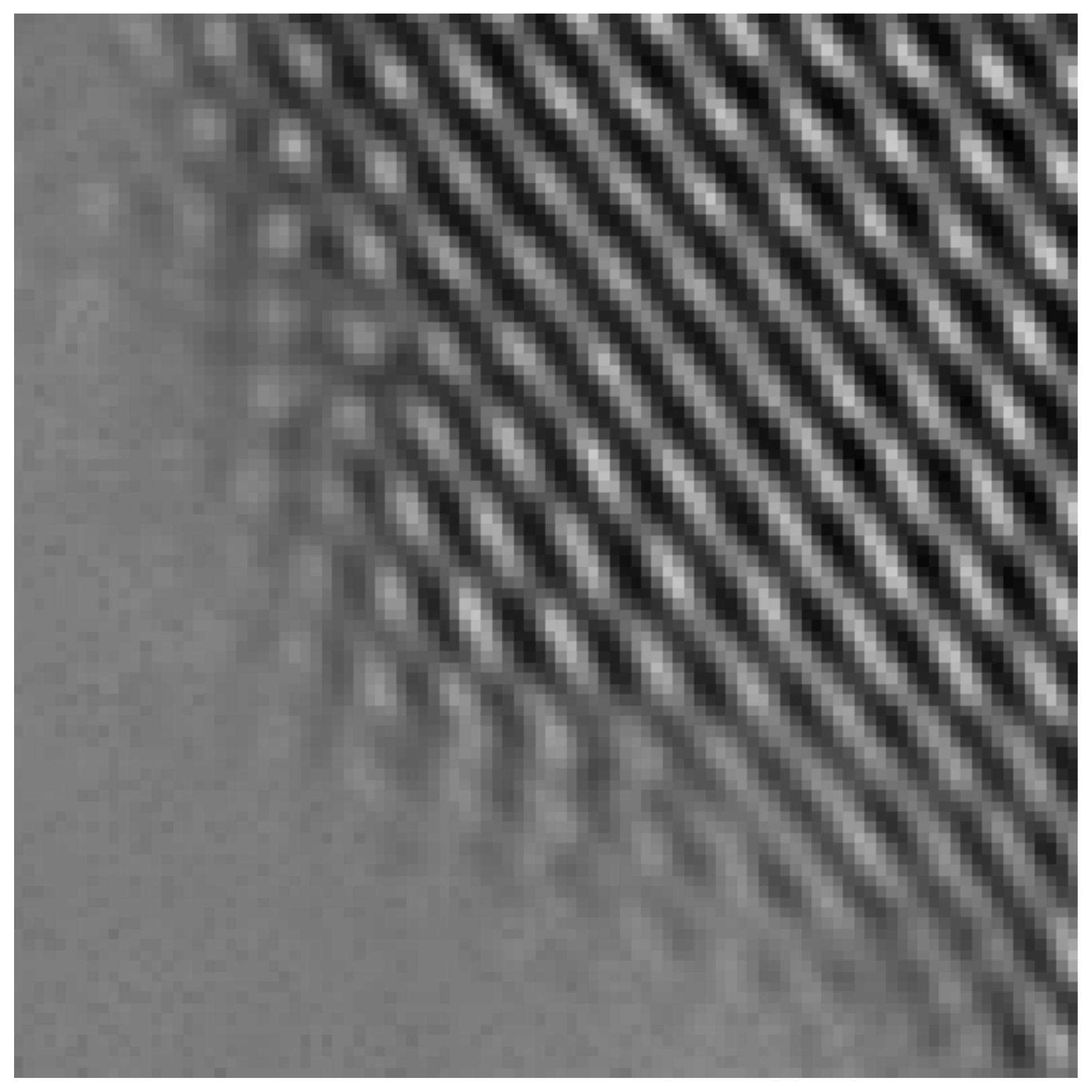} & 
          \includegraphics[width=1.2\linewidth]{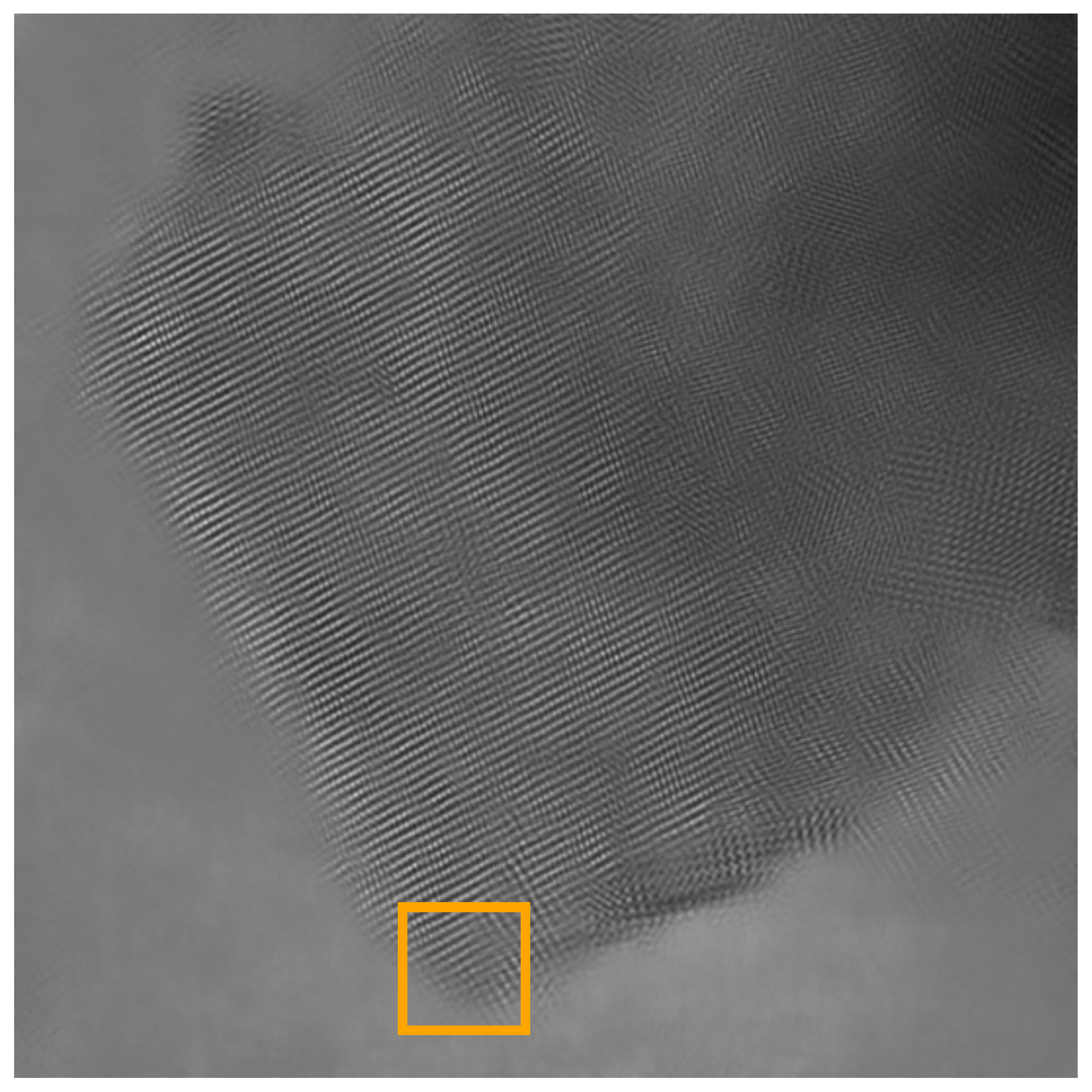}&
          \includegraphics[width=1.2\linewidth]{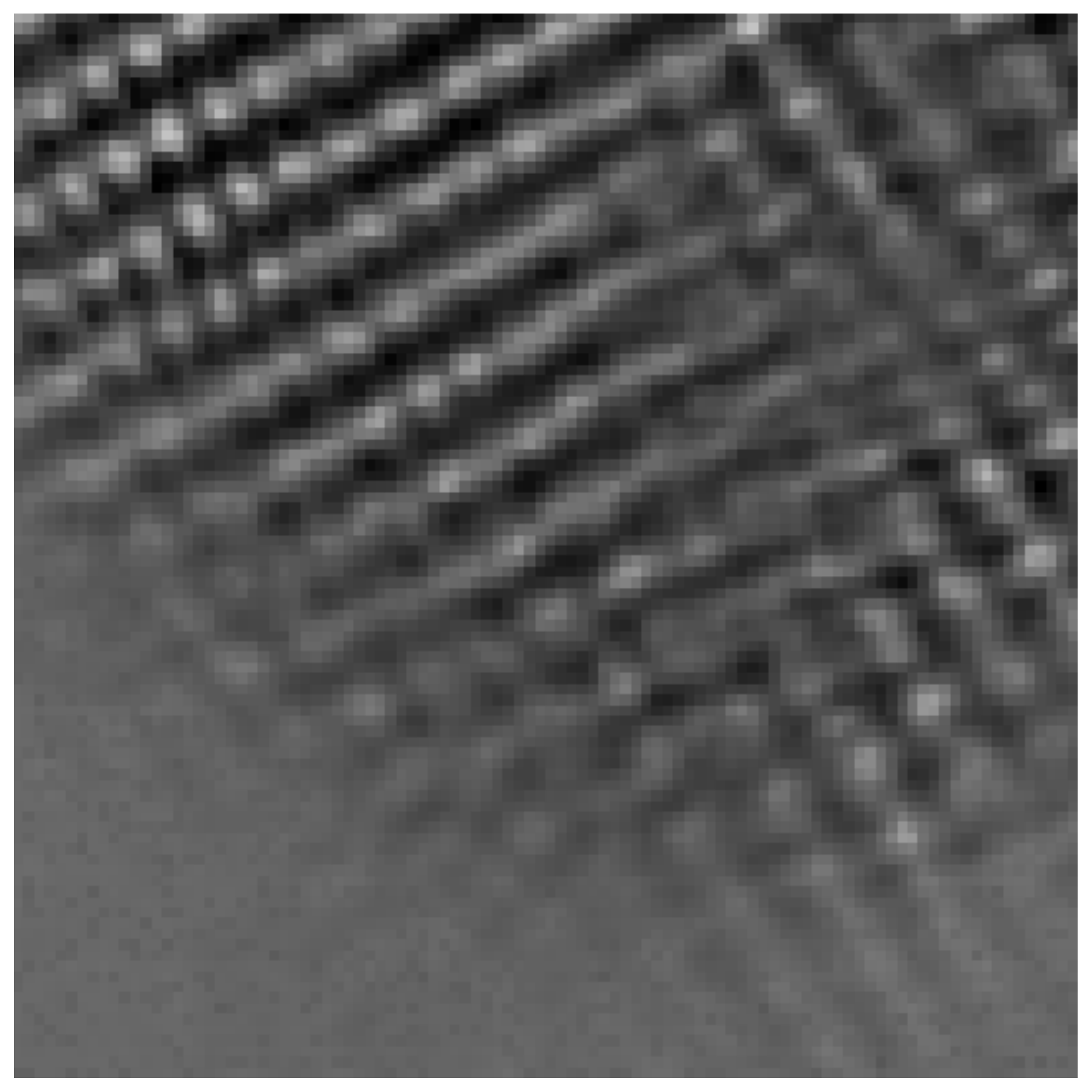}
\end{tabular}
\begin{tabular}{>{\centering\arraybackslash}m{0.45\linewidth}  >{\centering\arraybackslash}m{0.5\linewidth}  }
26.9 dB & 18.6 dB
\end{tabular}
\end{center}

\caption{\textbf{Denoising real-world electron-microscopy data.} Example noisy images (top) from the moderate-SNR (left 2 columns) and low-SNR (right 2 columns) test sets described in Section~\ref{sec:electron_microscopy}. The data are denoised using a Gaussian-smoothing baseline and several unsupervised CNNs: Noise2Self,  BlindSpot, and Neighbor2Neighbor. The uPSNR of each method on each test set is shown below the images. The uPSNR values and visual inspection indicate that the CNNs clearly outperform the baseline method, that the best unsupervised approach is Neighbor2Neighbor, and that all methods achieve worse results on the low-SNR test set.
    }
    \label{fig:uMSE_example_test}
\end{figure}

\section{Application To Electron Microscopy}
\label{sec:electron_microscopy}
Our proposed metrics enable quantitative evaluation of denoisers in the absence of ground-truth clean images. 
We showcase this for transmission electron microscopy (TEM), a key imaging modality in material sciences. 
Recent developments enable the acquisition of high frame-rate images, capturing high temporal resolution dynamics, thought to be crucial in catalytic processes~\cite{crozier2019dynamics}. Images acquired under these conditions are severely limited by noise. Recent work suggest that deep learning methods provide an effective solution~\cite{UDVD,nanotci,gaintuning}, but, instead of quantitative metrics, evaluation on real data has been limited to visual inspection. 


The TEM dataset consists of 18,597 noisy frames depicting platinum nanoparticles on a cerium oxide support. A major challenge for the application of unsupervised metrics is the presence of local correlations in the noise (see Figure~\ref{fig:correlation}). We address this by performing spatial subsampling to reduce the correlation and selecting two
contiguous test sets with low correlation: 155 images with \emph{moderate} signal-to-noise ratio (SNR), and 383 images with \emph{low} SNR, which are more challenging. We train a UNet architecture following the Neighbor2Neighbor and Noise2Self methods~\cite{Neighbor2Neighbor,Noise2Self} and a BlindSpot UNet (using a single-frame version of~\cite{UDVD}) on a training set containing 70\% of the data, and compare their performance to a Gaussian-smoothing baseline on the two test sets. 
Section~\ref{app:TEM_data} provides a more detailed description of the dataset and the models.

Figure~\ref{fig:uMSE_example_test} shows examples from the data and the corresponding denoised images, as well as the uPSNR of each method for the two test sets. Figure~\ref{fig:uMSE_histogram_em} shows a histogram comparing the uMSE values of Gaussian smoothing and Neighbor2Neighbor for each individual test image. The unsupervised metrics indicate that the deep-learning methods achieve effective denoising on the moderate SNR set (clearly outperforming the Gaussian-smoothing baseline), that  Neighbor2Neighbor yields the best results, and that all methods produce significantly worse results on the low-SNR test set. \textcolor{black}{Figure~\ref{fig:umse_scatter_em} shows that uMSE produces consistent image-level evaluations between Neighbor2Neighbor and Gaussian smoothing.} These conclusions are supported by the visual appearance of the images.

\section{Conclusion And Open Questions}
\label{sec:conclusion}
In this work we introduce two novel unsupervised metrics computed exclusively from noisy data, which are asymptotically consistent estimators of the corresponding supervised metrics, and yield accurate approximations in practice. These results show that unsupervised evaluation is feasible and can be very effective, but several important challenges remain. Key open questions for future research include: 
\begin{itemize}
\item How to address the bias introduced by spatial subsampling in the case of single images that are not sufficiently smooth (see Section~\ref{sec:spatial_subsampling_effect}), ideally achieving an unbiased approximation to the MSE from a single noisy image. 
\item How to design unsupervised metrics for noise distributions and artifacts which are not pixel-wise independent (see for example~\cite{prakash2021interpretable}). 
\item How to obtain unsupervised approximations of perceptual metrics such as SSIM~\cite{wang2004image}. 
\item How to perform unsupervised evaluation for inverse problems beyond denoising, and related applications such as realistic image synthesis~\cite{MCrendering}.
\end{itemize}









\section*{Acknowledgements}

AMM was partially
supported by the mobility grants program of Centre de Formació
Interdisciplinària Superior (CFIS) - Universitat Politècnica de
Catalunya (UPC). We gratefully acknowledge financial support from the National Science Foundation (NSF). NSF NRT HDR Award 1922658 partially supported SM. NSF OAC-1940263 and 2104105 supported PAC aand PH, NSF CBET 1604971 supported JV, and NSF DMR 184084 supported MT. NSF OAC-1940097 supported ML. NSF OAC-2103936 supported CFG. The authors acknowledge ASU Research Computing and NYU HPC for providing high performance computing resources, and the John M. Cowley Center for High Resolution Electron Microscopy at Arizona State University. The direct electron detector was support from NSF MRI 1920335.

\bibliography{biblio.bib}

\begin{thebibliography}{42}
\providecommand{\natexlab}[1]{#1}
\providecommand{\url}[1]{\texttt{#1}}
\expandafter\ifx\csname urlstyle\endcsname\relax
  \providecommand{\doi}[1]{doi: #1}\else
  \providecommand{\doi}{doi: \begingroup \urlstyle{rm}\Url}\fi

\bibitem[Abdelhamed et~al.(2018)Abdelhamed, Lin, and Brown]{sidd}
Abdelhamed, A., Lin, S., and Brown, M.~S.
\newblock A high-quality denoising dataset for smartphone cameras.
\newblock In \emph{Proceedings of the IEEE Conference on Computer Vision and
  Pattern Recognition}, pp.\  1692--1700, 2018.

\bibitem[Arias \& Morel(2018)Arias and Morel]{arias2018video}
Arias, P. and Morel, J.-M.
\newblock Video denoising via empirical bayesian estimation of space-time
  patches.
\newblock \emph{Journal of Mathematical Imaging and Vision}, 60\penalty0
  (1):\penalty0 70--93, 2018.

\bibitem[Batson \& Royer(2019{\natexlab{a}})Batson and Royer]{N2S}
Batson, J. and Royer, L.
\newblock Noise2self: Blind denoising by self-supervision, 2019{\natexlab{a}}.
\newblock URL \url{https://arxiv.org/abs/1901.11365}.

\bibitem[Batson \& Royer(2019{\natexlab{b}})Batson and Royer]{Noise2Self}
Batson, J. and Royer, L.
\newblock {N}oise2{S}elf: Blind denoising by self-supervision.
\newblock In Chaudhuri, K. and Salakhutdinov, R. (eds.), \emph{Proceedings of
  the 36th International Conference on Machine Learning}, volume~97 of
  \emph{Proceedings of Machine Learning Research}, pp.\  524--533. PMLR, 09--15
  Jun 2019{\natexlab{b}}.
\newblock URL \url{https://proceedings.mlr.press/v97/batson19a.html}.

\bibitem[Breiman(1992)]{breiman1992probability}
Breiman, L.
\newblock \emph{Probability}.
\newblock SIAM, 1992.

\bibitem[Crozier et~al.(2019)Crozier, Lawrence, Vincent, and
  Levin]{crozier2019dynamics}
Crozier, P.~A., Lawrence, E.~L., Vincent, J.~L., and Levin, B.~D.
\newblock Dynamic restructuring during processing: approaches to higher
  temporal resolution.
\newblock \emph{Microscopy and Microanalysis}, 25\penalty0 (S2):\penalty0
  1464--1465, 2019.

\bibitem[Donoho \& Johnstone(1995)Donoho and Johnstone]{Donoho95a}
Donoho, D. and Johnstone, I.
\newblock Adapting to unknown smoothness via wavelet shrinkage.
\newblock \emph{J American Stat Assoc}, 90\penalty0 (432), December 1995.

\bibitem[Efron \& Tibshirani(1994)Efron and Tibshirani]{efron1994introduction}
Efron, B. and Tibshirani, R.~J.
\newblock \emph{An introduction to the bootstrap}.
\newblock CRC press, 1994.

\bibitem[Ercius et~al.(2020)Ercius, Johnson, Brown, Pelz, Hsu, Draney, Fong,
  Goldschmidt, Joseph, Lee, and et~al.]{ercius2020}
Ercius, P., Johnson, I., Brown, H., Pelz, P., Hsu, S.-L., Draney, B., Fong, E.,
  Goldschmidt, A., Joseph, J., Lee, J., and et~al.
\newblock The 4d camera – a 87 khz frame-rate detector for counted 4d-stem
  experiments.
\newblock \emph{Microscopy and Microanalysis}, pp.\  1–3, 2020.
\newblock \doi{10.1017/S1431927620019753}.

\bibitem[Faruqi \& McMullan(2018)Faruqi and McMullan]{FARUQI2018}
Faruqi, A. and McMullan, G.
\newblock Direct imaging detectors for electron microscopy.
\newblock \emph{Nuclear Instruments and Methods in Physics Research Section A:
  Accelerators, Spectrometers, Detectors and Associated Equipment},
  878:\penalty0 180 -- 190, 2018.
\newblock ISSN 0168-9002.
\newblock \doi{https://doi.org/10.1016/j.nima.2017.07.037}.
\newblock URL
  \url{http://www.sciencedirect.com/science/article/pii/S0168900217307787}.
\newblock Radiation Imaging Techniques and Applications.

\bibitem[Franzen(1993)]{k23}
Franzen, R.~W., 1993.
\newblock URL \url{http://r0k.us/graphics/kodak/}.

\bibitem[Huang et~al.(2017)Huang, Liu, Van Der~Maaten, and
  Weinberger]{huang2017densnet}
Huang, G., Liu, Z., Van Der~Maaten, L., and Weinberger, K.~Q.
\newblock Densely connected convolutional networks.
\newblock In \emph{Proc. IEEE Conf. Computer Vision and Pattern Recognition},
  pp.\  4700--4708, 2017.

\bibitem[Huang et~al.(2021)Huang, Li, Jia, Lu, and Liu]{Neighbor2Neighbor}
Huang, T., Li, S., Jia, X., Lu, H., and Liu, J.
\newblock Neighbor2neighbor: Self-supervised denoising from single noisy
  images.
\newblock In \emph{Proceedings of the IEEE/CVF Conference on Computer Vision
  and Pattern Recognition}, pp.\  14781--14790, 2021.

\bibitem[Ioffe \& Szegedy(2015)Ioffe and Szegedy]{ioffe2015batch}
Ioffe, S. and Szegedy, C.
\newblock Batch normalization: Accelerating deep network training by reducing
  internal covariate shift.
\newblock \emph{arXiv preprint arXiv:1502.03167}, 2015.

\bibitem[Krull et~al.(2019)Krull, Buchholz, and Jug]{n2v}
Krull, A., Buchholz, T.-O., and Jug, F.
\newblock Noise2void - learning denoising from single noisy images.
\newblock In \emph{Proceedings of the IEEE/CVF Conference on Computer Vision
  and Pattern Recognition}, pp.\  2124--2132, 2019.

\bibitem[Laine et~al.(2019)Laine, Karras, Lehtinen, and Aila]{blindspotnet}
Laine, S., Karras, T., Lehtinen, J., and Aila, T.
\newblock High-quality self-supervised deep image denoising.
\newblock In \emph{Advances in Neural Information Processing Systems 32}, pp.\
  6970--6980, 2019.

\bibitem[Lebrun et~al.(2015)Lebrun, Colom, and Morel]{lebrun2015multiscale}
Lebrun, M., Colom, M., and Morel, J.-M.
\newblock Multiscale image blind denoising.
\newblock \emph{IEEE Transactions on Image Processing}, 24\penalty0
  (10):\penalty0 3149--3161, 2015.

\bibitem[Lehtinen et~al.(2018)Lehtinen, Munkberg, Hasselgren, Laine, Karras,
  Aittala, and Aila]{Noise2Noise}
Lehtinen, J., Munkberg, J., Hasselgren, J., Laine, S., Karras, T., Aittala, M.,
  and Aila, T.
\newblock Noise2noise: Learning image restoration without clean data, 2018.
\newblock URL \url{https://arxiv.org/abs/1803.04189}.

\bibitem[Li(2002)]{li2002blind}
Li, X.
\newblock Blind image quality assessment.
\newblock In \emph{Proceedings. International Conference on Image Processing},
  volume~1, pp.\  I--I. IEEE, 2002.

\bibitem[Liu et~al.(2013)Liu, Tanaka, and Okutomi]{liu2013single}
Liu, X., Tanaka, M., and Okutomi, M.
\newblock Single-image noise level estimation for blind denoising.
\newblock \emph{IEEE transactions on image processing}, 22\penalty0
  (12):\penalty0 5226--5237, 2013.

\bibitem[Martin et~al.(2001)Martin, Fowlkes, Tal, and Malik]{MartinFTM01}
Martin, D., Fowlkes, C., Tal, D., and Malik, J.
\newblock A database of human segmented natural images and its application to
  evaluating segmentation algorithms and measuring ecological statistics.
\newblock In \emph{Proc. 8th Int'l Conf. Computer Vision}, volume~2, pp.\
  416--423, July 2001.

\bibitem[Metzler et~al.(2018)Metzler, Mousavi, Heckel, and
  Baraniuk]{surebaranuik}
Metzler, C.~A., Mousavi, A., Heckel, R., and Baraniuk, R.~G.
\newblock Unsupervised learning with stein's unbiased risk estimator.
\newblock \emph{arXiv preprint arXiv:1805.10531}, 2018.

\bibitem[Mittal et~al.(2012)Mittal, Moorthy, and Bovik]{mittal2012no}
Mittal, A., Moorthy, A.~K., and Bovik, A.~C.
\newblock No-reference image quality assessment in the spatial domain.
\newblock \emph{IEEE Transactions on image processing}, 21\penalty0
  (12):\penalty0 4695--4708, 2012.

\bibitem[Mohan et~al.(2020)Mohan, Kadkhodaie, Simoncelli, and
  Fernandez-Granda]{biasfree}
Mohan, S., Kadkhodaie, Z., Simoncelli, E.~P., and Fernandez-Granda, C.
\newblock Robust and interpretable blind image denoising via bias-free
  convolutional neural networks.
\newblock In \emph{International Conference on Learning Representations}, 2020.
\newblock URL \url{https://openreview.net/forum?id=HJlSmC4FPS}.

\bibitem[Mohan et~al.(2021)Mohan, Vincent, Manzorro, Crozier, Fernandez-Granda,
  and Simoncelli]{gaintuning}
Mohan, S., Vincent, J., Manzorro, R., Crozier, P., Fernandez-Granda, C., and
  Simoncelli, E.
\newblock Adaptive denoising via gaintuning.
\newblock \emph{Advances in Neural Information Processing Systems}, 34, 2021.

\bibitem[Mohan et~al.(2022)Mohan, Manzorro, Vincent, Tang, Sheth, Simoncelli,
  Matteson, Crozier, and Fernandez-Granda]{nanotci}
Mohan, S., Manzorro, R., Vincent, J.~L., Tang, B., Sheth, D.~Y., Simoncelli,
  E., Matteson, D.~S., Crozier, P.~A., and Fernandez-Granda, C.
\newblock Deep denoising for scientific discovery: A case study in electron
  microscopy.
\newblock \emph{IEEE Transactions on Computational Imaging}, 2022.

\bibitem[Plotz \& Roth(2017)Plotz and Roth]{dnddataset}
Plotz, T. and Roth, S.
\newblock Benchmarking denoising algorithms with real photographs.
\newblock In \emph{Proceedings of the IEEE conference on computer vision and
  pattern recognition}, pp.\  1586--1595, 2017.

\bibitem[Prakash et~al.(2021)Prakash, Delbracio, Milanfar, and
  Jug]{prakash2021interpretable}
Prakash, M., Delbracio, M., Milanfar, P., and Jug, F.
\newblock Interpretable unsupervised diversity denoising and artefact removal.
\newblock In \emph{International Conference on Learning Representations}, 2021.

\bibitem[Ramani et~al.(2008)Ramani, Blu, and Unser]{sureapprox}
Ramani, S., Blu, T., and Unser, M.
\newblock Monte-carlo sure: A black-box optimization of regularization
  parameters for general denoising algorithms.
\newblock \emph{IEEE Transactions on image processing}, 17\penalty0
  (9):\penalty0 1540--1554, 2008.

\bibitem[Ronneberger et~al.(2015)Ronneberger, Fischer, and
  Brox]{ronneberger2015unet}
Ronneberger, O., Fischer, P., and Brox, T.
\newblock U-net: Convolutional networks for biomedical image segmentation.
\newblock In \emph{International Conference on Medical image computing and
  computer-assisted intervention}, pp.\  234--241. Springer, 2015.

\bibitem[Sheth et~al.(2021)Sheth, Mohan, Vincent, Manzorro, Crozier, Khapra,
  Simoncelli, and Fernandez-Granda]{UDVD}
Sheth, D.~Y., Mohan, S., Vincent, J.~L., Manzorro, R., Crozier, P.~A., Khapra,
  M.~M., Simoncelli, E.~P., and Fernandez-Granda, C.
\newblock Unsupervised deep video denoising.
\newblock In \emph{Proceedings of the IEEE/CVF International Conference on
  Computer Vision}, pp.\  1759--1768, 2021.

\bibitem[Soltanayev \& Chun(2018)Soltanayev and Chun]{surekoeanneurips}
Soltanayev, S. and Chun, S.~Y.
\newblock Training deep learning based denoisers without ground truth data.
\newblock In \emph{Advances in Neural Information Processing Systems},
  volume~31, 2018.
\newblock URL
  \url{https://proceedings.neurips.cc/paper/2018/file/c0560792e4a3c79e62f76cbf9fb277dd-Paper.pdf}.

\bibitem[Vincent et~al.(2021)Vincent, Manzorro, Mohan, Tang, Sheth, Simoncelli,
  Matteson, Fernandez-Granda, and Crozier]{nanoparticle}
Vincent, J.~L., Manzorro, R., Mohan, S., Tang, B., Sheth, D.~Y., Simoncelli,
  E.~P., Matteson, D.~S., Fernandez-Granda, C., and Crozier, P.~A.
\newblock Developing and evaluating deep neural network-based denoising for
  nanoparticle tem images with ultra-low signal-to-noise.
\newblock \emph{Microscopy and Microanalysis}, 27\penalty0 (6):\penalty0
  1431--1447, 2021.

\bibitem[Wang et~al.(2004)Wang, Bovik, Sheikh, and Simoncelli]{wang2004image}
Wang, Z., Bovik, A.~C., Sheikh, H.~R., and Simoncelli, E.~P.
\newblock Image quality assessment: from error visibility to structural
  similarity.
\newblock \emph{IEEE transactions on image processing}, 13\penalty0
  (4):\penalty0 600--612, 2004.

\bibitem[Xie et~al.(2020)Xie, Wang, and Ji]{Noise2Same}
Xie, Y., Wang, Z., and Ji, S.
\newblock Noise2same: Optimizing a self-supervised bound for image denoising.
\newblock \emph{Advances in Neural Information Processing Systems},
  33:\penalty0 20320--20330, 2020.

\bibitem[Xu et~al.(2018)Xu, Li, Liang, Zhang, and Zhang]{polyu}
Xu, J., Li, H., Liang, Z., Zhang, D., and Zhang, L.
\newblock Real-world noisy image denoising: A new benchmark.
\newblock \emph{arXiv preprint arXiv:1804.02603}, 2018.

\bibitem[{Yue} et~al.(2020){Yue}, {Cao}, {Liao}, {Chu}, and {Yang}]{rawvideo}
{Yue}, H., {Cao}, C., {Liao}, L., {Chu}, R., and {Yang}, J.
\newblock Supervised raw video denoising with a benchmark dataset on dynamic
  scenes.
\newblock In \emph{2020 IEEE/CVF Conference on Computer Vision and Pattern
  Recognition (CVPR)}, pp.\  2298--2307, 2020.
\newblock \doi{10.1109/CVPR42600.2020.00237}.

\bibitem[Zhang et~al.(2017{\natexlab{a}})Zhang, Zuo, Chen, Meng, and
  Zhang]{dncnn}
Zhang, K., Zuo, W., Chen, Y., Meng, D., and Zhang, L.
\newblock Beyond a gaussian denoiser: Residual learning of deep cnn for image
  denoising.
\newblock \emph{IEEE Transactions on Image Processing}, pp.\  3142--3155,
  2017{\natexlab{a}}.

\bibitem[Zhang et~al.(2017{\natexlab{b}})Zhang, Zuo, Chen, Meng, and
  Zhang]{set12}
Zhang, K., Zuo, W., Chen, Y., Meng, D., and Zhang, L.
\newblock Beyond a gaussian denoiser: Residual learning of deep cnn for image
  denoising.
\newblock \emph{IEEE transactions on image processing}, 26\penalty0
  (7):\penalty0 3142--3155, 2017{\natexlab{b}}.

\bibitem[Zhang et~al.(2019)Zhang, Zhu, Nichols, Wang, Zhang, Smith, and
  Howard]{poissongaussian}
Zhang, Y., Zhu, Y., Nichols, E., Wang, Q., Zhang, S., Smith, C., and Howard, S.
\newblock A poisson-gaussian denoising dataset with real fluorescence
  microscopy images.
\newblock In \emph{Proceedings of the IEEE/CVF Conference on Computer Vision
  and Pattern Recognition}, pp.\  11710--11718, 2019.

\bibitem[Zhussip et~al.(2019)Zhussip, Soltanayev, and
  Chun]{zhussip2019extending}
Zhussip, M., Soltanayev, S., and Chun, S.~Y.
\newblock Extending stein's unbiased risk estimator to train deep denoisers
  with correlated pairs of noisy images.
\newblock \emph{Advances in neural information processing systems}, 32, 2019.

\bibitem[Zwicker et~al.(2015)Zwicker, Jarosz, Lehtinen, Moon, Ramamoorthi,
  Rousselle, Sen, Soler, and Yoon]{MCrendering}
Zwicker, M., Jarosz, W., Lehtinen, J., Moon, B., Ramamoorthi, R., Rousselle,
  F., Sen, P., Soler, C., and Yoon, S.-E.
\newblock Recent advances in adaptive sampling and reconstruction for monte
  carlo rendering.
\newblock In \emph{Computer graphics forum}, volume~34, pp.\  667--681. Wiley
  Online Library, 2015.

\end{thebibliography}
\bibliographystyle{icml2023}

 \newpage
\appendix


\begin{figure}[t]
{\footnotesize
  \begin{center}
  \begin{tabular}{>{\centering\arraybackslash}m{0.44\columnwidth} >{\centering\arraybackslash}m{0.44\columnwidth}   }
& \textbf{Spatial subsampling}
\end{tabular}\\
\begin{tabular}{>{\centering\arraybackslash}m{0.44\columnwidth} >{\centering\arraybackslash}m{0.44\columnwidth}   }
  \includegraphics[width=0.44\columnwidth]{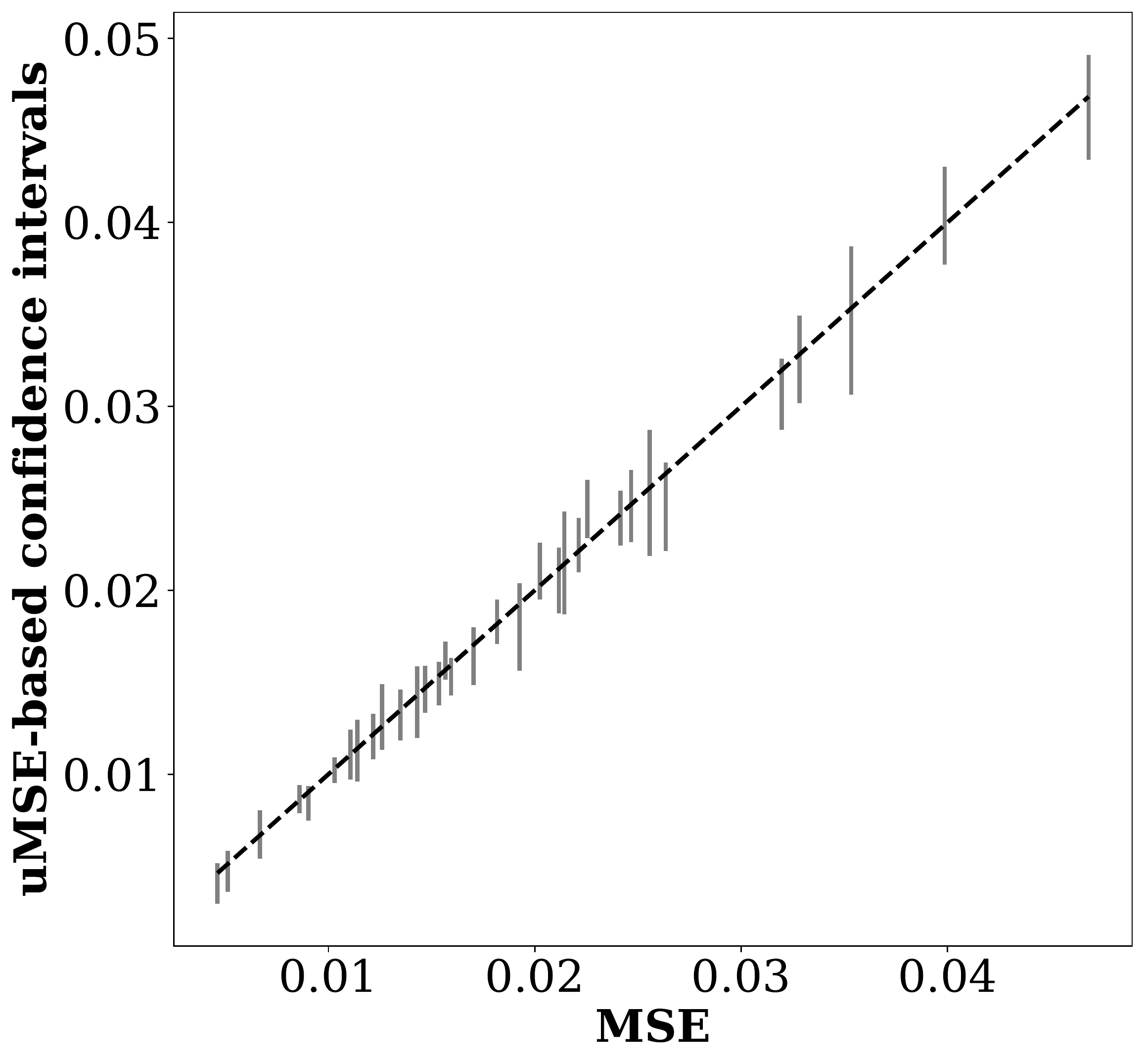} &
  \includegraphics[width=0.42\columnwidth]{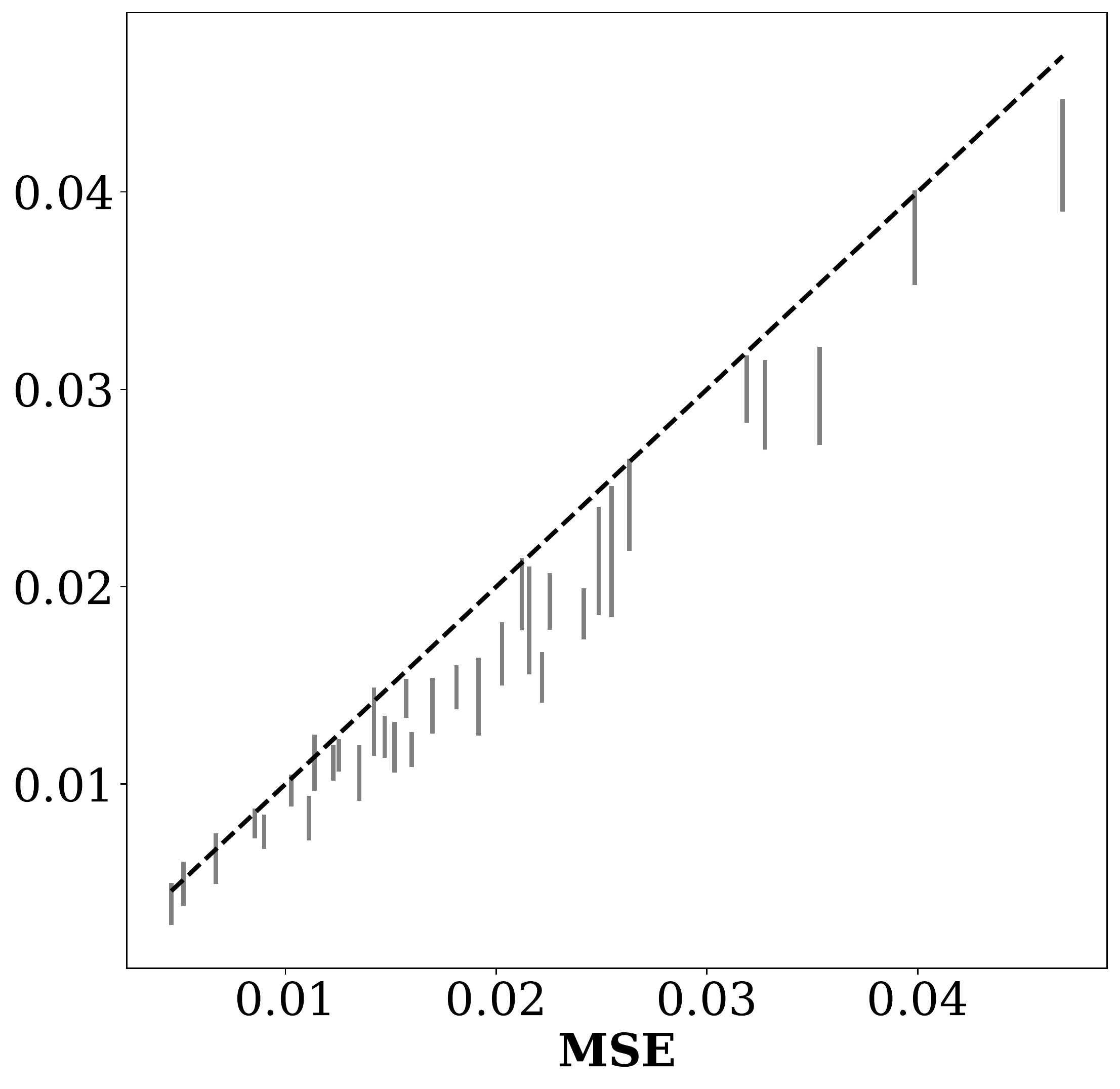}
  \end{tabular}
  \end{center}
  }
  \caption{\textbf{Unsupervised confidence intervals for the MSE.} 
  0.95-Confidence intervals computed following Algorithm~\ref{algo:bootstrap_confidence_intervals} of natural images from the dataset in Section~\ref{sec:controlled_evaluation} corrupted with additive Gaussian noise ($\sigma=55$) and denoised via a standard deep-learning denoiser (DnCNN). The horizontal coordinate of each interval corresponds to the true MSE, so ideally 95\% of the intervals should overlap with the diagonal dashed identity line. The left plot shows that \textcolor{black}{this is the case when the noisy references with the same underlying clean image, demonstrating that Algorithm~\ref{algo:bootstrap_confidence_intervals} produces valid confidence intervals.} The right plot shows confidence intervals based on noisy references obtained via spatial subsampling (right). Spatial subsampling produces a systematic bias in the uMSE, \textcolor{black}{analyzed in Section~\ref{sec:spatial_subsampling_effect} which shifts the intervals away from the identity line when the underlying image content is not sufficiently smooth with respect to the pixel resolution.}
  }
  \label{Fig:confidence_intervals}
\end{figure}

\section{Confidence Intervals for Uncertainty Quantification}
\label{sec:confidence_intervals}
The uMSE and uPSNR are estimates of the MSE and PSNR computed from noisy data, so they are inherently uncertain. We propose to quantify this uncertainty using confidence intervals obtained via bootstrapping. 


\begin{algorithm*}[tb] \caption{Bootstrap confidence intervals}\label{algo:bootstrap_confidence_intervals}
We assume access to a noisy input signal $y \in \R^{n}$ and three noisy references $a$, $b$, $c\in \R^{n}$. For $1 \leq k \leq K$, build an index set $\mathcal{B}_k$ by sampling $n$ entries from $\keys{1,2,\ldots,n}$ uniformly and independently at random with replacement. Then set  
\begin{align}
\uMSE_k &:= \frac{1}{n}\sum_{i \in \mathcal{B}_k} \left(  a_i - f(y)_i\right)^2 - \frac{\left(b_i - c_i\right)^2}{2}, \nonumber\\
\uPSNR_k &:= 10 \log \brac{\frac{\mathrm{M}^2}{\uMSE_k}}
.
\end{align}
To build $1-\alpha$ confidence intervals, $0 < \alpha < 1$ for the uMSE and uPSNR set
\begin{align}
    \mathcal{I}_{\uMSE}& := \sqbr{q_{\alpha/2}^{\uMSE},q_{1-\alpha/2}^{\uMSE}}, \qquad
    \mathcal{I}_{\uPSNR} := \sqbr{q_{\alpha/2}^{\uPSNR},q_{1-\alpha/2}^{\uPSNR}},
\end{align}
where $\smash{q_{\alpha/2}^{\uMSE}}$ and $q_{1-\alpha/2}^{\uMSE}$ are the $\alpha/2$ and $1-\alpha/2$ quantiles of the set $\keys{\uMSE_1,\ldots,\uMSE_K}$, and $\smash{q_{\alpha/2}^{\uPSNR}}$ and $q_{1-\alpha/2}^{\uPSNR}$ are the $\alpha/2$ and $1-\alpha/2$ quantiles of the set $\keys{\uPSNR_1,\ldots,\uPSNR_K}$.
\end{algorithm*}

Theorem~\ref{theorem:uMSE_CLT} establishes that the uMSE is asymptotically normal. In addition, our numerical experiments show that the distribution of the uMSE is well approximated as Gaussian even for relatively small values of $n$ (see Figure~\ref{Fig:consistency}). As a result, the bootstrap confidence intervals for the uMSE produced by Algorithm~\ref{algo:bootstrap_confidence_intervals} contain the MSE with probability approximately $1-\alpha$ (see Section 13.3 in \cite{efron1994introduction}). This also implies that the PSNR belongs to the bootstrap confidence intervals for the uPSNR with probability approximately $1-\alpha$ because the function that maps the uMSE to the uPSNR and the MSE to the PSNR is monotone (see Section 13.6 in \cite{efron1994introduction}). 

Figure~\ref{Fig:confidence_intervals} shows a numerical verification that the proposed approach yields valid confidence intervals for MSE in the controlled experiments of Section~\ref{sec:controlled_evaluation}, where the ground-truth clean images are known. It also shows that the bias introduced by spatial subsampling for natural images (see Section~\ref{sec:spatial_subsampling}), shifts the confidence intervals away from the true MSE.

\section{Spatial Subsampling}
\label{sec:spatial_subsampling}
In this section, we propose a method to obtain the noisy references required to estimate uMSE and uPSNR. We focus our discussion on images, but similar ideas can be applied to videos and time-series data. In order to simplify the notation, we consider $N \times N$ images. The $n$-dimensional signals in other sections can be interpreted as vectorized versions of these images with $n = N^2$. 

We assume that we have available a noisy image $I $ of dimensions $2N\times 2N$. We extract four noisy references from $I$ by spatial subsampling. The method is inspired by the Neighbor2Neighbor unsupervised denoising method, which uses random subsampling to generate noisy image pairs during training~\cite{Neighbor2Neighbor}. Figure~\ref{fig:subsampling_scheme} illustrates the approach.

\begin{algorithm*}[tb]\caption{Decomposition via spatial subsampling}
\label{def:spatial_subsampling}
Given an image $I \in \R^{2N\times 2N}$, let 
\begin{alignat}{3}
    S_{1}(i,j) &:= I\brac{2i-1, 2j-1}, \qquad && S_{2}(i,j)  := I\brac{2i, 2j-1},\nonumber\\
    \quad S_{3}(i,j)  &:= I\brac{2i-1, 2j},\qquad && S_{4}(i,j)  := I\brac{2i, 2j}, \qquad 1\le i,j\le n.
\end{alignat}
The spatial decomposition of $I$ is equal to four sub-images $Y$, $A$, $B$, $C\in \R^{N\times N}$ where $Y(i,j)$, $A(i,j)$, $B(i,j)$, $C(i,j)$ are set equal to  $S_{1}(i,j)$, $S_{2}(i,j)$, $S_{3}(i,j)$, $S_{4}(i,j)$, or to a random permutation of the four values.
\end{algorithm*}

\textcolor{black}{
\section{Effect Of Spatial Subsampling on the Proposed Metrics}
}
\label{sec:spatial_subsampling_effect}
\textcolor{black}{Spatial subsampling generates four noisy sub-images that correspond to the noisy input $y$ and the three noisy references $a$, $b$ and $c$ in Definition~\ref{def:uMSE}. In our derivation of the uMSE, we assume that these four noisy signals are generated by corrupting the same ground-truth clean signal with independent noise. This holds for the sub-images in Definition~\ref{def:spatial_subsampling} if (1) the underlying clean image is smooth, so that adjacent pixels are approximately equal, and (2) the noise is pixel-wise independent. Tables~\ref{TGauss} and~\ref{TGaussMSE}, and Figures~\ref{fig:spatial_subsampling_bias} and \ref{Fig:confidence_intervals} show that these assumptions don't hold completely for natural images, which introduces a bias in the uMSE. This bias also exists for the electron-microscopy images but it is much smaller, because the images are smoother with respect to the pixel resolution. Figure~\ref{fig:difference_histograms} shows the relative root mean square error (RMSE) between clean copies of images obtained via spatial subsampling following Algorithm~\ref{def:spatial_subsampling} for the natural images (left) and electron-microscopy images (right) used for the experiments in Section~\ref{sec:controlled_evaluation}. The difference is substantially larger in natural images, because they are less smooth with respect to the pixel resolution than the electron-microscopy images.}  

\textcolor{black}{In order to further analyze the effect of spatial subsampling on the proposed metrics, we performed a controlled experiment where we applied different degrees of smoothing (via a Gaussian filter) to a natural image. We evaluated the relative RMSE of the corresponding subsampled references. In addition, we fed the smoothed images contaminated by noise into a denoiser and compared the uMSE of the denoised image with its true MSE. The results are shown in Figure~\ref{fig:subsampling_loss}. We observe that smoothing results in a stark decrease of both the relative RMSE and the uMSE, suggesting that spatial subsampling is effective as long as the underlying image content is sufficiently smooth with respect to the pixel resolution (as supported also by the results on the electron-microscopy data).}



\begin{figure}[t]
    \begin{tabular}{>{\centering\arraybackslash}m{0.3\linewidth} 
    >{\centering\arraybackslash}m{0.3\linewidth} >{\centering\arraybackslash}m{0.3\linewidth} }
       \includegraphics[width=\linewidth]
       {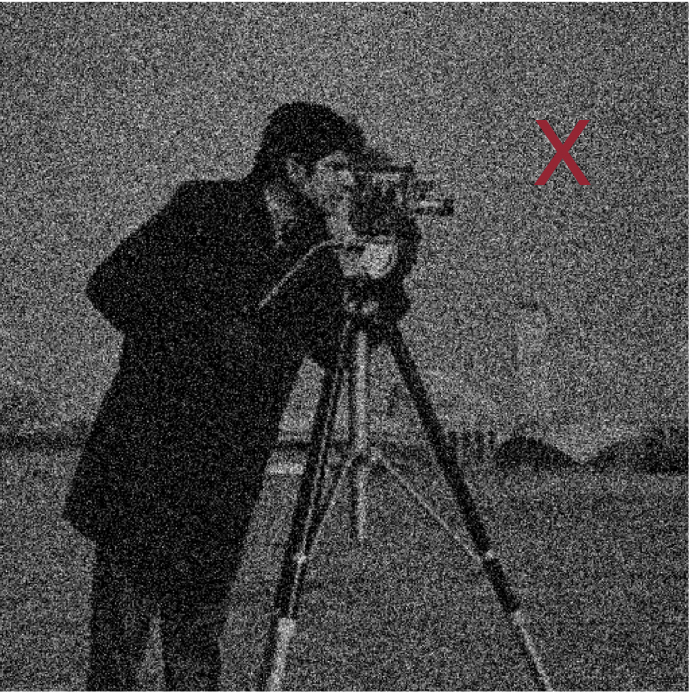}  &  \includegraphics[width=\linewidth]{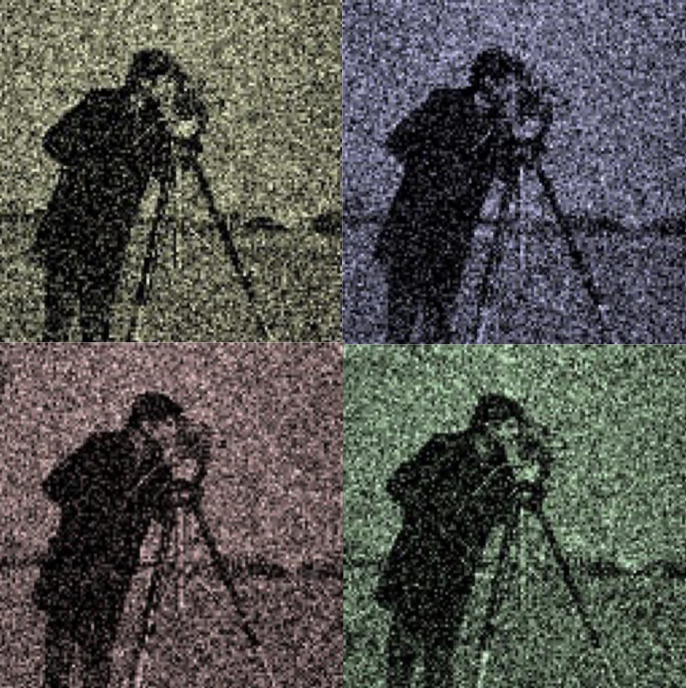}&\includegraphics[width=\linewidth]{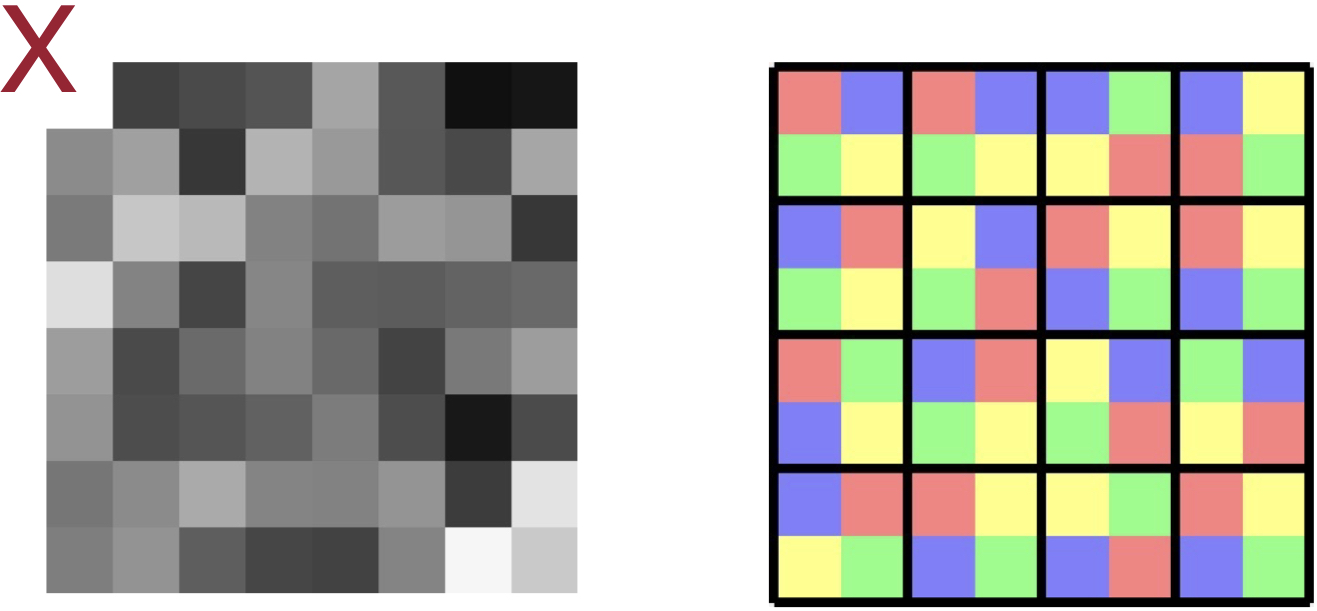}\\
       Noisy image & Subsampled references&Subsampling scheme
    \end{tabular}
    \caption{\textbf{Spatial subsampling} uses a single noisy image (left) to
    extract four noisy references (center) corresponding approximately to the same underlying clean image, but with independent noise. The pixels of each $2\times 2$ block are assigned to each of the references either deterministically, or at random (right). 
    }
    \label{fig:subsampling_scheme}
\end{figure}

\begin{figure}[t]
{\footnotesize
  \begin{center}
  \begin{tabular}{>{\centering\arraybackslash}m{0.44\linewidth} >{\centering\arraybackslash}m{0.44\linewidth}   }
\textbf{Natural images} & \textbf{Electron microscopy}
\end{tabular}\\
\begin{tabular}{>{\centering\arraybackslash}m{0.44\linewidth} >{\centering\arraybackslash}m{0.44\linewidth}   }
  \includegraphics[width=\linewidth]{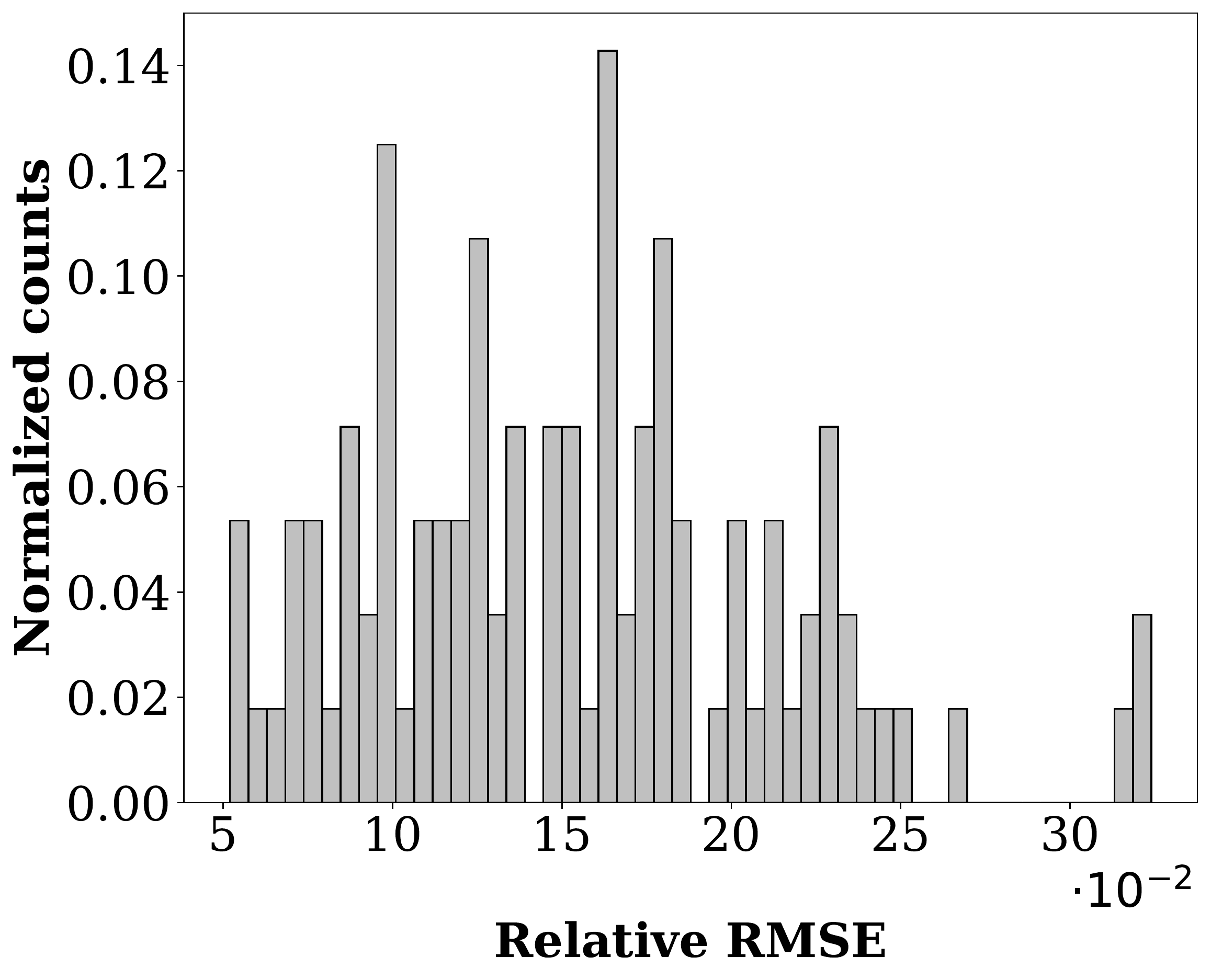} &
  \includegraphics[width=\linewidth]{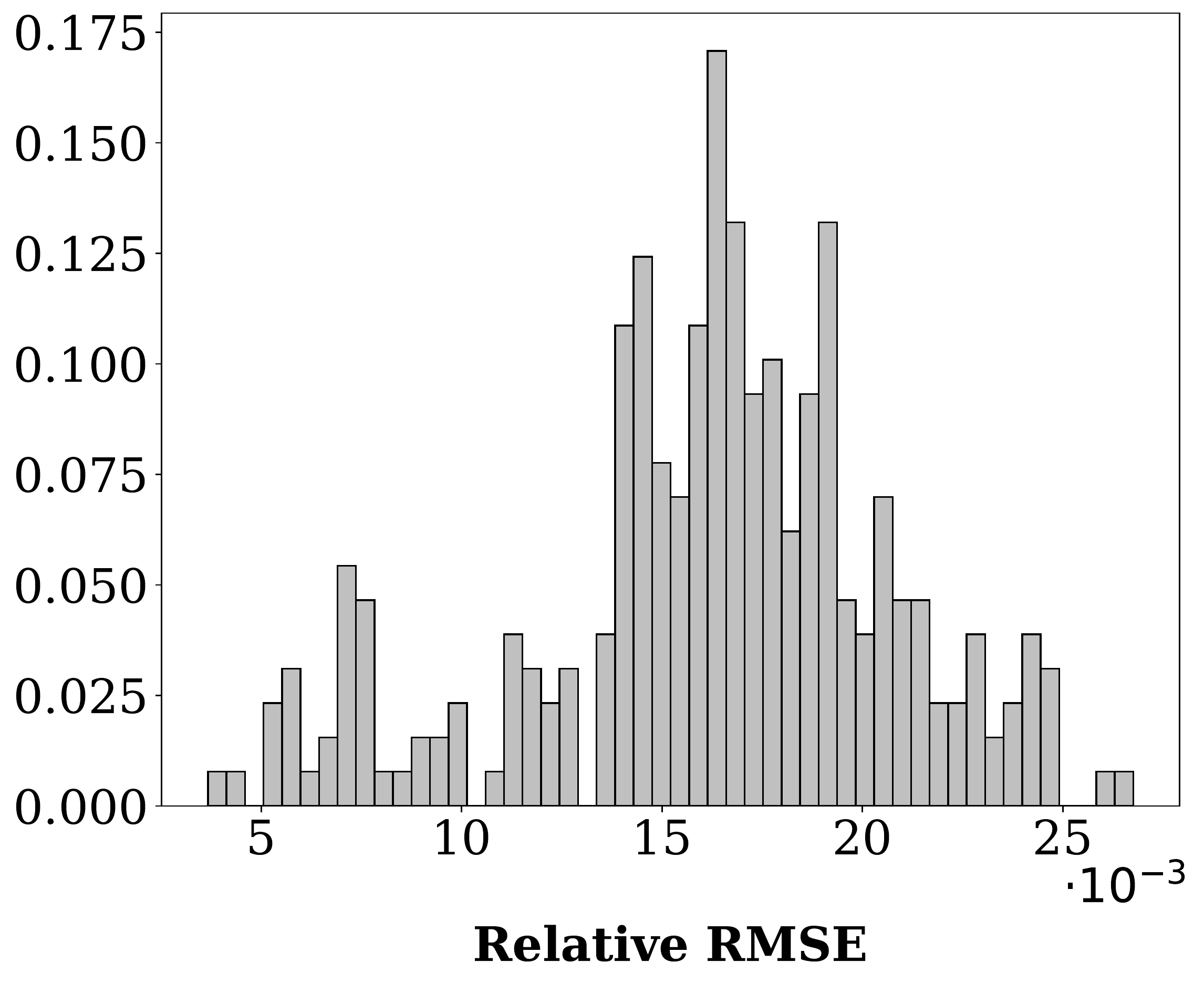}
  \end{tabular}
  \end{center}
  }
  \caption{\textbf{Effect of spatial subsampling.} The histograms show the relative root mean square error (RMSE) between clean copies of images obtained via spatial subsampling following Algorithm~\ref{def:spatial_subsampling} for the natural images (left) and electron-microscopy images (right) used for the experiments in Section~\ref{sec:controlled_evaluation}. The difference is substantially larger in natural images, \textcolor{black}{because they are less smooth with respect to the pixel resolution than the electron-microscopy images.} 
  }
  \label{fig:difference_histograms}
\end{figure}

\begin{figure}[t]
       \includegraphics[width=\linewidth]
       {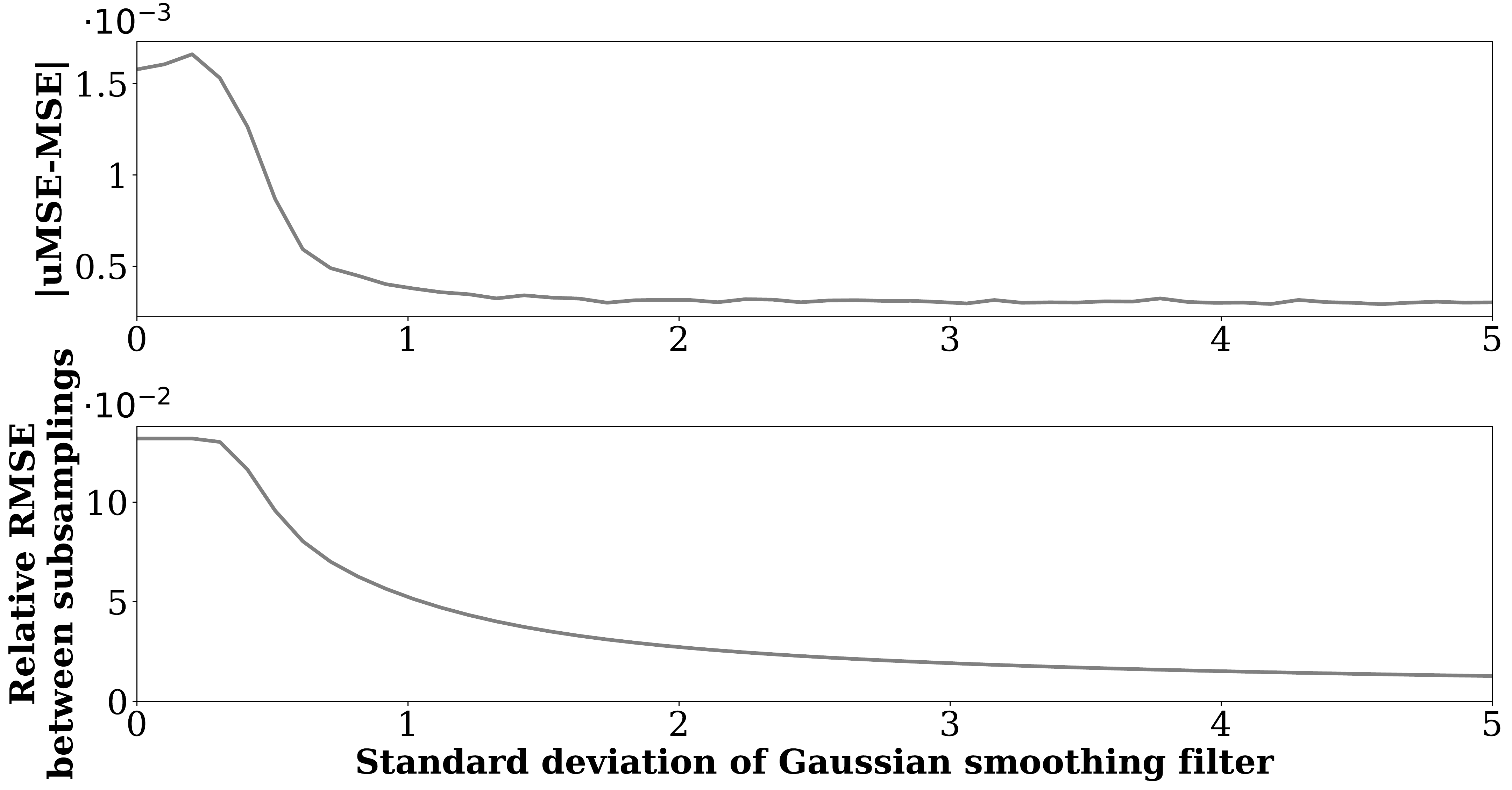} 
    
    \caption{\textcolor{black}{\textbf{The bias produced by spatial subsampling is related to image smoothness.} The top graph shows the relative RMSE of the subsampled references corresponding to a natural image (the same as in Figure \ref{fig:spatial_subsampling_bias}), after smoothing with a Gaussian filter with different standard deviations. In order to evaluate the effect of image smoothness on the uMSE, we fed the smoothed images contaminated by Gaussian i.i.d. noise with standard deviation equal to 55 into a DnCNN denoiser (as in Figure \ref{fig:spatial_subsampling_bias}). The bottom graph shows the absolute difference between the MSE and the uMSE as a function of the smoothness of the underlying clean image. Smoothing results in a clear decrease of both the relative RMSE and the uMSE, suggesting that spatial subsampling is effective as long as the underlying image content is sufficiently smooth with respect to the pixel resolution.
    }}
    \label{fig:subsampling_loss}
\end{figure}

\section{Comparison With Averaging-Based MSE Estimation}
\label{app:uMSEvsMSEavg}
Existing denoising benchmarks containing images corrupted with real noise perform evaluation by computing the MSE or PSNR using an estimate of the \emph{clean} image obtained by averaging multiple noisy frames~\cite{sidd,dnddataset,polyu,poissongaussian}. In this section, we show both theoretically and numerically that this approach produces a poor estimate of the MSE and PSNR, unless the signal-to-noise ratio of the data is very low, or we use a large number of noisy frames.

The following lemma shows that in contrast to our proposed metric uMSE, the approximation to the MSE obtained via averaging is biased and \emph{not consistent}, in the sense that it does not converge to the true MSE when the number of pixels tends to infinity. The metric does converge to the MSE as the number of noisy images tends to infinity, but this is of little practical significance, since this number cannot be increased arbitrarily in actual applications.

\begin{lemma}[MSE via averaging]
Consider a clean signal $x \in \R^{n}$, an estimate $f(y) \in \R^{n}$ (obtained by applying a denoiser $f$ to the data $y\in \R^{n}$), and $m$ noisy references
\begin{align}
    \tilde{r}^{[m]}_i := x_i + \tilde{z}^{[m]}_i, \quad 1 \leq i \leq n, 1 \leq j \leq m,
\end{align}
where $\tilde{z}^{[m]}_i$, $1 \leq i \leq n, 1 \leq j \leq m$, are i.i.d. zero-mean Gaussian random variables with variance $\sigma^2$. 
We define the averaging-based MSE as
\begin{align}
    \text{MSE}_{\text{avg}}:= \frac{1}{n}\sum_{i=1}^{n} \brac{\frac{1}{m}\sum_{j=1}^{m}\tilde{r}_i^{[m]} -f(y)_i}^2.
\end{align}
The $\text{MSE}_{\text{avg}}$ is a biased estimator of the true MSE
\begin{align}
    \text{MSE} := \frac{1}{n}\sum_{i=1}^{n}\brac{x_i -f(y)_i}^2,
\end{align}
since its mean equals
\begin{align}
    \E\sqbr{ \text{MSE}_{\text{avg}} } = \text{MSE}  + \frac{\sigma^2}{m}.
\end{align}
\end{lemma}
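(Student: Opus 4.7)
The plan is a direct first-moment computation. I would begin by introducing the averaged noise $\bar{z}_i := \frac{1}{m}\sum_{j=1}^{m}\tilde{z}_i^{[j]}$, so that the per-pixel summand of $\text{MSE}_{\text{avg}}$ reduces to $\brac{(x_i - f(y)_i) + \bar{z}_i}^2$. Expanding the square produces three terms: the deterministic squared error $(x_i - f(y)_i)^2$, a linear cross term $2\bar{z}_i(x_i - f(y)_i)$, and a pure noise term $\bar{z}_i^2$.

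Next I would take expectations term by term, using the convention adopted at the start of Section~\ref{sec:statistical_properties} that $x$, $y$, and $f(y)$ are deterministic. The first term contributes $(x_i - f(y)_i)^2$ unchanged. The cross term vanishes because $\E[\bar{z}_i]=0$ by linearity and zero-meanness of each $\tilde{z}_i^{[j]}$, and $x_i - f(y)_i$ can be pulled out of the expectation as a deterministic factor. The noise term satisfies $\E[\bar{z}_i^2] = \Var[\bar{z}_i] = \sigma^2/m$ because $\bar{z}_i$ is an average of $m$ independent zero-mean random variables of variance $\sigma^2$ (only independence and finite variance are actually needed here; Gaussianity plays no role). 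Averaging over $i=1,\dots,n$ and invoking linearity of expectation then yields $\E[\text{MSE}_{\text{avg}}] = \text{MSE} + \sigma^2/m$, which is the claim.

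There is essentially no obstacle: the whole argument is a two-line moment calculation once $\bar{z}_i$ has been isolated. The only subtlety worth highlighting in the write-up is why the cross term vanishes \emph{exactly} rather than approximately — this follows from the paper's modeling convention that $x$ and $f(y)$ are non-random with respect to the references, in contrast to the approximate cancellation argument used informally around Eq.~\ref{eq:mse_plus_wi}. The main value of the lemma is the contrast it provides with Theorem~\ref{theorem:uMSE_consistent}: the bias of $\text{MSE}_{\text{avg}}$ decays only as $1/m$ in the number of averaged frames and does not shrink at all with the number of pixels $n$, whereas $\ruMSE$ is exactly unbiased and has variance $O(1/n)$. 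I would be sure to state this contrast explicitly, since it is what justifies the preceding claim that averaging-based MSE estimation is not a consistent estimator in the relevant asymptotic regime.
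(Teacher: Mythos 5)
Your proposal is correct and follows essentially the same route as the paper's proof: isolate the averaged noise term, expand the square, kill the cross term using zero mean and the determinism of $x$ and $f(y)$, and evaluate $\E[\bar{z}_i^2]=\sigma^2/m$ via the variance of an average of independent terms. Your added observations (that Gaussianity is not actually needed, and that the bias decays in $m$ but not in $n$) are accurate and consistent with the discussion surrounding the lemma.
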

\begin{proof}
By the assumptions, and linearity of expectation,
\begin{align}
    &\E\sqbr{ \text{MSE}_{\text{avg}} } = \E\sqbr{ \frac{1}{n}\sum_{i=1}^{n} \brac{\frac{1}{m}\sum_{j=1}^{m}\tilde{z}_i^{[m]} + x_i -f(y)_i}^2 }\nonumber\\
    &= \frac{1}{n}\sum_{i=1}^{n} \brac{x_i -f(y)_i}^2 +  \frac{1}{n}\sum_{i=1}^{n} \E\sqbr{\brac{\frac{1}{m}\sum_{j=1}^{m}\tilde{z}_i^{[m]}}^2} \nonumber\\
    &= \text{MSE}  + \frac{\sigma^2}{m}.\label{21}
\end{align}
\end{proof}



As established in Section~\ref{sec:statistical_properties}, the proposed uMSE metric is an unbiased estimator of the MSE that is consistent as $n\rightarrow \infty$ and only requires $m:=3$ noisy references. Figure~\ref{fig:MSEa} shows a numerical comparison between uMSE and the averaging-based MSE for one of the natural images used in the experiments of Section~\ref{sec:controlled_evaluation}. We observe that averaging-based MSE requires $m:=1510$ in order to match the accuracy achieved by the uMSE with only three noisy references. 
\begin{figure}[t]
    \centering
    \includegraphics[width=0.85\linewidth]{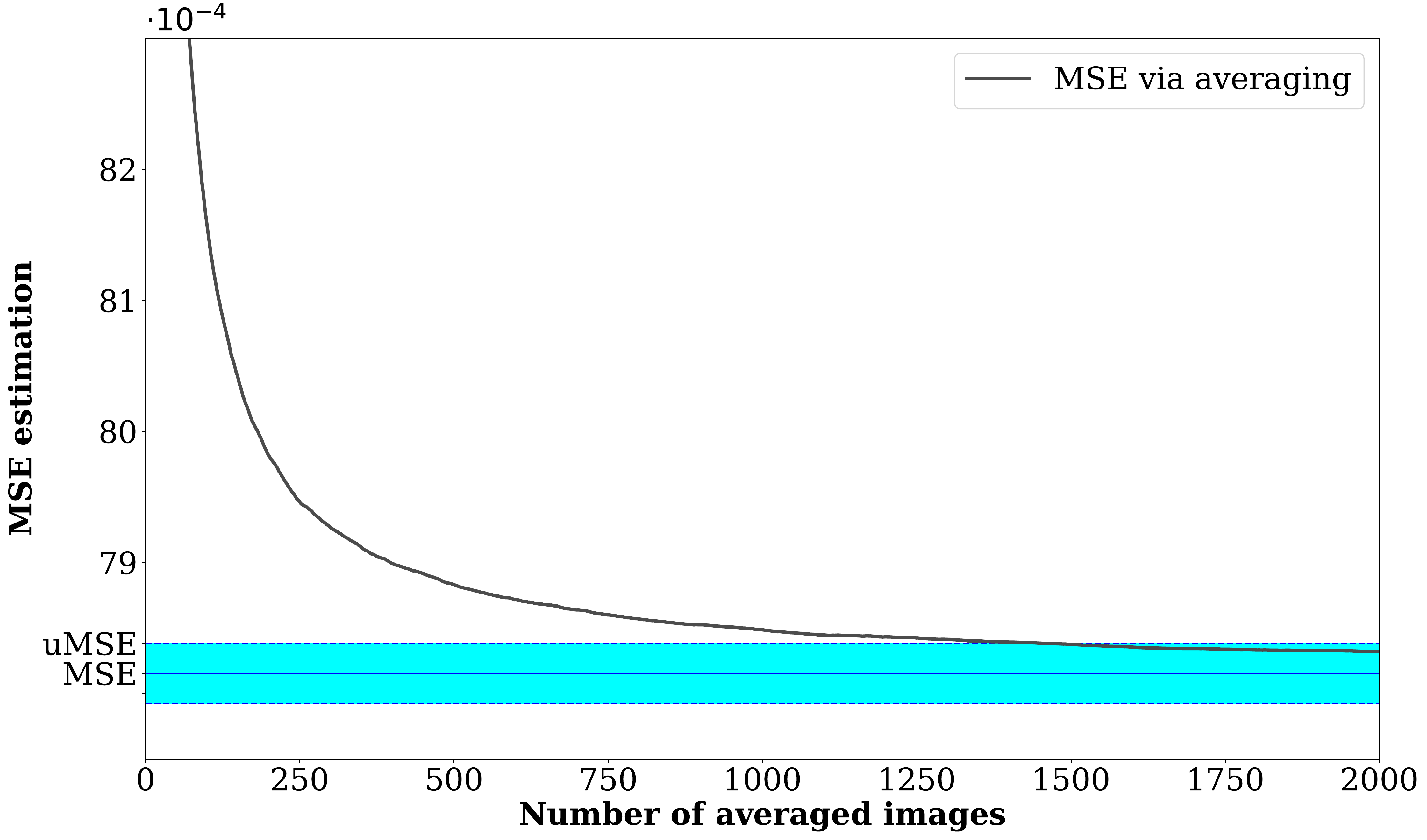}
    \caption{\textbf{Comparison of averaging-based MSE and uMSE.} The plot shows the MSE, uMSE and averaging-based MSE$_{\text{avg}}$ corresponding to a natural image corrupted by Gaussian ($\sigma=15$, to simulate a noise level similar to that of the RAW videos in Section~\ref{sec:raw_videos}) and denoised with a standard deep-learning denoiser (DnCNN). The uMSE is computed with 3 noisy references. The averaging-based  MSE$_{\text{avg}}$ is computed with different number of noisy references indicated by the horizontal axis. The blue shaded region corresponds to an error that is smaller or equal to the error incurred by the uMSE. Averaging-based MSE requires 1,510 noisy images to achieve this accuracy.
    }
    \label{fig:MSEa}
\end{figure}

\section{Proofs}
\subsection{Proof of Theorem~\ref{theorem:uMSE_unbiased}}
\label{proof:uMSE_unbiased}
The following lemma shows that each individual term in the uMSE is unbiased.
\begin{lemma}[Proof in Section~\ref{proof:usei_unbiased}]
\label{lemma:usei_unbiased}
If Conditions 1 and 2 in Section~\ref{sec:statistical_properties} hold,
\begin{align}
    \E\sqbr{\uSE_i} = \SE_i, \qquad 1 \leq i \leq n.
\end{align}
\end{lemma}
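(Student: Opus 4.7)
The plan is to verify the identity $\E[\uSE_i] = \SE_i$ by computing the expectation of each of the two terms in the definition of $\uSE_i$ separately and showing that the noise-variance contributions cancel. Throughout, $y$ and therefore $f(y)_i$ are deterministic by assumption, so these quantities can be pulled outside expectations freely.

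First I would handle the term $\E[(\tilde a_i - f(y)_i)^2]$. Writing $\tilde a_i - f(y)_i = (\tilde a_i - x_i) + (x_i - f(y)_i)$, expanding the square, and taking expectations using Condition~2 ($\E[\tilde a_i] = x_i$) to kill the cross term yields
\begin{align}
\E\!\left[(\tilde a_i - f(y)_i)^2\right] = \var[\tilde a_i] + (x_i - f(y)_i)^2 = \var[\tilde a_i] + \SE_i.
\end{align}

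Next I would tackle $\E[(\tilde b_i - \tilde c_i)^2/2]$. Since $\E[\tilde b_i] = \E[\tilde c_i] = x_i$ by Condition~2, the random variable $\tilde b_i - \tilde c_i$ has mean zero, so its second moment equals its variance. By Condition~1, $\tilde b_i$ and $\tilde c_i$ are independent, giving
\begin{align}
\E\!\left[\tfrac{1}{2}(\tilde b_i - \tilde c_i)^2\right] = \tfrac{1}{2}\bigl(\var[\tilde b_i] + \var[\tilde c_i]\bigr).
\end{align}

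Finally, combining the two expressions yields $\E[\uSE_i] = \SE_i + \var[\tilde a_i] - \tfrac{1}{2}(\var[\tilde b_i] + \var[\tilde c_i])$. For the correction to fully debias the estimator, the noise variance at pixel $i$ must agree across the three references, i.e.\ $\var[\tilde a_i] = \var[\tilde b_i] = \var[\tilde c_i]$. This is implicit in the modeling (the additive Gaussian and Poisson examples both produce i.i.d.\ references at each pixel), so I would state this identical-distribution assumption explicitly at the outset of the proof and then the variance terms cancel, giving $\E[\uSE_i] = \SE_i$. There is no real obstacle beyond being careful about this variance-matching point; the rest is linearity of expectation and the mean/independence hypotheses.
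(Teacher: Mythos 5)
Your proof is correct and follows essentially the same route as the paper's: decompose each term around the clean value $x_i$, use Condition~2 to kill the cross term and Condition~1 to split the variance of $\tilde b_i - \tilde c_i$, and cancel the variance contributions. Your explicit remark that the cancellation requires $\var[\tilde a_i]=\var[\tilde b_i]=\var[\tilde c_i]$, which is not literally implied by Conditions 1 and 2 as stated, is a fair and worthwhile point; the paper makes this assumption tacitly, asserting equal variances in its notation paragraph and writing a single central moment $\mu_i^{[k]}$ for all three references.
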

The proof then follows immediately from linearity of expectation,
\begin{align}
   & \E\sqbr{\ruMSE} = \E\sqbr{\frac{1}{n}\sum_{i=1}^{n}\uSE_i} = \frac{1}{n}\sum_{i=1}^{n}\E\sqbr{\uSE_i} \nonumber\\&= \frac{1}{n}\sum_{i=1}^{n}\SE_i = \MSE.
\end{align}
\subsection{Proof of Theorem~\ref{theorem:uMSE_consistent}}
\label{proof:uMSE_consistent}
The following lemma bounds the variance of each individual term in the uMSE.
\begin{lemma}[Proof in Section~\ref{proof:var_unbiased}]
\label{lemma:var_unbiased}
Under the assumptions of the theorem,
\begin{align}
    \var \sqbr{\uSE_i} \leq 14 \alpha, \qquad 1 \leq i \leq n.
\end{align}
\end{lemma}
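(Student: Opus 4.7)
The plan is to decompose $\uSE_i$ into two pieces that are independent by Condition 1, bound the variance of each, and then sum. Writing
\begin{align}
    \uSE_i = \brac{\tilde{a}_i - f(y)_i}^2 - \tfrac{1}{2}\brac{\tilde{b}_i - \tilde{c}_i}^2,
\end{align}
and noting that the first term depends only on $\tilde{a}_i$ while the second depends only on $(\tilde{b}_i,\tilde{c}_i)$, independence gives
\begin{align}
    \var\sqbr{\uSE_i} = \var\sqbr{\brac{\tilde{a}_i - f(y)_i}^2} + \tfrac{1}{4}\var\sqbr{\brac{\tilde{b}_i-\tilde{c}_i}^2}.
\end{align}
So it suffices to bound each piece by a constant multiple of $\alpha$.

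For the first piece, I would introduce the centered noise $\tilde{w}_i:=\tilde{a}_i - x_i$ and the deterministic error $e_i:=x_i - f(y)_i$, so that $|e_i|\leq \gamma$ and $\E[\tilde{w}_i^k]=\mu_i^{[k]}$. Expanding
\begin{align}
    \brac{\tilde{a}_i - f(y)_i}^2 = \tilde{w}_i^2 + 2 e_i \tilde{w}_i + e_i^2,
\end{align}
the constant $e_i^2$ drops out of the variance, and a standard bilinear expansion yields
\begin{align}
    \var\sqbr{\brac{\tilde{a}_i - f(y)_i}^2} = \mu_i^{[4]} - (\mu_i^{[2]})^2 + 4 e_i^2 \mu_i^{[2]} + 4 e_i \mu_i^{[3]}.
\end{align}
Each term is then bounded using the hypothesis $\max\{\mu_i^{[4]},\, |\mu_i^{[3]}|\gamma,\, \gamma^4\}\leq \alpha$ together with the Cauchy--Schwarz/Jensen bound $(\mu_i^{[2]})^2\leq \mu_i^{[4]}\leq \alpha$; in particular $4 e_i^2 \mu_i^{[2]}\leq 4\gamma^2\sqrt{\mu_i^{[4]}}\leq 4\sqrt{\alpha}\cdot\sqrt{\alpha}=4\alpha$ and $|4 e_i \mu_i^{[3]}|\leq 4\alpha$.

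For the second piece, set $\tilde{v}_i:=\tilde{b}_i - x_i$ and $\tilde{u}_i:=\tilde{c}_i - x_i$, so $(\tilde{b}_i-\tilde{c}_i)^2 = (\tilde{v}_i-\tilde{u}_i)^2$. Expanding the fourth power and using independence with zero mean kills every odd cross term, giving
\begin{align}
    \E\sqbr{(\tilde{v}_i-\tilde{u}_i)^4} = 2\mu_i^{[4]} + 6(\mu_i^{[2]})^2, \qquad
    \E\sqbr{(\tilde{v}_i-\tilde{u}_i)^2} = 2\mu_i^{[2]},
\end{align}
so $\var[(\tilde{b}_i-\tilde{c}_i)^2] = 2\mu_i^{[4]} + 2(\mu_i^{[2]})^2 \leq 4\mu_i^{[4]}\leq 4\alpha$, and hence $\tfrac{1}{4}\var[(\tilde{b}_i-\tilde{c}_i)^2]\leq \alpha$. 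Summing the two contributions yields a bound of the form $c\,\alpha$ with $c$ at most $14$ (the calculation above actually gives $10\alpha$; slight slack in one of the inequalities produces the $14$ appearing in the statement).

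The proof is essentially careful bookkeeping of moments, and I do not anticipate a real obstacle. The one spot that requires attention is the mixed term $4 e_i \mu_i^{[3]}$, which can be of either sign: here I must read the hypothesis $\mu_i^{[3]}\gamma\leq \alpha$ as a bound on $|\mu_i^{[3]}|\gamma$ (otherwise the term could be unbounded below), and then apply the triangle inequality before invoking the hypothesis. The other step worth highlighting is the repeated use of the Cauchy--Schwarz inequality $(\mu_i^{[2]})^2 \leq \mu_i^{[4]}$ to convert second central moments into fourth central moments, which is what allows every term in the expansion to be absorbed into the constant $\alpha$.
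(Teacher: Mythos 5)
Your proposal is correct and follows essentially the same route as the paper: split $\var[\uSE_i]$ into the two independent pieces, expand in the centered noise variables, and absorb every moment term into $\alpha$ via the hypothesis and $(\mu_i^{[2]})^2\leq\mu_i^{[4]}$. The only difference is that you compute the two variances exactly instead of upper-bounding each by its mean square (the paper uses $\var[X]\leq\E[X^2]$ before expanding), which is why you land on the slightly tighter constant $10\alpha$ rather than the paper's $14\alpha$; your remark that $\mu_i^{[3]}\gamma\leq\alpha$ must be read as $|\mu_i^{[3]}|\gamma\leq\alpha$ applies equally to the paper's own bound.
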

The proof then follows from the fact that the variance of a sum of independent random variables is equal to the sum of their variances,
\begin{align}
&\E\sqbr{ \brac{\ruMSE -  \MSE}^2} =     \var\sqbr{\ruMSE}\nonumber\\ = &\var \sqbr{ \frac{1}{n}\sum_{i=1}^{n} \uSE_i}  = \frac{1}{n^2} \sum_{i=1}^{n} \var \sqbr{\uSE_i}\nonumber\\&  \leq \frac{14 \alpha}{n}.
\end{align}

The bound immediately implies convergence in mean square as $n \rightarrow \infty$, which in turn implies convergence in probability.

\subsection{Proof of Corollary~\ref{cor:uPNSR_consistency}}
\label{proof:uPSNR_consistent}
The uPSNR is a continuous function of the uMSE, which is the same function mapping the MSE to the PSNR. The result then follows from Theorem~\ref{theorem:uMSE_consistent} and the continuous mapping theorem.

\subsection{Proof of Theorem~\ref{theorem:uMSE_CLT}}
\label{proof:uMSE_CLT}
To prove Theorem~\ref{theorem:uMSE_CLT}, we express the uMSE as a sum of zero-mean random variables,
\begin{align}
\ruMSE &= \sum_{i=1}^{n}\tilde{t}_i,\qquad    \tilde{t}_i := \frac{\uSE_i - \SE_i}{n},
\end{align}
and apply the following version of the Lyapunov central limit theorem. 
 
\begin{theorem}[Theorem 9.2 \cite{breiman1992probability}]
\label{theorem:lyapunov_clt}
Let $\tilde{t}_i$, $1\leq i \leq n$, be independent zero-mean random variables with bounded second and third moments, and let
\begin{align}
s_n^2 := \sum_{i=1}^{n} \E\sqbr{\tilde{t}_i^2}.  
\end{align}
If the Lyapunov condition
\begin{align}
\lim_{n\rightarrow \infty} \frac{\sum_{i=1}^{n}\E\sqbr{\abs{\tilde{t}_i}^3}}{s_n^3}= 0   
\end{align}
holds, then the random variable
\begin{align}
    \frac{1}{s_n}\sum_{i=1}^{n}\tilde{t}_i
\end{align}
converges in distribution to a standard Gaussian as $n \rightarrow \infty$.
\end{theorem}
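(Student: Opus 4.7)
The plan is to rewrite $\ruMSE - \MSE$ as a sum of independent zero-mean random variables and then invoke the Lyapunov central limit theorem (Theorem~9.2 of \cite{breiman1992probability}, reproduced in the excerpt) as a black box. To build the decomposition, I would introduce the centered noise components $\tilde{w}_i := \tilde{a}_i - x_i$, $\tilde{v}_i := \tilde{b}_i - x_i$, $\tilde{u}_i := \tilde{c}_i - x_i$ (which have zero mean by Condition 2) and the denoising errors $e_i := x_i - f(y)_i$. A direct expansion yields
\begin{align}
\uSE_i - \SE_i &= \bigl(\tilde{w}_i^2 - \mu_i^{[2]}\bigr) + 2 e_i \tilde{w}_i - \tfrac{1}{2}\bigl(\tilde{v}_i^2 - \mu_i^{[2]}\bigr) - \tfrac{1}{2}\bigl(\tilde{u}_i^2 - \mu_i^{[2]}\bigr) + \tilde{v}_i \tilde{u}_i,
\end{align}
which is zero-mean thanks to Conditions 1 and 2 (the cross term $\tilde v_i \tilde u_i$ vanishes in expectation by independence). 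Setting $\tilde{t}_i := (\uSE_i - \SE_i)/n$ produces independent zero-mean random variables whose sum equals $\ruMSE - \MSE$, which is precisely the form required by Theorem~9.2.

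Next I would verify the moment hypotheses of Theorem~9.2. Bounded second moments are already handled by Lemma~6.4, giving $\E[\tilde{t}_i^2] \le 14\alpha/n^2$ and hence $s_n^2 := \sum_i \E[\tilde{t}_i^2] \le 14\alpha/n$. For the third absolute moments, I would apply the $C_r$/Minkowski inequality term by term to the decomposition above: each summand's third absolute moment is controlled by the sixth central moments of $\tilde a_i,\tilde b_i,\tilde c_i$ and by $|e_i|^3$. For example, $\E[|\tilde{w}_i^2 - \mu_i^{[2]}|^3]$ expands into a polynomial in the moments of $\tilde w_i$ up to order six, and $\E[|\tilde v_i \tilde u_i|^3]$ factorizes as $\E[|\tilde v_i|^3]\E[|\tilde u_i|^3]$, each factor bounded via Jensen by a power of a higher moment. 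The bounded-sixth-moment hypothesis together with the bound on $|e_i|$ therefore yields a uniform estimate $\E[|\tilde{t}_i|^3] \le C_3/n^3$, so that $\sum_{i=1}^n \E[|\tilde{t}_i|^3] \le C_3/n^2$.

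The main obstacle, and the step requiring the most care, is establishing a matching lower bound on $s_n^2$ so that the Lyapunov ratio $\sum_i \E[|\tilde{t}_i|^3]/s_n^3 \to 0$. Because $\Var[\uSE_i]$ contains the cross-term contribution $\Var[\tilde v_i \tilde u_i] = (\mu_i^{[2]})^2$, a mild nondegeneracy assumption $\mu_i^{[2]} \ge c_0 > 0$ (satisfied by standard Gaussian and Poisson noise at pixels with positive intensity) gives $s_n^2 \ge c/n$ and hence $s_n^3 \ge c^{3/2}/n^{3/2}$. Combining the upper bound $\sum_i \E[|\tilde t_i|^3] \le C_3/n^2$ with this lower bound produces a Lyapunov ratio of order $O(n^{-1/2})$, which tends to zero. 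Theorem~9.2 then delivers $(\ruMSE - \MSE)/s_n \xrightarrow{d} \mathcal{N}(0,1)$, i.e., the uMSE is asymptotically normally distributed as $n\to\infty$, as claimed.
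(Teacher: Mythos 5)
The statement you were asked to prove is the classical Lyapunov central limit theorem itself, which the paper does not prove but imports verbatim as Theorem 9.2 of \cite{breiman1992probability}. Your proposal does not prove this statement: it explicitly invokes it ``as a black box'' and then uses it to derive a different result, namely the asymptotic normality of the uMSE (Theorem~\ref{theorem:uMSE_CLT}). As a proof of the quoted theorem this is circular --- you cannot establish the Lyapunov CLT by assuming the Lyapunov CLT. A genuine proof would have to go through, for example, a characteristic-function expansion of $\frac{1}{s_n}\sum_i \tilde{t}_i$ with third-order error control, or the standard reduction from the Lyapunov condition to the Lindeberg condition followed by the Lindeberg--Feller theorem; none of that machinery appears in your write-up.

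That said, what you have actually written is a proof of Theorem~\ref{theorem:uMSE_CLT}, and as such it tracks the paper's own argument almost exactly: the same decomposition $\tilde{t}_i := (\uSE_i - \SE_i)/n$ into independent zero-mean summands, the same appeal to Lemma~\ref{lemma:usei_unbiased} and Lemma~\ref{lemma:var_unbiased} for the mean and the second moment, a per-term third-moment bound of order $n^{-3}$ (the paper's Lemma~\ref{lemma:bound_E_uSEi_3}), and a lower bound on $s_n^2$ of order $n^{-1}$ yielding a Lyapunov ratio of order $n^{-1/2}$. The one place you are more careful than the paper is in flagging explicitly that the lower bound on $s_n^2$ requires the noise variances $\mu_i^{[2]}$ to be bounded away from zero; the paper's Lemma~\ref{lemma:lower_bound_var_uSEi} gives $\var\sqbr{\uSE_i}\geq (\mu_i^{[4]}+\sigma_i^4)/2$ and then silently treats the resulting constant $(2\mu_i^{[4]}+2\sigma^4)^{1.5}$ as bounded away from zero, which amounts to the same nondegeneracy assumption. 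But none of this addresses the statement actually posed, so as an answer to the question asked the proposal has a fatal gap.
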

To complete the proof we show that the random variable 
\begin{align}
    \tilde{t}_i := \frac{\uSE_i - \SE_i}{n}
\end{align}
satisfies the conditions of Theorem~\ref{theorem:lyapunov_clt}. By Lemma~\ref{lemma:usei_unbiased} its mean is zero. By Lemma~\ref{lemma:var_unbiased} its second moment is bounded. To control $s_n$, we apply the following auxiliary lemma, which provides a lower bound on the variance of each term in the uMSE.

\begin{lemma}
\label{lemma:lower_bound_var_uSEi}
Under the assumptions of Theorem~\ref{theorem:uMSE_CLT},
\begin{align}
    \var\sqbr{\uSE_i} \geq \frac{\mu_i^{[4]} +  \sigma_i^4}{2},
\end{align}
where $\mu_i^{[4]}$ and $\sigma_i^2$ denote the fourth central moment and the variance of $\tilde{a}_i$, $\tilde{b}_i$ and $\tilde{c}_i$.
\end{lemma}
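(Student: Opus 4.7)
The plan is to decompose $\var[\uSE_i]$ using the independence guaranteed by Condition~1, discard one nonnegative term, and then compute the other term exactly.

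First, since $f(y)_i$ is deterministic and $\tilde{a}_i$ is independent of $(\tilde{b}_i,\tilde{c}_i)$ by Condition~1, the two summands in
\begin{align*}
\uSE_i = \bigl(\tilde{a}_i - f(y)_i\bigr)^2 - \tfrac{1}{2}\bigl(\tilde{b}_i - \tilde{c}_i\bigr)^2
\end{align*}
are independent, so
\begin{align*}
\var[\uSE_i] = \var\!\bigl[(\tilde{a}_i - f(y)_i)^2\bigr] + \tfrac{1}{4}\var\!\bigl[(\tilde{b}_i - \tilde{c}_i)^2\bigr].
\end{align*}
The first term is nonnegative, so it suffices to evaluate the second term and show that $\tfrac{1}{4}\var[(\tilde{b}_i - \tilde{c}_i)^2] = (\mu_i^{[4]} + \sigma_i^4)/2$.

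To do this I would introduce $\tilde{v}_i := \tilde{b}_i - x_i$ and $\tilde{u}_i := \tilde{c}_i - x_i$, which by Conditions~1 and~2 are independent, zero-mean, and share variance $\sigma_i^2$, third central moment $\mu_i^{[3]}$, and fourth central moment $\mu_i^{[4]}$. Then $\tilde{b}_i - \tilde{c}_i = \tilde{v}_i - \tilde{u}_i$, and the two quantities I need are
\begin{align*}
\E\bigl[(\tilde{v}_i-\tilde{u}_i)^2\bigr] = 2\sigma_i^2, \qquad
\E\bigl[(\tilde{v}_i-\tilde{u}_i)^4\bigr] = 2\mu_i^{[4]} + 6\sigma_i^4,
\end{align*}
where the fourth-moment expansion uses the binomial theorem and kills all odd-power cross terms because $\tilde{v}_i$ and $\tilde{u}_i$ are independent and zero-mean. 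Subtracting the square of the mean yields $\var[(\tilde{b}_i-\tilde{c}_i)^2] = 2\mu_i^{[4]} + 2\sigma_i^4$, and dividing by $4$ gives the claimed lower bound.

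There is essentially no obstacle here: the decomposition is forced by independence, and the only calculation is the fourth moment of the difference of two i.i.d.\ zero-mean variables, which is a short binomial expansion. The only mild subtlety is being explicit that the first term in the variance decomposition is discarded (rather than expanded, which would bring in $\mu_i^{[3]}$ and $\delta_i := x_i - f(y)_i$ and complicate the bound); keeping it suppressed keeps the statement clean and matches the form needed for verifying the Lyapunov ratio $\sum\E[|\tilde{t}_i|^3]/s_n^3 \to 0$ in the enclosing proof of Theorem~\ref{theorem:uMSE_CLT}.
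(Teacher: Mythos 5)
Your proposal is correct and follows essentially the same route as the paper: drop the nonnegative $\var[(\tilde{a}_i - f(y)_i)^2]$ term from the independence decomposition, then compute $\var[(\tilde{b}_i-\tilde{c}_i)^2] = 2\mu_i^{[4]} + 2\sigma_i^4$ exactly via the binomial expansion of the fourth moment of the centered difference. The arithmetic checks out and matches the paper's proof step for step.
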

The lemma yields a lower bound for $s_n^2$,
\begin{align}
s_n^2 & := \sum_{i=1}^{n} \E\sqbr{\tilde{t}_i^2} \nonumber\\
& = \frac{1 }{n^2}\sum_{i=1}^{n}\E\sqbr{\brac{\uSE_i - \SE_i}^2} \nonumber\\
& = \frac{1 }{n^2}\sum_{i=1}^{n}\var\sqbr{\uSE_i } \nonumber\\
& \geq \frac{2\mu_i^{[4]} + 2 \sigma^4 }{n}. \label{eq:lower_bound_sn2}
\end{align}
The following lemma controls the numerator in the Lyapunov condition, and also shows that the third moment of $\tilde{t}_i$ is bounded.
\begin{lemma}[Proof in Section~\ref{proof:bound_E_uSEi_3}]
\label{lemma:bound_E_uSEi_3}
Under the assumptions of Theorem~\ref{theorem:uMSE_CLT}, there exists a numerical positive constant $D$ such that
\begin{align}
    \sum_{i=1}^{n}\E\sqbr{\abs{\tilde{t}_i}^3} & \leq \frac{ D \eta }{n^2}.
\end{align}
\end{lemma}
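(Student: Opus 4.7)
The plan is to show that each term $\E[|\uSE_i - \SE_i|^3]$ is bounded uniformly in $i$ by a constant depending only on the moment bound $\eta$; the factor $1/n^3$ coming from $\tilde{t}_i = (\uSE_i - \SE_i)/n$ then delivers the desired $O(n^{-2})$ rate after summing.

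First, I would decompose $\uSE_i - \SE_i$ into zero-mean noise pieces. Writing the centred increments $\tilde{w}_i := \tilde{a}_i - x_i$, $\tilde{v}_i := \tilde{b}_i - x_i$, $\tilde{u}_i := \tilde{c}_i - x_i$ (all mean-zero and mutually independent by Conditions 1 and 2) and $e_i := x_i - f(y)_i$, a direct expansion of the squares in $\uSE_i$ gives
\begin{align}
\uSE_i - \SE_i = \tilde{w}_i^2 + 2\, e_i\, \tilde{w}_i - \frac{(\tilde{v}_i - \tilde{u}_i)^2}{2}.
\end{align}
Applying the convexity inequality $(|A|+|B|+|C|)^3 \le 9(|A|^3 + |B|^3 + |C|^3)$ and taking expectations then yields
\begin{align}
\E\!\left[|\uSE_i - \SE_i|^3\right] \le 9\,\E[\tilde{w}_i^6] + 72\,|e_i|^3\, \E[|\tilde{w}_i|^3] + \tfrac{9}{8}\,\E[(\tilde{v}_i - \tilde{u}_i)^6].
\end{align}

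Second, I would bound each of the three terms using the hypotheses of Theorem~\ref{theorem:uMSE_CLT}. The first term is just the sixth central moment $\mu_i^{[6]}$, which is bounded by assumption. In the second, $|e_i| \le \max_{1\le i \le n}|x_i - f(y)_i|$ is bounded, and $\E[|\tilde{w}_i|^3] \le (\E[\tilde{w}_i^4])^{3/4}$ by Jensen's inequality, so the fourth central moment bound suffices. For the third, binomially expanding
\begin{align}
(\tilde{v}_i - \tilde{u}_i)^6 = \sum_{k=0}^{6} \binom{6}{k}(-1)^{6-k}\, \tilde{v}_i^{k}\, \tilde{u}_i^{6-k},
\end{align}
and using independence of $\tilde{v}_i$ and $\tilde{u}_i$ together with $\E[\tilde{v}_i] = \E[\tilde{u}_i] = 0$, kills the $k=1$ and $k=5$ contributions and leaves a finite sum of products of central moments of orders at most six, each bounded in absolute value by $\eta$ (using Cauchy--Schwarz to absorb the sign of the odd-order central moment $\E[\tilde{v}_i^3]\E[\tilde{u}_i^3]$ if needed).

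Combining these three bounds, there is an absolute constant $C$ with $\E[|\uSE_i - \SE_i|^3] \le C\eta$ for every $i$, from which
\begin{align}
\sum_{i=1}^{n} \E[|\tilde{t}_i|^3] = \frac{1}{n^3}\sum_{i=1}^{n} \E[|\uSE_i - \SE_i|^3] \le \frac{C\eta}{n^2},
\end{align}
giving the claim with $D := C$. The only nontrivial step is the bookkeeping for the expansion of $(\tilde{v}_i - \tilde{u}_i)^6$, but it is a mechanical calculation in which independence and the zero-mean property eliminate every potentially unbounded term, so no conceptual obstacle remains.
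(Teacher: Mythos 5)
Your proof is correct and follows essentially the same route as the paper's: bound each $\E\sqbr{\abs{\uSE_i-\SE_i}^3}$ uniformly in $i$ using the bounded sixth central moments and bounded entrywise denoising error, then sum and absorb the $1/n^3$. The only difference is cosmetic but welcome: by cancelling the $e_i^2$ term first and invoking $(\abs{A}+\abs{B}+\abs{C})^3\le 9(\abs{A}^3+\abs{B}^3+\abs{C}^3)$, you reduce the bookkeeping to three terms, whereas the paper expands the full trinomial cube into roughly a dozen cross terms before bounding each one.
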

Combining \eqref{eq:lower_bound_sn2} and Lemma~\ref{lemma:bound_E_uSEi_3}, we obtain
\begin{align}
 \frac{\sum_{i=1}^{n}\E\sqbr{\abs{\tilde{t}_i}^3}}{s_n^3} \leq \frac{D \eta }{ (2\mu_i^{[4]} + 2 \sigma^4)^{1.5}\sqrt{n}},
\end{align}
which converges to zero as $n\rightarrow \infty$. The Lyapunov condition therefore holds and the proof is complete.

\subsection{Proof of auxiliary results}

\subsubsection{Notation}
To alleviate notation in our proofs, we define the denoising error $\err_i:=f(y)_i-x_i$ and the centered random variables $C(\tilde{a}_i):=\tilde{a}_i-x_i$,  $C(\tilde{b}_i):=\tilde{b}_i-x_i$ and $C(\tilde{c}_i):=\tilde{c}_i-x_i$, which are independent, have zero mean and satisfy $\var\sqbr{\tilde{a}_i}=\E\sqbr{\tilde{a}_i^2}= \var\ssqbr{\tilde{b}_i} = \E\ssqbr{\tilde{b}_i^2}=\var\sqbr{\tilde{c}_i} =\E\sqbr{\tilde{c}_i^2}$. 

\subsubsection{Proof of Lemma~\ref{lemma:usei_unbiased}}
\label{proof:usei_unbiased}
By linearity of expectation and the fact that the variance of independent random variables equals the sum of their variances,
    \begin{align}
    &\E\sqbr{\uSE_i} = \E\sqbr{\left( \tilde{a}_i - f(y)_i\right)^2 - \frac{\sbrac{\tilde{b}_i - \tilde{c}_i}^2}{2}}\nonumber\\
    &=  \E\sqbr{ \left( C(\tilde{a}_i) - \err_i)\right)^2} - \frac{\E\sqbr{ \sbrac{ C(\tilde{b}_i) - C(\tilde{c}_i) }^2}}{2}\nonumber\\
    &=  \E\sqbr{ C(\tilde{a}_i)^2} - 2\err_i \E\sqbr{C(\tilde{a}_i)} + \SE_i\nonumber\\
 &\hspace{1em}- \frac{\var\sqbr{   C(\tilde{b}_i) - C(\tilde{c}_i)  }}{2}\nonumber\\
    &=  \var\sqbr{ \tilde{a}_i} + \SE_i \nonumber\\
    &\hspace{1em}- \frac{\var\ssqbr{C(\tilde{b}_i)} + \var\sqbr{C(\tilde{c}_i)} }{2}\nonumber\\
    &=  \var\sqbr{ \tilde{a}_i} + \SE_i - \frac{\var\ssqbr{\tilde{b}_i} + \var\sqbr{\tilde{c}_i} }{2}\nonumber\\
    & = \SE_i .
\end{align}

\subsubsection{Proof of Lemma~\ref{lemma:var_unbiased}}
\label{proof:var_unbiased}
By linearity of expectation, the fact that the variance of independent random variables equals the sum of their variances and the fact that the mean square is an upper bound on the variance,
\begin{align}
&\var \sqbr{\uSE_i}  = \var \sqbr{ \brac{\tilde{a}_i - f(y)_i}^2} +  \frac{\var \sqbr{\sbrac{\tilde{b}_i - \tilde{c}_i}^2}}{4} \nonumber\\
    & \leq  \E  \sqbr{ \brac{\tilde{a}_i - f(y)_i}^4} + \frac{\E  \sqbr{ \sbrac{\tilde{b}_i - \tilde{c}_i}^4}}{4} \nonumber\\
   & = \E  \sqbr{ \brac{ C(\tilde{a}_i) - \err_i}^4} 
    + 
    \frac{\E  \sqbr{ \brac{C(\tilde{b}_i) - C(\tilde{c}_i)}^4}}{4}\nonumber\\
    & \leq  \E  \sqbr{ C(\tilde{a}_i)^4} + 4 \E  \sqbr{ C(\tilde{a}_i)^3} |\err_i| \nonumber\\
    &+ 6 \E  \sqbr{ C(\tilde{a}_i)^2}\err_i^2+\err_i^4\notag\nonumber\\
    & + \frac{\E  \sqbr{ C(\tilde{b}_i)^4}  + 6 \E  \sqbr{ C(\tilde{b}_i)^2} \E  \sqbr{ C(\tilde{c}_i)^2}+ \sqbr{ C(\tilde{c}_i)^4}}{4}\nonumber\\ \label{eq:bound_Ca_erri} 
    & \leq 14 \alpha,
\end{align}
where we have also used the fact that $\mu_2^2 \leq \mu_i^{[4]}$ by Jensen's inequality, which implies 
\begin{align}
    &\E  \sqbr{C(\tilde{a}_i)^2}\SE_i \leq \sqrt{\mu_i^{[4]}\gamma^4} \leq \alpha,\\
    & \E  \sqbr{ C(\tilde{b}_i)^2} \E  \sqbr{ C(\tilde{c}_i)^2} \leq \mu_i^{[4]} \leq \alpha.
\end{align}

\subsubsection{Proof of Lemma~\ref{lemma:lower_bound_var_uSEi}}
\label{proof:lower_bound_var_uSEi}
The variance of independent random variables equals the sum of their variances, so
\begin{align}
    &\var\sqbr{\uSE_i} = \var \sqbr{ \brac{\tilde{a}_i - f(y)_i}^2} +  \frac{\var \sqbr{\sbrac{\tilde{b}_i - \tilde{c}_i}^2}}{4} \nonumber\\
    & \geq \frac{\var \sqbr{\sbrac{\tilde{b}_i - \tilde{c}_i}^2}}{4}.
\end{align}
and since the mean of $\tilde{b}_i - \tilde{c}_i$ is zero,
\begin{align}
   & \E\sqbr{\sbrac{\tilde{b}_i - \tilde{c}_i}^2}  = \var\ssqbr{\tilde{b}_i - \tilde{c}_i}\nonumber \\
    & = \var\ssqbr{\tilde{b}_i} + \var\ssqbr{ \tilde{c}_i}\nonumber\\
    & = 2\sigma_i^2.
\end{align}
By the definition of variance and linearity of expectation,
\begin{align}
    &\var \sqbr{\sbrac{\tilde{b}_i - \tilde{c}_i}^2}  = \E\sqbr{\sbrac{\tilde{b}_i - \tilde{c}_i}^4} - \E\sqbr{\sbrac{\tilde{b}_i - \tilde{c}_i}^2}^2 \nonumber \\
    & =  \E\sqbr{\brac{C(\tilde{b}_i) - C(\tilde{c}_i)}^4} - 4\sigma_i^4 \nonumber \\
    & = \E\sqbr{C(\tilde{b}_i)^4} +\E\sqbr{C(\tilde{c}_i)^4}  + 6\E\sqbr{C(\tilde{b}_i)^2C(\tilde{c}_i)^2} - 4\sigma_i^4 \nonumber \\
    & = 2\mu_i^{[4]}+2\sigma_i^4.
\end{align}

\subsubsection{Proof of Lemma~\ref{lemma:bound_E_uSEi_3}}
\label{proof:bound_E_uSEi_3}
By linearity of expectation,
\begin{align}
    &\E\sqbr{\abs{\uSE_i - \SE_i}^3}  \nonumber\\
    &=  \E\sqbr{\abs{ \sbrac{\tilde{a}_i - f(y)_i}^2 - \frac{ \sbrac{\tilde{b}_i - \tilde{c}_i}^2}{2} -\err_i^2 }^3}\nonumber\\
    & =  \E\sqbr{\abs{ \sbrac{C(\tilde{a}_i) - \err_i}^2 - \frac{ \sbrac{C(\tilde{b}_i) - C(\tilde{c}_i)}^2}{2} -\err_i^2 }^3} \nonumber\\
    & \leq \E\sqbr{ \sbrac{C(\tilde{a}_i) - \err_i}^6} +\frac{1}{8}\E\sqbr{ \sbrac{C(\tilde{b}_i) - C(\tilde{c}_i)}^6}+\err_i^6\nonumber\\
    & \quad  + \frac{3}{2}\E\sqbr{ \sbrac{C(\tilde{a}_i) - \err_i}^4} \E\sqbr{ \sbrac{C(\tilde{b}_i) - C(\tilde{c}_i)}^2}
    \nonumber\\
    & \quad  + \frac{3}{4}\E\sqbr{ \sbrac{C(\tilde{a}_i) - \err_i}^2} \E\sqbr{ \sbrac{C(\tilde{b}_i) - C(\tilde{c}_i)}^4} \nonumber\\
    & \quad + 3 \E\sqbr{ \sbrac{C(\tilde{a}_i) - \err_i}^4}\err_i^2\nonumber\\
    & \quad  +3 \E\sqbr{ \sbrac{C(\tilde{a}_i) - \err_i}^2}\err_i^4\nonumber\\
    & \quad  + \frac{3}{4} \E\sqbr{ \sbrac{C(\tilde{b}_i) - C(\tilde{c}_i)}^4 }\err_i^2\nonumber\\
    & \quad  + \frac{3}{2} \E\sqbr{ \sbrac{C(\tilde{b}_i) - C(\tilde{c}_i)}^2}\err_i^4\nonumber\\
    & \quad  +3\E\sqbr{ \sbrac{C(\tilde{a}_i) - \err_i}^2}\E\sqbr{ \sbrac{C(\tilde{b}_i) - C(\tilde{c}_i)}^2}\err_i^2 \nonumber\\
    & \leq D \eta.  \label{eq:eta_bound}
\end{align}
The final bound in \eqref{eq:eta_bound} is obtained by bounding each term in the sum using the assumption that the maximum entrywise denoising error and the central moments of $\tilde{a}_i$, $\tilde{b}_i$ and $\tilde{c}_i$ are bounded. For example,
\begin{align}
&\E\sqbr{ \sbrac{C(\tilde{a}_i) - \err_i}^6}=\nonumber \\ & \E\sqbr{C(\tilde{a}_i)^6} + \err_i^6 + 6\E\sqbr{C(\tilde{a}_i)^5}\err_i+ 15\E\sqbr{C(\tilde{a}_i)^4}\err_i^2 \nonumber\\
& + 15\E\sqbr{C(\tilde{a}_i)^2}\err_i^4 + 20 \E\sqbr{C(\tilde{a}_i)^3}\err_i^3.
\end{align}
Finally, by linearity of expectation we have
\begin{align}
&\sum_{i=1}^{n}\E\sqbr{\abs{\tilde{t}_i}^3} = 
\frac{1 }{n^3}\sum_{i=1}^{n}\E\sqbr{\abs{\uSE_i - \SE_i}^3} \nonumber \\
& \leq \frac{ D \eta }{n^2}.
\end{align}

\section{Additional Results}
\label{sec:additional_results}
This section contains additional results, which are not included in the main paper due to space constraints. They include:
\begin{itemize}
    \item Table~\ref{TGaussMSE} shows a controlled comparison of MSE and uMSE on clean ground-truth images, and noisy frames for both natural images with additive Gaussian noise and TEM images with Poisson noise.
    \item Table~\ref{table:raw_videos_umse} shows a comparison between averaging-based MSE and uMSE on RAW videos with real noise described in Sections~\ref{sec:raw_videos} and ~\ref{sec:raw_videos_details}.
    \item Figure~\ref{fig:uMSE_histogram_em} Shows the estimates of uMSE evaluated on TEM data for two denoisers, described in Section~\ref{sec:electron_microscopy}.
    \item \textcolor{black}{Figure~\ref{fig:umse_scatter_em} Shows that uMSE provides a consistent ordering of images that are easier/harder to denoise, across different denoisers.}
    \item Figure~\ref{fig:correlation} shows the empirical correlation of neighboring pixels in the TEM data, necessitating subsampling in order to evaluate the uMSE and uPSNR.
\end{itemize}

\begin{table*}
  \caption{\textbf{Controlled comparison of MSE and uMSE.} The table shows the MSE computed from clean ground-truth images, compared to two versions of the proposed estimator: one using noisy references corresponding to the same clean image (uMSE), and another using a single noisy image combined with spatial subsampling (uMSE$_{\text{s}}$). The metrics are compared on the datasets and denoising methods described in Section~\ref{sec:controlled_evaluation_details}}.
  \label{TGaussMSE}
  \begin{center}
  \textbf{Natural images (Gaussian noise) $\cdot10^{-3}$}
  {\footnotesize
  \begin{tabular}{>{\arraybackslash}m{0.075\linewidth} |  >{\centering\arraybackslash}m{0.04\linewidth} >{\centering\arraybackslash}m{0.04\linewidth} >{\centering\arraybackslash}m{0.05\linewidth}|  >{\centering\arraybackslash}m{0.04\linewidth} >{\centering\arraybackslash}m{0.04\linewidth} >{\centering\arraybackslash}m{0.05\linewidth}|  >{\centering\arraybackslash}m{0.04\linewidth} >{\centering\arraybackslash}m{0.04\linewidth} >{\centering\arraybackslash}m{0.05\linewidth}|  >{\centering\arraybackslash}m{0.04\linewidth} >{\centering\arraybackslash}m{0.04\linewidth} >{\centering\arraybackslash}m{0.05\linewidth}  }
    \toprule
    &\multicolumn{3}{c}{$\sigma=25$} &\multicolumn{3}{c}{$\sigma=50$} &\multicolumn{3}{c}{$\sigma=75$} &\multicolumn{3}{c}{$\sigma=100$}  \\
    \cmidrule(r){2-4} \cmidrule(r){5-7} \cmidrule(r){8-10} \cmidrule(r){11-13}
    Method & \scriptsize MSE & \scriptsize uMSE & \scriptsize uMSE$_{\text{S}}$ & \scriptsize MSE & \scriptsize uMSE & \scriptsize uMSE$_{\text{S}}$ & \scriptsize MSE & \scriptsize uMSE & \scriptsize uMSE$_{\text{S}}$ & \scriptsize MSE & \scriptsize uMSE & \scriptsize uMSE$_{\text{S}}$\\
    \midrule
     
Bilateral & $4.38$ & $4.4$ & $2.64$ & $6.88$ & $6.87$ & $5.26$ & $12.3$ & $12.3$ & $11.1$ & $23.5$ & $23.2$ & $22.5$\\
DenseNet & $2.58$ & $2.59$ & $2.11$ & $4.70$ & $4.65$ & $2.81$ & $6.23$ & $6.16$ & $4.02$ & $7.44$ & $7.44$ & $5.00$\\
DnCNN & $2.85$ & $2.84$ & $1.81$ & $4.72$ & $4.71$ & $2.85$ & $6.21$ & $6.24$ & $3.96$ & $7.48$ & $7.60$ & $5.05$\\
UNet & $2.76$ & $2.77$ & $1.91$ & $4.78$ & $4.76$ & $2.84$ & $6.32$ & $6.22$ & $3.89$ & $7.47$ & $7.68$ & $5.05$\\

    \bottomrule
  \end{tabular}} \vspace{0.2cm}\\
\textbf{Electron microscopy (Poisson noise)} $\cdot10^{-3}$\\
  {\footnotesize \begin{tabular}{>{\centering\arraybackslash}m{0.04\linewidth} >{\centering\arraybackslash}m{0.055\linewidth} >{\centering\arraybackslash}m{0.065\linewidth}|  >{\centering\arraybackslash}m{0.04\linewidth} >{\centering\arraybackslash}m{0.055\linewidth} >{\centering\arraybackslash}m{0.065\linewidth}|  >{\centering\arraybackslash}m{0.04\linewidth} >{\centering\arraybackslash}m{0.055\linewidth} >{\centering\arraybackslash}m{0.065\linewidth}|  >{\centering\arraybackslash}m{0.04\linewidth} >{\centering\arraybackslash}m{0.055\linewidth} >{\centering\arraybackslash}m{0.065\linewidth}}
    \toprule
    \multicolumn{3}{c}{Bilateral} &\multicolumn{3}{c}{BlindSpot} &\multicolumn{3}{c}{DnCNN} &\multicolumn{3}{c}{UNet}  \\
    \cmidrule(r){1-3} \cmidrule(r){4-6} \cmidrule(r){7-9} \cmidrule(r){10-12}
    MSE & uMSE & uMSE$_{\text{S}}$ & MSE & uMSE & uMSE$_{\text{S}}$ & MSE & uMSE & uMSE$_{\text{S}}$ & MSE & uMSE & uMSE$_{\text{S}}$\\
    \midrule
     
9.57 & 9.55 & 9.54 & 3.97 & 4.00 & 3.96 & 3.00 & 3.03 & 2.94 & 4.18 & 4.10 & 4.12 \\

    \bottomrule
  \end{tabular}}
  \end{center}
\end{table*}

\begin{figure}[h]
    \centering
         \includegraphics[width=0.95\columnwidth]{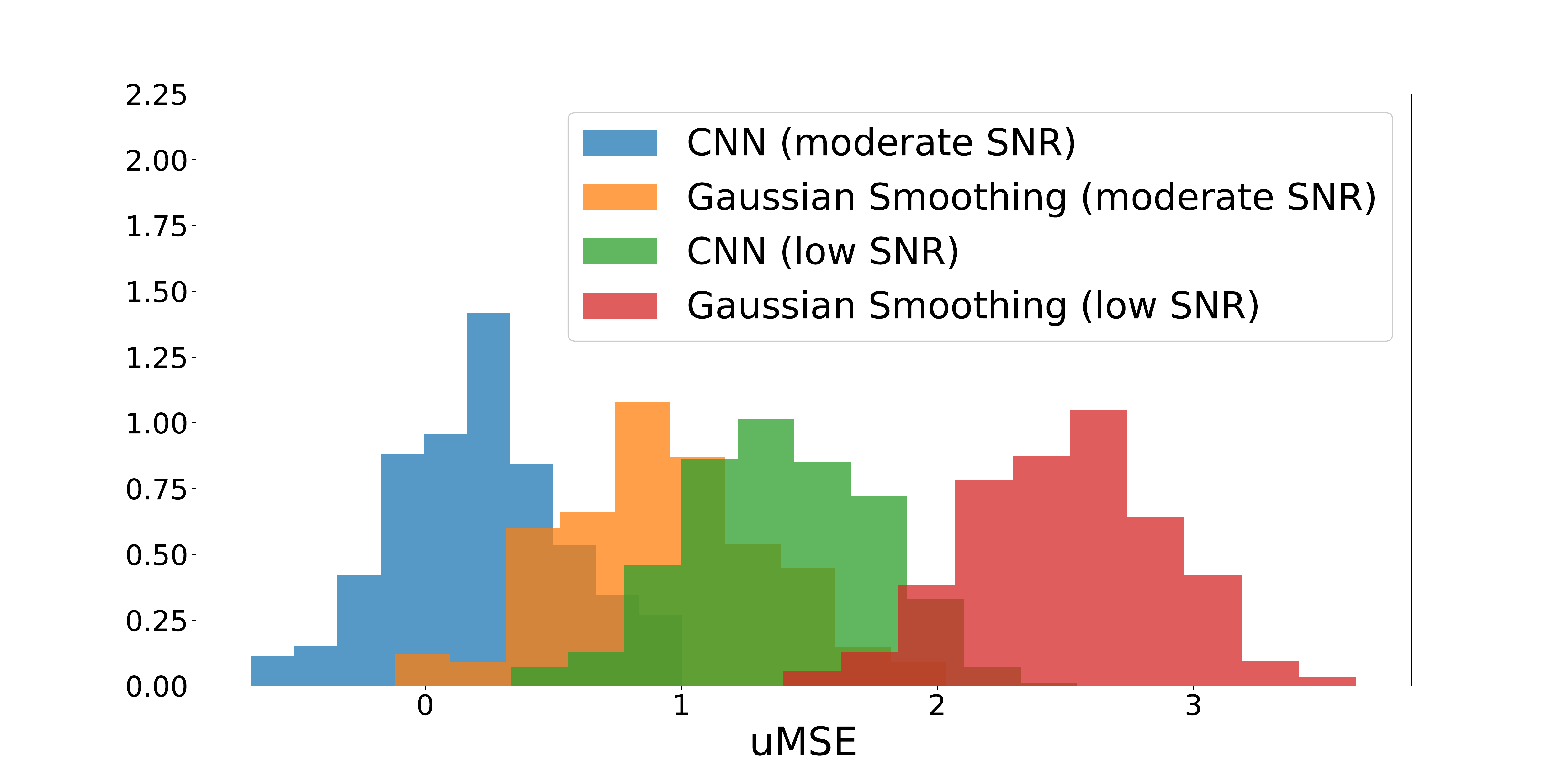}
\caption{\textbf{uMSE for real-world electron-microscopy data.} The figure shows the histograms of the uMSE (computed from a single noisy image via spatial subsampling) of two denoisers (Neighbor2Neighbor CNN and Gaussian smoothing) on the two test sets described in Section~\ref{sec:electron_microscopy}. The uMSE discriminates between the different methods and test sets, in a way that is consistent with the visual appearance of the denoised images. 
}
\label{fig:uMSE_histogram_em}
\end{figure}
\begin{figure}
    \centering
    \includegraphics[width=0.95\columnwidth]{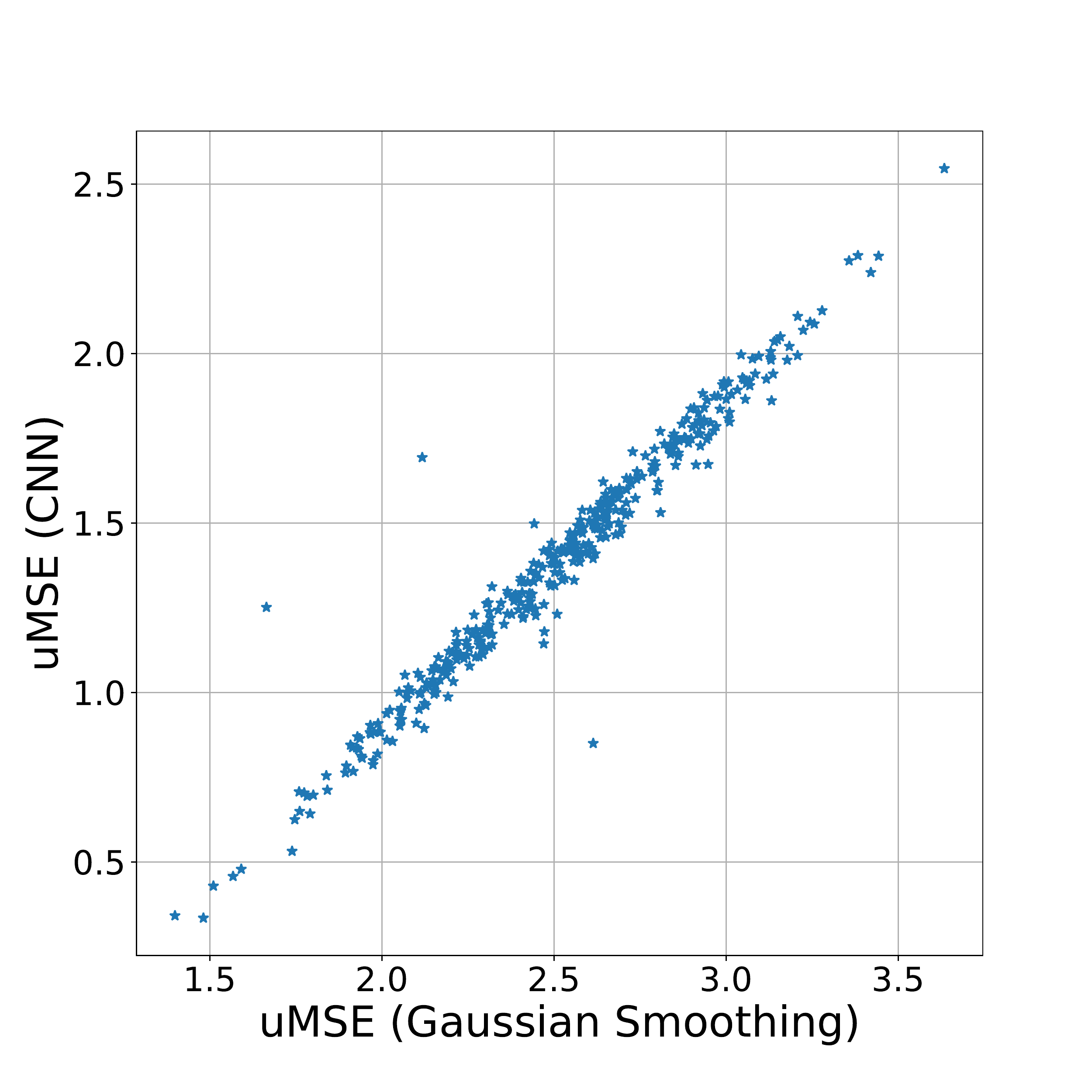}
    \caption{\textcolor{black}{\textbf{uMSE produces consistent image-level evaluations across different denoisers.} We compare the uMSE estimate per image for both the Neighbor2Neighbor CNN and the Gaussian smoothing denoisers on the low SNR data (green and red histograms in Figure~\ref{fig:uMSE_histogram_em}). While the ranges are different for each denoiser (the CNN denoises more effectively), the uMSE values are highly correlated, indicating that uMSE provides a consistent evaluation of the individual images.}}
    \label{fig:umse_scatter_em}
\end{figure}
\begin{figure}
    \centering
    \includegraphics[width=0.95\columnwidth]{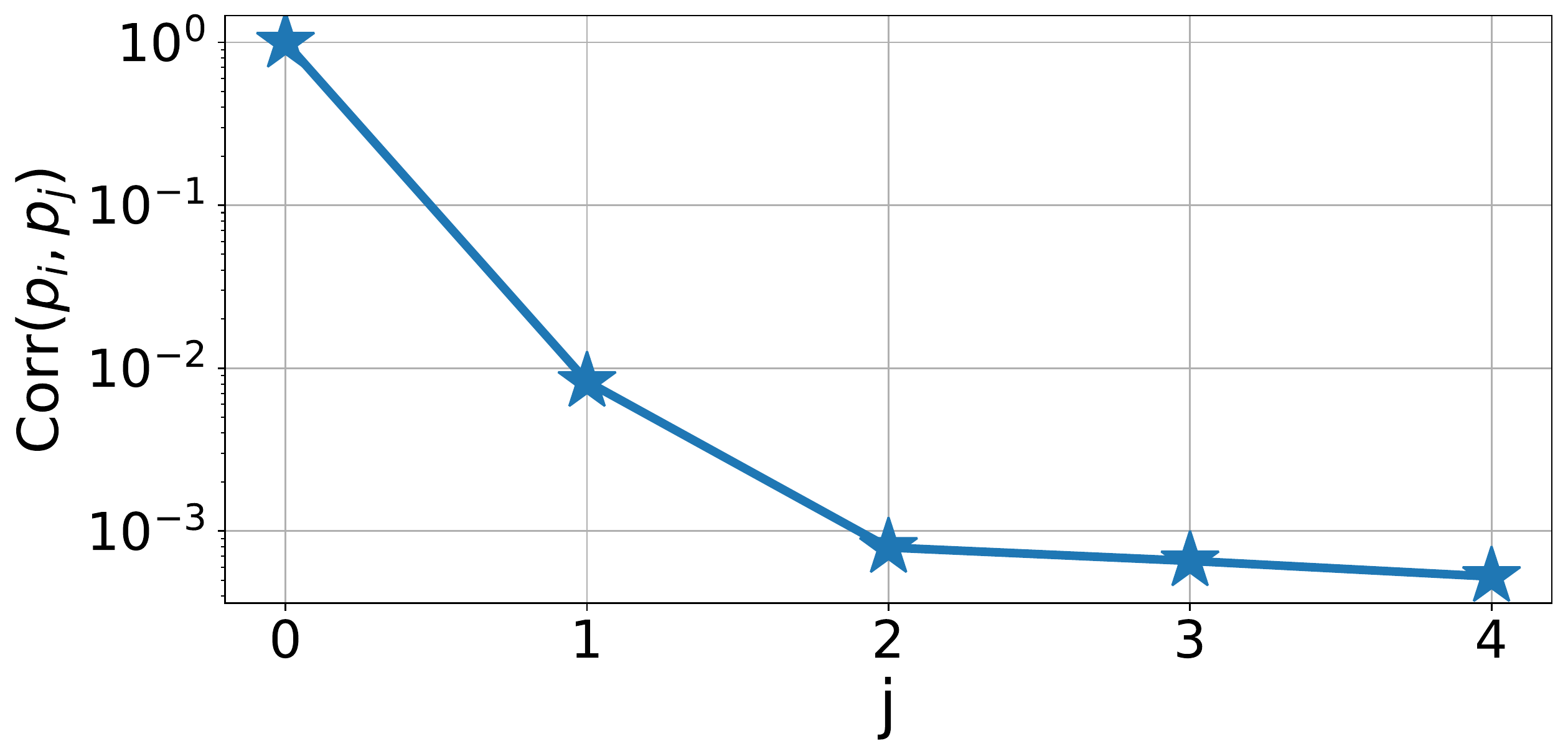}
    \caption{\textbf{Empirical correlation of adjacent pixels in electron-microscopy data.} The graph shows the correlation coefficient between a pixel and a pixel that is $j$ pixels away for different values of $j$. The correlation coefficient is computed after subtracting a mean computed via averaging across frames. 
    The correlation between adjacent pixels is particularly high, so spatial subsampling by a factor of two substantially reduces the pixel-wise correlation.}
    \hfill
%
    
    \label{fig:correlation}
\end{figure}

\begin{table*}
  \caption{\textbf{Comparison of averaging-based MSE and uMSE on RAW videos with real noise.} The proposed uMSE metric, computed using three noisy references, is very similar to an averaging-based PSNR estimate computed from 10 noisy references. The metrics are compared on the datasets and denoising methods described in Section~\ref{sec:raw_videos_details}. All numbers in the table are scaled by $\cdot10^{-4}$}
  \label{table:raw_videos_umse}
  \begin{center}
  {\small
  \begin{tabular}{l|cc|cc|cc|cc}
    \toprule
    &\multicolumn{2}{c}{Image (Wavelet)} &\multicolumn{2}{c}{Image (CNN)} &\multicolumn{2}{c}{Video (Temp. Avg)}
    &\multicolumn{2}{c}{Video (CNN)}\\
    \cmidrule(r){2-3} \cmidrule(r){4-5} \cmidrule(r){6-7} \cmidrule(r){8-9}
    ISO & MSE$_{\text{avg}}$ & uMSE & MSE$_{\text{avg}}$ & uMSE & MSE$_{\text{avg}}$ & uMSE & MSE$_{\text{avg}}$ & uMSE\\
    \midrule
     
1600 & $1.795$ & $1.69$ & $0.284$ & $0.183$ & $5.317$ & $5.282$ & $0.234$ & $0.131$ \\
3200 & $2.887$ & $2.866$ & $0.379$ & $0.336$ & $8.151$ & $8.134$ & $0.267$ & $0.217$ \\
6400 & $5.792$ & $5.702$ & $0.686$ & $0.624$ & $10.503$ & $10.573$ & $0.433$ & $0.361$ \\
12800 & $15.205$ & $15.11$ & $1.22$ & $1.31$ & $19.967$ & $19.871$ & $0.741$ & $0.78$ \\
25600 & $23.194$ & $22.549$ & $1.63$ & $1.737$ & $21.277$ & $21.1$ & $1.052$ & $1.079$ \\
\midrule
Mean & $9.775$ & $9.583$ & $0.84$ & $0.838$ & $13.043$ & $12.992$ & $0.545$ & $0.514$ \\

    \bottomrule
  \end{tabular}}
  
  \end{center}
\end{table*}


\section{Description of Controlled Experiments}
\label{sec:controlled_evaluation_details}

In this section, we describe the architectures and training procedure for models used in Section~\ref{sec:controlled_evaluation}.  For our experiments with natural images, we use the pre-trained weights released in \cite{dncnn} and \cite{biasfree}. All models are trained on $180 \times 180$ natural images from the Berkeley Segmentation Dataset~\citep{MartinFTM01} synthetically corrupted with Gaussian noise with standard deviation uniformly sampled between $0$ and $100$. The training set contains $400$ images and is augmented via downsampling, random flips, and random rotations of patches in these images~\citep{dncnn, biasfree}. We use the standard test set containing $68$ images for evaluation. We describe each of the models we use in detail below. 
 
\begin{enumerate}
    \item \textbf{Bilateral filter} OpenCV implementation for the Bilateral filter with a filter diameter of 15 pixels and $\sigma_{value}=\sigma_{space}=1$.

\item \textbf{DnCNN}. DnCNN~\cite{dncnn} consists of $20$ convolutional layers, each consisting of $3 \times 3$ filters and $64$ channels, batch normalization~\citep{ioffe2015batch}, and a ReLU nonlinearity. It has a skip connection from the initial layer to the final layer, which has no nonlinear units. We use the pre-trained weights released by the authors. 

\item \textbf{UNet}. 
Our UNet model~\citep{ronneberger2015unet} has the following layers:
\begin{enumerate}
    \item \emph{conv1} - Takes in input image and maps to $32$ channels with $5 \times 5$ convolutional kernels.
    \item \emph{conv2} - Input: $32$ channels. Output: $32$ channels. $3 \times 3$ convolutional kernels. 
    \item \emph{conv3} -  Input: $32$ channels. Output: $64$ channels. $3 \times 3$ convolutional kernels with stride 2.
    \item \emph{conv4}-  Input: $64$ channels. Output: $64$ channels. $3 \times 3$ convolutional kernels.
    \item \emph{conv5}-  Input: $64$ channels. Output: $64$ channels. $3 \times 3$ convolutional kernels with dilation factor of 2.
    \item \emph{conv6}-  Input: $64$ channels. Output: $64$ channels. $3 \times 3$ convolutional kernels with dilation factor of 4.
    \item \emph{conv7}-  Transpose Convolution layer. Input: $64$ channels. Output: $64$ channels. $4 \times 4$ filters with stride $2$.
    \item \emph{conv8}-  Input: $96$ channels. Output: $64$ channels. $3 \times 3$ convolutional kernels. The input to this layer is the concatenation of the outputs of layer \emph{conv7} and \emph{conv2}.
    \item \emph{conv9}-  Input: $32$ channels. Output: $1$ channels. $5 \times 5$ convolutional kernels.
\end{enumerate}
We use pre-trained weights released by the authors of \cite{biasfree}. 

\item \textbf{DenseNet} The simplified version of the DenseNet architecture~\citep{huang2017densnet} has $4$ blocks in total. Each block is a fully convolutional $5$-layer CNN with $3 \times 3$ filters and $64$ channels in the intermediate layers with ReLU nonlinearity. The first three blocks have an output layer with $64$ channels, while the last block has an output layer with only one channel. The output of the $i^{th}$ block is concatenated with the input noisy image and then fed to the $(i+1)^{th}$ block, so the last three blocks have $65$ input channels. We use pre-trained weights released by the authors of \cite{biasfree}. 

\item \textbf{Noise2Noise} Introduced in \cite{Noise2Noise}, this method proposes training a denoiser by using pairs of independent noisy realizations of the same image as input and target of a CNN. We apply this method using the UNet architecture described above (3). 

\item \textbf{Noise2Self} This method introduced in \cite{Noise2Self} proposes unsupervised denoisers by masking a grid of pixels in the original image and replacing their value by the average if its four  neighbouring pixels. The loss function is the MSE between the denoised output and the original noisy image, only taking into account the masked positions. We apply this method using the UNet architecture described above (3). 

\item \textbf{Neighbor2Neighbor} Based on  Noise2Noise, \cite{Neighbor2Neighbor} introduces an approach that obtains image pairs by randomly down-sampling single images. This is done ensuring that pixels located in the same position in the sub-samplings are direct neighbors in the original image. We apply this method using the UNet architecture described above (3).

\item \textbf{BlindSpot} We use a BlindSpot UNet architecture from \cite{UDVD} with the following layers:
\begin{enumerate}
    \item \emph{conv1} - Takes in input image and maps to $48$ channels with $9 \times 9$ convolutional kernel with $90^{\circ}$ rotation symmetry and the central pixel blinded.
    \item \emph{conv2-6} - Input: $48$ channels. Output: $48$ channels. $9 \times 9$ convolutional kernels with $90^{\circ}$ rotation symmetry and the central pixel blinded, with stride 2.
    \item \emph{conv7-11} -  Input: $96$ channels. Output: $48$ channels. $9 \times 9$ convolutional kernels with $90^{\circ}$ rotation symmetry and the central pixel blinded, with dilation factor of 2.
    \item \emph{conv12}-  Input: $96$ channels. Output: $1$ channel. $9 \times 9$ convolutional kernel with $90^{\circ}$ rotation symmetry and the central pixel blinded.
\end{enumerate}

\end{enumerate}

For our experiments with electron microscopy data, we use the simulated dataset of Pt nanoparticles introduced in ~\cite{nanotci}. Specifically, we used a subset of 5,583 images corresponding to white contrast (the simulated dataset is divided into white, black and intermediate contrast by a domain expert, see \cite{nanotci} for more details). 90\% of the data were used for training. The remaining 559 images were evenly split into validation and test sets. The UNet architecture used in these experiments is the one introduced in \cite{nanotci} with 4 scales and 32 base channels. In addition to bilateral filter, UNet, and DnCNN models described for natural images, we used a blindspot based network. BlindSpot~\cite{blindspotnet} is a CNN which is constrained to predict the intensity of a pixel as a function of the noisy pixels in its neighbourhood, without using the pixel itself. Following ~\cite{blindspotnet, UDVD}, we use a UNet architecture as the model backbone.


\section{Description of Experiments with Videos in RAW Format}
\label{sec:raw_videos_details}

As explained in Section~\ref{sec:raw_videos}, the dataset contains 11 unique videos, each containing 7 frames, captured at five different ISO levels using a surveillance camera. Each video has 10 different noise realizations per frame, which are averaged to obtain an estimated clean version of the video. Following ~\cite{UDVD}, we perform evaluation on five videos from the test test. 

The methods we use:
\begin{enumerate}
    \item \textbf{Image Denoiser (Wavelet)}. We use Daubechies wavelet to perform denoising, which is the default choice in \textit{skimage.restoration}, a widely used image restoration package. We implement denoising using the function \textit{denoise\_wavelet()} from the package using the default options. We set \textit{sigma=0.01}. 
    \item \textbf{Image Denoiser (CNN)}. We perform image denoising by re-purposing the video denoiser (UDVD) trained for RAW videos in Ref.~\cite{UDVD}. UDVD takes in five consecutive frames, and output the denoised image corresponding to the frame in the middle. To simulate image denoising using UDVD, we repeat the same frame 5 times (i.e, all frames are the same image), and provide it as input to the trained network. 
    \item \textbf{Video Denoiser (Temp. Avg.)}. We use 5 consecutive frames to compute the denosied image corresponding to the middle frame. We assign a weight of $0.75$ to the middle noisy frame, $0.1$ to each of the previous and next frame, and $0.025$ to the rest of the two frames. 
    \item \textbf{Video Denoiser (CNN)}.  We use the unsupervised video denoiser (UDVD) trained for RAW videos in Ref.~\cite{UDVD}. As explained, UDVD takes in five consecutive frames, and output the denoised image corresponding to the frame in the middle. We use the pre-trained weights, and follow the experimental setup described in Ref.~\cite{UDVD}. 
\end{enumerate}
We use the pre-trained weights released by the authors of ~\cite{UDVD} as our image and video denoiser. These weights are obtained by training UDVD on the first 9 realizations of the 5 videos from the test set of the raw video dataset, holding out the last realization for early stopping (see~\cite{UDVD} for more details).

\section{Description of Experiments on Electron Microscopy Data}
\label{app:TEM_data}



\textbf{Data acquisition:}  
The dataset contains TEM images of Pt nanoparticles on a CeO2 substrate. An electron beam interacts with the sample, and then its intensity is recorded on an imaging plane by the detector. The pixel intensity approximately follows a Poisson distribution with parameter equal to the intensity of the electron beam.

The data were recorded at room temperature at a pressure of $\sim10-6$ Torr. The electron beam intensity was $600e/\text{\r{A}}^2/s$. The instances are part of 25 videos, taken at a frame rate of 75 frames per second. The instances show Pt particles in the size range 1 - 5 nm. In a subset of frame series, the particles become unstable and undergo structural dynamic re-arrangements.  The periods of instability are punctuated by periods of relative stability.  Consequently, the nanoparticles show a variety of different sizes and shapes and are also viewed along many different crystallographic directions. Data were collected using a FEI Titan ETEM in EFTEM mode, Gatan Tantalum hot stage, K3 camera in CDS counting mode.

\textbf{Pixel-wise correlation:} Our proposed unsupervised metrics rely on the assumption that the noise is pixel-wise independence. This is not the case for this dataset, as shown in Figure~\ref{fig:correlation}.We address this by performing spatial subsampling by a factor of two, which reduces the pixel-wise correlation by an order of magnitude. After this, some frames still present relatively high pixel-wise correlations. We therefore select the test sets from two sets of contiguous frames with low correlation. 


\textbf{Training and test sets:} 
The data were divided into three sets: training \& validation set, consisting of 70\% of the data, and two contiguous test sets with pixel-wise correlation: one containing 155 images with \emph{moderate} signal-to-noise ratio (SNR), and one containing 383 images with \emph{low} SNR, which are more challenging. The moderate SNR test set is interspersed with the training and validation sets, and contains frames similar to those used to train the network. The low SNR test set contains frames which are temporally separated from the training and validation sets, and contains nanoparticle with different structures. 

\textbf{Denoisers} We compare the performance of four denoisers: (1) a convolutional neural network based on Neighbor2Neighbor with a UNet architecture as in \cite{Neighbor2Neighbor}; (2) a convolutional neural network based on Noise2Self with a UNet architecture as in \cite{Noise2Self}; (3) a convolutional neural network based on BlindSpot with a single-frame same architecture based on \cite{UDVD}; (2) Gaussian smoothing with standard deviation $\sigma=25$.

\textbf{CNN training parameters}
The Neighbor2Neighbor and Noise2Self CNNs were based on a basic UNet architecture with $5$ double convolution hidden layers of size $[64, 128, 256, 512, 1024]$ and were trained using the subsampling and masking functions provided in \cite{Neighbor2Neighbor, Noise2Self} respectively. They were trained for 1000 epochs using an Adam optimizer with an initial learning rate of 0.001, and scheduled reduction of the learning rate every 100 epochs. The networks have a total of 17,261,824 parameters.

The BlindSpot model uses a BlindUNet \cite{UDVD} architecture with $5$ convolutional hidden layers and $48$ channels. It was trained for 1000 epochs using an Adam optimizer with an initial learning rate of 0.001, and scheduled reduction of the learning rate every 100 epochs. The network has a total of 1,263,984 parameters.
\end{document}